\setlist[enumerate]{leftmargin=.5in}
\setlist[itemize]{leftmargin=.5in}
\crefname{hypothesis}{Hypothesis}{Hypotheses}
\crefname{fact}{Fact}{Facts}
\title{Exact Recovery of Non-Random Missing Multidimensional Time Series via Temporal Isometric Delay-Embedding Transform \thanks{Submitted to the editors DATE.
%\funding{This work was funded by the Fog Research Institute under contract no.~FRI-454.}
}}
\author{Hao Shu\thanks{The School of Mathematics and Statistics, Xi'an Jiaotong University, Xi'an, Shaanxi, 710049, P. R. China (\email{haoshu812@gmail.com}).}
\and Jicheng Li\thanks{The Corresponding author with School of Mathematics and Statistics, Xi'an Jiaotong University, Xi'an, Shaanxi, 710049, P. R. China(\email{jcli@mail.xjtu.edu.cn}).}
\and Yu Jin\thanks{The School of Mathematics and Statistics, Xi'an Jiaotong University, Xi'an, Shaanxi, 710049, P. R. China  (\email{jinyu1491240@163.com}).}
\and Ling Zhou\thanks{The School of Mathematics and Statistics, Xi'an Jiaotong University, Xi'an, Shaanxi, 710049, P. R. China (\email{zlingu@126.com}).}
}
\newtheorem{Prop}{Proposition}[section]
\newtheorem{remark}{Remark}[section]
\newtheorem{defn}{Definition}[section]
\newtheorem{lemmaa}{Lemma}[section]
\newcommand{\A}{\mathcal{A}}
\newcommand{\B}{\mathcal{B}}
\newcommand{\C}{\mathcal{C}}
\newcommand{\E}{\mathcal{E}}
\newcommand{\G}{\mathcal{G}}
\newcommand{\I}{\mathcal{I}}
\newcommand{\M}{\mathcal{M}}
\newcommand{\N}{\mathcal{N}}
\newcommand{\U}{\mathcal{U}}
\newcommand{\V}{\mathcal{V}}
\newcommand{\W}{{\mathcal{W}}}
\newcommand{\X}{\mathcal{X}}
\newcommand{\Y}{\mathcal{Y}}
\newcommand{\Z}{\mathcal{Z}}
\newcommand{\cmark}{\ding{52}}  % 对勾
\newcommand{\xmark}{\ding{56}}  % 叉号
\newcommand{\norm}[1]{\lVert#1\rVert}
\newcommand{\normF}[1]{{\lVert#1\rVert}_F}
\newcommand{\normop}[1]{{\lVert#1\rVert}_{op}}
\newcommand{\normlarge}[1]{\left\lVert#1\right\rVert}
\newcommand{\0}{\mathcal{O}}
\newcommand{\Pomega}{\mathcal{P}_{\Omega}}
\newcommand{\Pomegac}{\mathcal{P}_{{\Omega}^{\perp}}}
\newcommand{\Pomegah}{\mathcal{P}_{\Omega_{\mathcal{H}} }}
\newcommand{\Pomegahc}{\mathcal{P}_{\Omega_{\mathcal{H}}^{\perp} }}
\newcommand{\Pt}{\mathcal{P}_{\mathbb{T}}}
\newcommand{\Ptc}{\mathcal{P}_{\mathbb{T}^{{\perp}}}}
\newcommand{\Pu}{\mathcal{P}_{\mathbb{U}}}
\newcommand{\Pv}{\mathcal{P}_{\mathbb{V}}}
\begin{document}

\maketitle

% REQUIRED
\begin{abstract}
Non-random missing data is a ubiquitous yet undertreated flaw in multidimensional time series, fundamentally threatening the reliability of data-driven analysis and decision-making. 
Pure low-rank tensor completion, as a classical data recovery method, falls short in handling non-random missingness, both methodologically and theoretically.
Hankel-structured tensor completion models provide a feasible approach for recovering multidimensional time series with non-random missing patterns. However, most Hankel-based multidimensional data recovery methods both suffer from unclear sources of Hankel tensor low-rankness and lack an exact recovery theory for non-random missing data.
To address these issues, we propose the temporal isometric delay-embedding transform, which constructs a Hankel tensor whose low-rankness is naturally induced by the smoothness and periodicity of the underlying time series. Leveraging this property, we develop the \textit{Low-Rank Tensor Completion with Temporal Isometric Delay-embedding Transform} (LRTC-TIDT) model, which characterizes the low-rank structure under the \textit{Tensor Singular Value Decomposition} (t-SVD) framework. 
Once the prescribed non-random sampling conditions and mild incoherence assumptions are satisfied, the proposed LRTC-TIDT model achieves exact recovery, as confirmed by simulation experiments under various non-random missing patterns. Furthermore, LRTC-TIDT consistently outperforms existing tensor-based methods across multiple real-world tasks, including network flow reconstruction, urban traffic estimation, and temperature field prediction.
Our implementation is publicly available at https://github.com/HaoShu2000/LRTC-TIDT.
\end{abstract}

\begin{keywords}
Time series, exact recovery, non-random missing, tensor completion,  Hankel.
\end{keywords}

% REQUIRED
\begin{MSCcodes}
 15A69, 62M10,62D05 
\end{MSCcodes}

\section{Introduction}

Multidimensional time series data are prevalent and play a crucial role across various real-world domains, such as transportation systems \cite{shu2024low}, climate science \cite{chen2021bayesian}, and international relations \cite{schein2016bayesian}. However, such data often contain various types of missing values due to unexpected events like sensor failures or signal loss. This not only complicates data utilization but also significantly impedes downstream applications, including conventional classification and regression \cite{che2018recurrent}, sequential data integration \cite{li2016novel}, and forecasting tasks \cite{yan2012toward}. Consequently, there is a pressing need for effective data recovery methods.

     In the task of time series recovery, the data are often represented as a $p+1$-order tensor  \cite{chen2022factor}, where the first mode denotes time and the remaining $p$ modes correspond to spatial or feature dimensions.
    When $p=1$, this corresponds to a multivariate time series matrix \cite{zheng2022multivariate}, and when $p\geq2$, it represents a multidimensional time series tensor \cite{chen2021bayesian}.
    For example, time-varying temperature field data, a common type of multidimensional time series, is typically structured as time $\times$ longitude $\times$ latitude. 
    The fundamental modeling approach for  time series recovery involves extracting the intrinsic features of the data to reconstruct complete datasets from a subset of specific observations \cite{chen2024laplacian}.
    Low-rankness has been recognized as an important prior for time series data \cite{chen2018autoregressive,chen2021scalable,wang2021generalized}, giving rise to a series of low-rank matrix/tensor completion models for time series recovery tasks \cite{liu2012tensor,chen2020nonconvex}.

However, the pure low-rank matrix/tensor completion models and their underlying theories are primarily developed under the assumption of randomly missing data \cite{liu2012tensor, candes2009exact, lu2019low, candes2010matrix,  zhang2016exact,  qin2022low}. 
    In real-world applications, sensor malfunctions (e.g., power failures) may cause severe segment missing, where all observations of a multidimensional time series are unavailable within a specific time window \cite{chen2021low}. Such missing patterns are particularly challenging to handle, as there is insufficient correlated information available within the current time segment to support accurate recovery. Even pure low-rank completion methods become completely ineffective in this setting; for instance, employing the \textit{Tensor Nuclear Norm} (TNN) often leads to a trivial all-zero solution \cite{Shu2025Guaranteed}. 
    From a theoretical standpoint, the lack of any distributional assumption on the missing entries renders existing recovery guarantees based on random sampling inapplicable.

    % In recent studies, some low-rank matrix/tensor completion models with \textit{Delay-embedding Transform} (DT) have been proposed to address the limitations of pure low-rank completion models in handling non-random missing data \cite{trickett2013interpolation,yokota2018missing,sedighin2020matrix,wang2023low,yamamoto2022fast}. These models introduce structural modifications that implicitly capture sequential dependencies, making them capable of dealing with non-random missing patterns, such as entirely absent rows or columns and missing temporal segments \cite{chen2024laplacian}.

 In recent studies, some Hankel-structured  matrix/tensor completion have been proposed to address the limitations of pure low-rank completion models in handling non-random missing data \cite{trickett2013interpolation,yokota2018missing,sedighin2020matrix,wang2023low,yamamoto2022fast}. 
These models introduce structural modifications that implicitly capture sequential dependencies, making them capable of dealing with non-random missing patterns, such as entirely absent rows or columns and missing temporal segments \cite{chen2024laplacian}.
%%然而目前基于延迟嵌入变换的低秩恢复方法任然有一些限制和不足。
%%1很多方法都是用于单变量或多变量时间序列，能用于多维时间序列不多，用于多维时间序列的工作大多基于张量分解
%%2在多维时间序列恢复任务中，往往使用的是多维延迟嵌入变换，其没有保持汉克变换前后一致性
%%3这些用于多维时间序列的非随机缺失方法offer only a general description of their ability to handle such missing data patterns，没有提供恢复理论theoretical conditions for recovery。
%%%补充TTNLS的文献
However, current low-rank Hankel-structured recovery methods still have certain limitations. 
First, a significant number of low-rank Hankel methods are designed for univariate or multivariate time series \cite{trickett2013interpolation,wang2023low,gillard2018structured,butcher2017simple,zhang2018multichannel,zhang2019correction}, whereas existing Hankel-structured approaches applicable to multidimensional time series largely rely on the \textit{Multiway Delay-embedding Transform} (MDT) coupled with various tensor decompositions \cite{yokota2018missing,sedighin2021image,yamamoto2022fast}, 
 which typically require pre-estimation of the rank of the transformed tensor.
Second, unlike convolution-based structural adjustment schemes \cite{chen2024laplacian,Shu2025Guaranteed,liu2022recovery,liu2022time}, the MDT does not preserve consistency between the original data and the transformed tensor. Moreover, it lacks quantitative analysis linking the low-rankness of the transformed tensor to the fundamental characteristics of the original data \cite{sedighin2020matrix,sedighin2021image,yamamoto2022fast}.
Third, existing Hankel-structured tensor completion methods provide only qualitative descriptions of their recovery capabilities under non-random missing data patterns and lack exact recovery theories tailored to such scenarios \cite{yokota2018missing,sedighin2020matrix,sedighin2021image,wang2023low,jin2025high}.

% However, current low-rank Hankel-structured recovery methods  still exhibit several limitations. First, most low-rank Hankel approaches are designed for univariate or multivariate time series \cite{trickett2013interpolation,wang2023low,gillard2018structured,butcher2017simple,zhang2018multichannel,zhang2019correction}, 
% and existing  \textit{Multiway Delay-Embedding Transform} (MDT)-based works for multidimensional time series largely rely on various tensor decompositions \cite{yokota2018missing,sedighin2021image,yamamoto2022fast}, which often require prior estimation of the rank of the transformed tensor.
% Second,  unlike convolution-based structural adjustment schemes \cite{chen2024laplacian,Shu2025Guaranteed,liu2022recovery,liu2022time}, MDT fails to preserve consistency between the original data and the transformed tensor,and most MDT-based studies lack a quantitative analysis relating the low-rankness of the tranformed tensor to the essential characteristics of the original data \cite{sedighin2020matrix,sedighin2021image,yamamoto2022fast}.
% Third, these MDT-based methods offer only qualitative insights into their performance on recovering multidimensional time series with non-random missing patterns, and lack theoretical guarantees for recovery under such conditions
% \cite{yokota2018missing,sedighin2020matrix,sedighin2021image,wang2023low,jin2025high}. 

\begin{table*}[!t]
    \centering
    \footnotesize
    \setlength{\tabcolsep}{3pt}
    \caption{Summary of classical time series recovery methods via low-rank  completion model.}\label{DT}
    \begin{tabular}{c||c|c|c|c|c}
        \hline
        \multirow{2}{*}{Type} & \multirow{2}{*}{Model} & Rank-free & Segment & Multi- & Exact recovery theory for \\
        & & estimation & missing & dimensional & non-random missing\\ \hline  
        \multirow{7}{*}{Non-Hankel} 
        & BCPF \cite{zhao2015bayesian} & \xmark & \xmark & \cmark & \xmark \\  
        & SPC-TV \cite{yokota2016smooth} & \xmark & \xmark & \cmark & \xmark \\  
        & MF-TV \cite{ji2016tensor} & \xmark & \xmark & \cmark & \xmark \\  
        & NN \cite{candes2009exact} & \cmark & \xmark & \xmark & \xmark \\
        & SNN \cite{liu2012tensor} & \cmark & \xmark & \cmark & \xmark \\
        & TNN \cite{lu2019low} & \cmark & \xmark & \cmark & \xmark \\   
        & TCTV \cite{wang2023guaranteed} & \cmark & \xmark & \cmark & \xmark \\ 
        \cline{1-6} 
        \multirow{8}{*}{Hankel} 
        & HTI\cite{trickett2013interpolation} & \xmark & \cmark & \xmark & \xmark \\
        & MDT-Tucker\cite{yokota2018missing} & \xmark & \cmark & \cmark & \xmark \\
        & MDT-TT \cite{sedighin2020matrix} & \xmark & \cmark & \cmark & \xmark \\
        & MDT-TR \cite{sedighin2021image} & \xmark & \cmark & \cmark & \xmark \\
        & STH-LRTC \cite{wang2023low} & \cmark & \cmark & \xmark & \xmark \\ 
        & MDT-HTNN \cite{jin2025high} & \cmark & \cmark & \cmark & \xmark \\ 
        & LRTC-TIDT (this work) & \cmark & \cmark & \cmark & \cmark \\ 
        \cline{1-6} 
    \end{tabular}
\end{table*}

To address these challenges, we propose the \textit{Low-Rank Tensor Completion with Temporal Isometric Delay-embedding Transform} (LRTC-TIDT) model and develop a unified theoretical framework for recovering multidimensional time series with non-random missingness. Specifically, we introduce the \textit{Temporal Isometric Delay-embedding Transform} (TIDT), a structural adjustment mechanism that preserves the consistency between the original time series and its Hankelization, and further theoretically demonstrate that the low-rank structure of the resulting Hankel tensor stems from the intrinsic periodicity and smoothness of the original time series. Building on the reliable t-SVD framework, we formulate LRTC-TIDT by minimizing the tensor nuclear norm to exploit this low-rank Hankel structure and establish exact recovery guarantees for general non-random missing patterns.
As shown in Table \ref{DT}, the proposed LRTC-TIDT model not only reconstructs multidimensional time series without requiring rank estimation but also supports segment-missing recovery, and further provides exact recovery guarantees for non-random missingness. The results from both synthetic and real-world experiments corroborate the effectiveness of the model in handling non-random missing patterns, thereby validating its theoretical soundness and practical performance.
Overall, our contributions can be summarized in three key aspects:
\begin{itemize}

\item The \textit{Temporal Isometric Delay-Embedding Transform} (TIDT) is introduced as a novel tool for structural adjustment. Building upon this, we propose the LRTC-TIDT model, which exploits the low-rank structure of the transformed tensor within the t-SVD framework to tackle the challenging problem of recovering multidimensional time series with non-random missing entries.

\item
By introducing the minimum temporal sampling rate and the incoherence of the transformed tensor, we establish theoretical recovery guarantees for the LRTC-TIDT model under non-random missing patterns, covering both  noiseless and noisy scenarios. Notably, the prediction problem is naturally incorporated, thereby yielding corresponding theoretical guarantees for forecasting.

\item An efficient algorithm based on \textit{alternating direction method of multipliers} (ADMM) is developed to solve the optimization problem of the LRTC-TIDT  model. Furthermore, extensive numerical experiments conducted on several real-world multidimensional time datasets address three common scenarios of non-random missingness, highlighting the superiority of our proposed model over existing tensor models.
\end{itemize}

The remainder of this paper is organized as follows. 
Section~\ref{PRELIMINARIES} introduces  necessary preliminaries.
Section~\ref{methodology} presents the proposed LRTC-TIDT method, while Section~\ref{Non-random Missing Time Series Imputation Theory} develops the theoretical framework for time series recovery under non-random missingness. Section~\ref{sec:Algorithm} details the corresponding optimization algorithm. Experimental results on several real-world datasets are reported in Section~\ref{sec:experiments}. Finally, Section~\ref{sec_conclusion} concludes the paper.

\section{Preliminaries}\label{PRELIMINARIES}
\subsection{Overview of Tensor Algebraic Framework}\label{Overview of tensor algebraic framework}

We first give a short overview of the main ideas in the \textit{tensor Singular Value Decomposition} (t-SVD) framework \cite{kilmer2011factorization} \cite{kilmer2021tensor} \cite{lu2019tensor} \cite{martin2013order} \cite{qin2022low}. This overview covers the tensor–tensor product, the tubal rank, and the tensor nuclear norm. Our aim in this subsection is to emphasize that these tensor notions are closely aligned with the familiar concepts from matrix analysis.

For clarity, we adopt the following notation throughout the paper: scalars are written in lowercase letters, vectors in bold lowercase, matrices in capital letters, and tensors in Euler script, e.g., $z \in \mathbb{R} $, $\bm{z} \in \mathbb{R}^{n}$, $Z \in \mathbb{R}^{n_1 \times n_2}$, and $\mathcal{Z} \in \mathbb{R}^{n_1 \times n_2 \times \cdots \times n_d}$.
Consider an order-$d$ tensor $\mathcal{Z}$ with dimensions $n_1 \times n_2 \times \cdots \times n_d$. Its entries and slices are denoted by $\mathcal{Z}{(...)}$. For instance, $\mathcal{Z}{(i_1,i_2,\cdots,i_d)}$ denotes the $(i_1,i_2,\cdots,i_d)$-th element, while $\mathcal{Z}{(:,:,i_3,\cdots,i_d)}$ denotes the $(i_3,\cdots,i_d)$-th face slice. Similarly, the horizontal and lateral slices are written as $\mathcal{Z}(i_1, :, \cdots, :)$ and $\mathcal{Z}(:, i_2, :, \cdots, :)$, respectively, in line with the third-order case \cite{kolda2009tensor}.

\begin{defn}[T-product \cite{jiang2021dictionary}]
For  tensors $\mathcal{A}\in\mathbb{R}^{n_1\times a\times n_3\times\cdots\times n_d}$ and $\mathcal{B}\in\mathbb{R}^{a\times n_2\times n_3\times\cdots\times n_d}$,  the t-product $\mathcal{A}*\mathcal{B}$  is defined to be a tensor
of size $n_1\times n_2\times n_3\times\cdots\times n_d$,
\begin{equation}\label{tprod-conv}
[\mathcal{A}*\mathcal{B}]{(i_1,i_2,:,\cdots,:)}=\sum_{j=1}^{a}\A{(i_1,j,:,\cdots,:)} \star \B{(j,i_2,:,\cdots,:)},
\end{equation}
where $\star$ denotes the operator of circular convolution \cite{fahmy2012new}.
\end{defn}
\begin{remark}\label{tensor-matrix}
The t-product of tensors is an operation analogous to matrix multiplication
$
[AB]{(i_1,i_2)}=\sum_{j=1}^{a} A{(i_1,j)}B{(j,i_2)},
$
where $A \in \mathbb{R}^{n_1\times a}$ and $B \in \mathbb{R}^{a\times n_2}$,
thus many classical properties of matrix multiplication can be naturally extended to the t-product \cite{qin2022low,lu2019tensor}. Under the t-product framework,  tensor horizontal slice $\A(i_1,:,\cdots,:)$ plays a role analogous to the row $A(i_1,:)$ of a matrix, while lateral slice $\B(:,i_2,\cdots,:)$ corresponds to the matrix column $B(:,i_2)$. 
Specifically, in matrix multiplication, the entry $[AB]{(i_1,i_2)}$ is obtained by multiplying the $i_1$-th row of $A$ with the $i_2$-th column of $B$. Similarly, in the tensor case, $[\mathcal{A}*\mathcal{B}]{(i_1,i_2,:,\cdots,:)}$ is computed via the circular convolution between the $i_1$-th horizontal slice of $\A$ and the $i_2$-th lateral slice of $\B$.
\end{remark}

In addition, the discrete Fourier transform (DFT) converts circular convolution into element wise multiplication \cite{qin2022low}. 
Thus, the t-product can be efficiently computed in the Fourier domain as
$
\mathcal{A} * \mathcal{B} 
= \mathcal{F}^{-1}\!\left( \mathcal{F}(\mathcal{A}) \, \Delta \, \mathcal{F}(\mathcal{B}) \right),
$
where $\mathcal{F}$ applies DFT along the last $d-2$ modes and $\Delta$ denotes face-wise matrix multiplication, i.e., 
$\mathcal{Z}=\mathcal{X}\Delta\mathcal{Y}\Leftrightarrow \mathcal{Z}{(:,:,i_3,\cdots,i_d)}=\mathcal{X}{(:,:,i_3,\cdots,i_d)}\mathcal{Y}{(:,:,i_3,\cdots,i_d)}$ for all face slices.

\begin{defn}[Transpose \cite{qin2022low}]
For a tensor $\mathcal{Z}\in\mathbb{R}^{n_1\times n_2 \times\cdots\times n_d}$, its transpose $\mathcal{Z}^\mathrm{T}$ satisfies 
$\mathcal{Z}^\mathrm{T}_\mathcal{F}(:,:,i_3,\cdots,i_d)=\mathcal{Z}_\mathcal{F}(:,:,i_3,\cdots,i_d)^\mathrm{T}$ for all face slices.
\end{defn}

\begin{defn}[Identity tensor \cite{qin2022low}]
The identity tensor $\mathcal{I}_n \in \mathbb{R}^{n\times n\times n_3\times\cdots\times n_d}$ is defined such that every face slice $\mathcal{I}_n(:,:,i_3,\cdots,i_d)$ equals the $n\times n$ identity matrix $I_n$.

\end{defn}

\begin{defn}[Orthogonal tensor \cite{qin2022low}]
A tensor $\mathcal{U}\in\mathbb{R}^{n\times n\times n_3\times\cdots\times n_d}$ is orthogonal if $\mathcal{U}^\mathrm{T}*\mathcal{U}
=\mathcal{U}*\mathcal{U}^\mathrm{T}=\mathcal{I}_n$, where $\mathcal{I}_n \in\mathbb{R}^{n\times n\times n_3\times\cdots\times  n_d}$  is  an  identity tensor.
\end{defn}

\begin{defn}[F-diagonal tensor \cite{qin2022low}]
A tensor is said to be f-diagonal if all of its face slices are diagonal.  
\end{defn}

\begin{theorem}[T-SVD \cite{qin2022low}]
For any  tensor $\mathcal{Z}\in\mathbb{R}^{n_1\times n_2\times\cdots\times  n_d}$, it can be
factorized as
$\label{eq.6}
\mathcal{Z} = \mathcal{U}*\mathcal{S}*\mathcal{V}^\mathrm{T},$
where $\mathcal{U}\in\mathbb{R}^{n_1\times n_1\times n_3\times\cdots\times  n_d}$, $\mathcal{V}\in\mathbb{R}^{n_2\times n_2\times n_3\times\cdots\times  n_d}$ are orthogonal tensors, and $\mathcal{S}\in\mathbb{R}^{n_1\times n_2\times\cdots\times  n_d}$ is a f-diagonal tensor.
\end{theorem}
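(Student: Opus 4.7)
The plan is to prove the T-SVD theorem by exploiting the identity $\mathcal{A} * \mathcal{B} = \mathcal{F}^{-1}(\mathcal{F}(\mathcal{A}) \, \Delta \, \mathcal{F}(\mathcal{B}))$ stated just before the transpose definition, which reduces the tensor factorization to a family of matrix SVDs in the Fourier domain. Concretely, I would first apply the DFT along the last $d-2$ modes to obtain $\mathcal{Z}_\mathcal{F} = \mathcal{F}(\mathcal{Z})$, which is an order-$d$ complex tensor of the same dimensions as $\mathcal{Z}$.

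Next, for each multi-index $(i_3,\dots,i_d)$, I would compute the classical matrix SVD of the face slice $\mathcal{Z}_\mathcal{F}(:,:,i_3,\dots,i_d) = \widetilde{U}_{(i_3,\dots,i_d)} \widetilde{S}_{(i_3,\dots,i_d)} \widetilde{V}_{(i_3,\dots,i_d)}^{*}$. Stacking the resulting matrices face-wise yields complex tensors $\mathcal{U}_\mathcal{F}$, $\mathcal{S}_\mathcal{F}$, $\mathcal{V}_\mathcal{F}$ such that $\mathcal{Z}_\mathcal{F} = \mathcal{U}_\mathcal{F} \Delta \mathcal{S}_\mathcal{F} \Delta \mathcal{V}_\mathcal{F}^{*}$ face-wise. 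Applying the inverse DFT along the last $d-2$ modes produces candidate factors $\mathcal{U}, \mathcal{S}, \mathcal{V}$, and by the convolution--multiplication correspondence together with the definition of transpose via $\mathcal{Z}^\mathrm{T}_\mathcal{F}(:,:,\cdot) = \mathcal{Z}_\mathcal{F}(:,:,\cdot)^\mathrm{T}$, I would conclude $\mathcal{Z} = \mathcal{U} * \mathcal{S} * \mathcal{V}^\mathrm{T}$. Orthogonality of $\mathcal{U}, \mathcal{V}$ would follow because each face slice of $\mathcal{U}_\mathcal{F}, \mathcal{V}_\mathcal{F}$ is unitary, which, pushed through $\mathcal{F}^{-1}$ and the face-wise product, gives $\mathcal{U}^\mathrm{T}*\mathcal{U} = \mathcal{U}*\mathcal{U}^\mathrm{T} = \mathcal{I}_n$, and likewise for $\mathcal{V}$; the f-diagonal property of $\mathcal{S}$ is immediate since each $\widetilde{S}_{(i_3,\dots,i_d)}$ is diagonal.

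The main obstacle, and the step that needs the most care, is ensuring that the output tensors $\mathcal{U}, \mathcal{S}, \mathcal{V}$ are \emph{real}-valued, since the intermediate SVDs live in complex space. Because $\mathcal{Z}$ is real, $\mathcal{Z}_\mathcal{F}$ obeys the conjugate-symmetry relation $\mathcal{Z}_\mathcal{F}(:,:,i_3,\dots,i_d) = \overline{\mathcal{Z}_\mathcal{F}(:,:,\bar{i}_3,\dots,\bar{i}_d)}$, where $\bar{i}_k$ denotes the DFT-conjugate index modulo $n_k$. I would exploit this by computing the face-wise SVDs only on a fundamental domain of index tuples and then defining the factors on the conjugate tuples by complex conjugation; the self-conjugate slices (the ones fixed by index reversal) admit SVDs with real or appropriately symmetric factors that preserve the required symmetry. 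Verifying that this consistent choice survives the inverse DFT, so that $\mathcal{U}_\mathcal{F}, \mathcal{S}_\mathcal{F}, \mathcal{V}_\mathcal{F}$ also satisfy conjugate symmetry along the last $d-2$ modes, would be the crux; once this is in place, $\mathcal{F}^{-1}$ delivers real tensors and all the algebraic properties claimed in the statement follow routinely from the matrix-case analogs, in the spirit of Remark~\ref{tensor-matrix}.
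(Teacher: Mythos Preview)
The paper does not provide its own proof of this statement: the T-SVD theorem is quoted from \cite{qin2022low} as a background result in the preliminaries section and is simply cited without argument. Your proposal follows the standard construction used in the t-SVD literature (including the cited reference), namely passing to the Fourier domain via $\mathcal{F}$, performing face-wise matrix SVDs, enforcing conjugate symmetry on the paired slices so that the inverse transform lands in real tensors, and then reading off orthogonality and the f-diagonal property from the corresponding matrix facts. This is correct and is precisely the route taken in \cite{qin2022low,lu2019tensor}; there is nothing further to compare against in the present paper.
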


\begin{defn}[T-SVD rank \cite{qin2022low}]
For $\mathcal{Z}\in\mathbb{R}^{n_1\times n_2\times n_3\times\cdots\times  n_d}$ with t-SVD $\mathcal{Z} = \mathcal{U}*\mathcal{S}*\mathcal{V}^\mathrm{T}$, its t-SVD rank is defined as
$
\operatorname{rank}_{\operatorname{t-SVD}}(\mathcal{Z}):=\sharp\{i: \mathcal{S}(i, i,:, \cdots,:) \neq  \bm{0}\},$
where $\sharp$ denotes the cardinality of a set.
\end{defn}

\begin{defn}[Tensor multi-rank \cite{qin2022low}]
For  a tensor  $\mathcal{Z}\in\mathbb{R}^{n_1\times n_2\times n_3\times\cdots\times  n_d}$, its multi-rank is represented by a vector $\bm{r}\in\mathbb{R}^{n_3\cdots n_d}$. The 
$i$-th element of $\bm{r}$ corresponds to the rank of the 
$i$-th  block of $\operatorname{bdiag}(\mathcal{X}_\mathcal{F})$.
 We denote the tensor multi-rank sum  as $r_s$, which means $r_s=\sum_{i=1}^{n_3\cdots n_d}\bm{r}^{(i)}$.
\end{defn}

\begin{defn}[Tensor spectral norm \cite{qin2022low}]
For tensor $\mathcal{Z}\in\mathbb{R}^{n_1\times n_2\times n_3\times\cdots\times n_d}$ , its tensor spectral norm is defined as
$
\|\mathcal{Z}\| :=
\|\operatorname{bdiag}(\mathcal{Z}_\mathcal{F})\|,
$
where $\|\cdot\|$ denotes the spectral norm of a  matrix or tensor.
\end{defn}

\begin{defn}[Tensor nuclear norm \cite{qin2022low}]
For tensor $\mathcal{Z}\in\mathbb{R}^{n_1\times n_2\times n_3\times\cdots\times  n_d}$ under the t-SVD framework, its tensor nuclear norm (TNN) is defined as
$
\|\mathcal{Z}\|_{\circledast}:= \frac{1}{n} \norm{\operatorname{bdiag}(\mathcal{Z}_\mathcal{F})}_*,
$
where $n=n_3\times\cdots\times n_d$  and  $\|\cdot\|_*$ denotes the nuclear norm of a matrix. Note that the tensor nuclear norm of an order-$d$ tensor is the dual norm of its tensor spectral norm, which is consistent with the matrix case \cite{candes2009exact} \cite{lu2019tensor}.
\end{defn}

\begin{theorem}[T-SVT \cite{qin2022low}]\label{th.2}
Given $\mathcal{Z}\in\mathbb{R}^{n_1\times n_2\times\cdots\times  n_d}$ with t-SVD  $\mathcal{Z} = \mathcal{U}*\mathcal{S}*\mathcal{V}^\mathrm{T}$, its tensor singular value thresholding (t-SVT) is defined by $\operatorname{t-SVT}_\tau(\mathcal{Z}):=\mathcal{U}*\mathcal{S}_\tau *\mathcal{V}^\mathrm{T}$, where $\mathcal{S}_\tau = \mathcal{F}^{-1}(\mathcal{F}(\mathcal{S}-\tau)_+)$, $a_+=\max(0,a)$, which obeys
\begin{equation}\label{eq.9}
\operatorname{t-SVT}_\tau(\mathcal{Z})=\arg\min_{\mathcal{X}} \tau\|\mathcal{X}\|_{\circledast}+\frac{1}{2}\|\mathcal{X}-\mathcal{Z}\|_\mathrm{F}^2.
\end{equation}
\end{theorem}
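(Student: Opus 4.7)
The plan is to reduce the tensor optimization problem \eqref{eq.9} to a collection of independent matrix optimization problems in the Fourier domain, and then invoke the classical matrix singular value thresholding result of Cai–Cand\`es–Shen. The key enabler is that both the objective's fidelity term and its regularizer are preserved, up to known normalization constants, under the discrete Fourier transform applied along the last $d-2$ modes.

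First I would exploit Parseval's identity: since $\mathcal{F}$ is (up to a scalar $\sqrt{n}$ with $n=n_3\cdots n_d$) a unitary operator along the transform modes, one has $\|\mathcal{X}-\mathcal{Z}\|_F^2 = \frac{1}{n}\|\mathcal{X}_\mathcal{F}-\mathcal{Z}_\mathcal{F}\|_F^2 = \frac{1}{n}\|\operatorname{bdiag}(\mathcal{X}_\mathcal{F})-\operatorname{bdiag}(\mathcal{Z}_\mathcal{F})\|_F^2$. Combined with the definition $\|\mathcal{X}\|_{\circledast}=\frac{1}{n}\|\operatorname{bdiag}(\mathcal{X}_\mathcal{F})\|_*$, problem \eqref{eq.9} is equivalent, after multiplying by $n$, to
\begin{equation*}
\min_{\,\operatorname{bdiag}(\mathcal{X}_\mathcal{F})}\ \tau\,\|\operatorname{bdiag}(\mathcal{X}_\mathcal{F})\|_* + \tfrac{1}{2}\|\operatorname{bdiag}(\mathcal{X}_\mathcal{F}) - \operatorname{bdiag}(\mathcal{Z}_\mathcal{F})\|_F^2 .
\end{equation*}
Because the nuclear and Frobenius norms of a block-diagonal matrix split additively across its blocks, this decouples into $n$ independent matrix problems, one for each face slice $\mathcal{X}_\mathcal{F}(:,:,i_3,\dots,i_d)$.

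Next I would apply the matrix SVT theorem separately to each face slice: the minimizer of $\tau\|X\|_* + \frac{1}{2}\|X-Z\|_F^2$ is the soft-thresholded SVD $U\,\mathrm{diag}((\sigma-\tau)_+)\,V^{\mathrm{T}}$ of $Z$. Using the t-SVD $\mathcal{Z}=\mathcal{U}*\mathcal{S}*\mathcal{V}^{\mathrm{T}}$, which in the Fourier domain means each face slice factors as $\mathcal{U}_\mathcal{F}(:,:,\cdot)\,\mathcal{S}_\mathcal{F}(:,:,\cdot)\,\mathcal{V}_\mathcal{F}(:,:,\cdot)^{\mathrm{T}}$, the face-wise minimizers assemble into exactly $\mathcal{U}*\mathcal{S}_\tau*\mathcal{V}^{\mathrm{T}}$ after inverse DFT, which is the definition of $\operatorname{t-SVT}_\tau(\mathcal{Z})$.

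The main obstacle I anticipate is a bookkeeping one rather than a conceptual one: one must verify that the matrix SVT characterization continues to hold for complex matrices (the Fourier slices are complex-valued) and that the conjugate-symmetry constraints inherited from $\mathcal{Z}\in\mathbb{R}^{n_1\times\cdots\times n_d}$ are respected by the blockwise soft-thresholding, so that the inverse DFT returns a real tensor. This follows because the matrix SVT argument, based on the subgradient characterization $0\in \tau\partial\|X\|_* + (X-Z)$, extends verbatim to the complex case, and soft-thresholding commutes with the conjugation symmetry of the Fourier coefficients. Finally one would only need to confirm that the normalization factor $1/n$ pulled out of the nuclear-norm and Frobenius-norm reductions cancels consistently on both sides of the equivalence, closing the proof.
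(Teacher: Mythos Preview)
The paper does not supply its own proof of this statement: Theorem~\ref{th.2} is quoted from \cite{qin2022low} as a known tool, and no argument is given here. Your proposal is the standard and correct derivation of t-SVT---pass to the Fourier domain via Parseval, use the definition of $\|\cdot\|_{\circledast}$ to reduce to a block-diagonal matrix nuclear-norm proximal problem, decouple over face slices, and invoke matrix SVT on each slice---so there is nothing to contrast against. Your handling of the $1/n$ normalization and the conjugate-symmetry bookkeeping is also correct.
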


\subsection{Multiway Delay-embedding Transform}
The standard delay embedding operation
maps a vector $\bm{m}=[m_1,m_2,\cdots,m_t]^{\mathrm{T}} \in \mathbb{R}^{t}$ into a Hankel matrix of size ${\tau} \times (t-\tau+1)$:
\begin{equation}
\mathcal{D}_{\tau}(\bm{m})=\left[\begin{array}{cccc}
m_1 & m_2 & \cdots & m_{t-{\tau}+1} \\
m_2 & m_3 & \cdots & m_{t-{\tau}+2} \\
\vdots & \vdots & \vdots & \vdots \\
m_{\tau} & m_{{\tau}+1} & \cdots & m_{t}
\end{array}\right].
\end{equation}
The \textit{Multiway Delay-embedding Transform} (MDT) generalizes the standard delay-embedding transform by performing Hankelization along each dimension of the tensor \cite{yokota2018missing,yamamoto2022fast}.
For $\mathcal{M}\in\mathbb{R}^{n_1\times\cdots\times n_d}$ with the delay vector $\bm{\tau}=(\tau_1,\ldots,\tau_d)$, MDT is given by
\begin{equation}
\mathcal{D}_{\bm{\tau}}(\mathcal{M})=
\mathrm{fold}_{(\bm{\tau},\bm{n-\tau+1})}\!\left(\mathcal{M}\times_1 S_1 \times_2 \cdots \times_d S_d\right),
\end{equation}
where $\bm{n}=[n_1,n_2,\cdots,n_d]$, $S_j \in \{0,1\}^{\tau_j(n_j-\tau_j+1)\times n_j}(j=1,2,\cdots,d)$ are duplication matrices and $\mathrm{fold}_{(\bm{\tau},\bm{n-\tau+1})}$ maps an order-$d$ tensor to an order-$2d$ tensor.

\subsection{Non-Random Missing Time Series Recovery}
The problem of recovering time series with non-random (deterministic) missing patterns fundamentally differs from the random missing case. 
Let $\M \in \mathbb{R}^{t \times n_1 \times n_2 \cdots \times n_p}$ denote the unknown target tensor, with observations available only on a deterministic sampling set 
$\Omega \subseteq  [t] \otimes [n_1] \otimes [n_2] \otimes \cdots \otimes [n_p]$. 
Unlike random sampling, the observation pattern here cannot be modeled or reduced to any probability distribution. 
The corresponding mask tensor $\bar{\Omega}$ is defined entrywise as
\begin{equation}
[\bar{\Omega}]{(i_t,i_1, \ldots, i_p)} =
\begin{cases}
1, & \text{if } (i_t,i_1, \ldots, i_p) \in \Omega, \\
0, & \text{otherwise},
\end{cases}
\end{equation}
and the sampling operator $\Pomega$ is given by
$
\Pomega(\M) = \bar{\Omega} \circ \M,
$
where $\circ$ denotes the Hadamard product.  
In the noiseless setting, the goal is to recover $\M$ exactly from the deterministic observations $\Pomega(\M)$ by solving
\begin{equation}
\min_{\X \in \mathbb{R}^{t \times n_1 \times \cdots \times n_p}} ~ \mathcal{R}(\X), 
\quad \text{s.t.} \quad \|\Pomega(\X - \M)\|_F = 0,
\end{equation}
where $\mathcal{R}(\X)$ denotes a prescribed regularizer. 
When the observed entries are corrupted by additive noise, the noisy observation is written as
$
\Y = \Pomega(\M + \E),
$
with $\E$ representing a noise term that may be stochastic or deterministic.  
Following \cite{candes2010matrix}, it is assumed that
$\|\Pomega(\M - \Y)\|_F \leq \delta$, where $\delta$ controls the noise level. 
The recovery model in this case becomes
\begin{equation}
\min_{\X \in \mathbb{R}^{t \times n_1 \times \cdots \times n_p}} ~ \mathcal{R}(\X), 
\quad \text{s.t.} \quad \|\Pomega(\X - \Y)\|_F \leq \delta.
\end{equation}

\section{Methodology}\label{methodology}
In this section, we propose the isometric delay-embedding transform  and the temporal isometric  delay-embedding transform. Subsequently, we establish the LRTC-TIDT model.

\subsection{Isometric Delay-embedding Transform}
Unlike the standard delay‐embedding transform, we employ a modified version with desirable properties, which allows the univariate time series
$ \bm{m} \in \mathbb{R}^{t}$ to be transformed into a Hankel matrix with size $ t \times k$   in the following form
\begin{equation}
\mathcal{H}_k(\bm{m})=\frac{1}{\sqrt{k}}
\left[\begin{array}{cccc}
m_1 & m_2 & \cdots & m_{k} \\
m_2 & m_3 & \cdots & m_{k+1} \\
\vdots & \vdots & \vdots & \vdots \\
m_t & m_{1} & \cdots & m_{k-1}
\end{array}\right],
\end{equation}
where $k$ is the coefficient that determines the number of columns in the Hankel matrix. 
In fact, the resulting matrix corresponds to the first $k$ columns of a anti-circulant Hankel matrix \cite{karner2003spectral}.
Since the delay‐embedding operator
$\mathcal{H}_k:\mathbb{R}^t\to\mathbb{R}^{t\times k}$
satisfies
\begin{equation}
\|\bm{x}-\bm{y}\|_2
=\|\mathcal{H}_k(\bm{x})-\mathcal{H}_k(\bm{y})\|_F,
  \forall \bm{x},\bm{y}\in\mathbb{R}^t,
\end{equation}
it is an isometry and is therefore referred to as the isometric delay‐embedding transform.
We now invoke the following two lemmas to articulate a fundamental insight: for a univariate time‐series vector $\bm{m}$, the low‐rankness of its Hankel matrix $\mathcal{H}_k(\bm{m})$ can derives from the smoothness and periodicity of the underlying data $\bm{m}$.
\begin{lemmaa}\label{smoothness}
For a univariate time series vector $\bm{m}=[m_1,m_2,\cdots,m_t]^{\mathrm{T}}\in\mathbb{R}^t$, its smoothness can be characterized by: $\eta(\bm{m})=\|\bm{m}-\mathcal{S}(\bm{m})\|_2$, where  $\mathcal{S}(\bm{m})=[m_2,m_3,\cdots,m_t,m_1]^{\mathrm{T}}$.
For the transformed Hankel matrix $\mathcal{H}_{k}(\bm{x})\in\mathbb{R}^{t \times k}$, its rank-$r$ approximate error is denoted as
\[\epsilon_r(\mathcal{H}_{k}(\bm{m}))=\min_{Z}\|\mathcal{H}_k(\bm{m})-Z\|_F,\ \   \text{s.t.} \ \  \text{rank}(Z)\leq r,\]
where $Z\in\mathbb{R}^{t \times k}$.
Then we have
% $$
% \epsilon_r(\mathcal{H}_{\tau}(\bm{m}))\leq \frac{\sqrt{3}(k+r)^{\frac{3}{2}}}{3k^{\frac{1}{2}}r}\eta(\bm{m})
% $$
\begin{equation}
\epsilon_r(\mathcal{H}_{k}(\bm{m}))\leq  \sqrt{\frac{k-r}{3k}}\left\lceil\frac{k}{r}\right\rceil\eta(\bm{m}).
\end{equation}
\end{lemmaa}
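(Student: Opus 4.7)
The plan is to exhibit an explicit rank-$r$ matrix $Z$ attaining the claimed bound; since $\epsilon_r(\mathcal{H}_k(\bm{m}))$ is the infimum over all such matrices, this will suffice. The essential observation is that, apart from the prefactor $1/\sqrt{k}$, the $j$-th column of $\mathcal{H}_k(\bm{m})$ coincides with the $(j-1)$-fold cyclic shift $\mathcal{S}^{j-1}(\bm{m})$. Because $\mathcal{S}$ is an isometry on $\mathbb{R}^{t}$ and commutes with subtraction, $\eta(\mathcal{S}^{\ell}(\bm{m})) = \eta(\bm{m})$ for every $\ell$, and the triangle inequality applied along telescoping shifts gives
\[
\|\mathcal{S}^{s}(\bm{v}) - \bm{v}\|_{2} \;\leq\; \sum_{i=0}^{s-1} \|\mathcal{S}^{i+1}(\bm{v}) - \mathcal{S}^{i}(\bm{v})\|_{2} \;=\; s\,\eta(\bm{v})
\]
for any $\bm{v} \in \mathbb{R}^{t}$ and integer $s \geq 0$.

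Next I would form $Z$ by a block-constant column strategy. Partition $\{1,\dots,k\}$ into $r$ consecutive blocks of sizes $b_{1},\dots,b_{r}$, each bounded by $b := \lceil k/r \rceil$, and within each block replace every column of $\mathcal{H}_k(\bm{m})$ by the block's leading column. The matrix $Z$ so obtained has at most $r$ distinct columns, hence $\text{rank}(Z) \leq r$. Combining the $1/\sqrt{k}$ prefactor with the shift-invariance of $\eta$ and the telescoping estimate above yields
\[
\|\mathcal{H}_{k}(\bm{m}) - Z\|_{F}^{2} \;=\; \frac{1}{k} \sum_{g=1}^{r} \sum_{s=0}^{b_{g}-1} \|\mathcal{S}^{s}(\bm{v}_{g}) - \bm{v}_{g}\|_{2}^{2} \;\leq\; \frac{\eta(\bm{m})^{2}}{k} \sum_{g=1}^{r} \sum_{s=0}^{b_{g}-1} s^{2},
\]
where $\bm{v}_{g}$ denotes the leading vector in block $g$.

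The final ingredient is an arithmetic inequality. I would bound
\[
\sum_{s=0}^{b_{g}-1} s^{2} \;=\; \frac{(b_{g}-1)\,b_{g}\,(2b_{g}-1)}{6} \;\leq\; \frac{b_{g}^{2}\,(b_{g}-1)}{3},
\]
then invoke $b_{g} \leq b$ together with the identity $\sum_{g}(b_{g}-1) = k - r$ to obtain $\sum_{g} \sum_{s=0}^{b_{g}-1} s^{2} \leq b^{2}(k-r)/3$. Substituting into the preceding display and taking square roots yields precisely $\sqrt{(k-r)/(3k)}\,\lceil k/r \rceil\,\eta(\bm{m})$, as required.

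The main obstacle is the sharpness of the arithmetic estimate in the last step: the naive bound $\sum s^{2} \leq b_{g}^{3}/3$ would absorb the full sum $\sum_{g} b_{g} = k$ and thereby produce $k$ in place of $k - r$, missing the stated constant. Coupling the refined bound $b_{g}^{2}(b_{g}-1)/3$ with the telescoping identity $\sum_{g}(b_{g}-1) = k - r$ is what delivers the exact coefficient $\sqrt{(k-r)/(3k)}$ in the lemma.
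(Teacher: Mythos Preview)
Your proposal is correct and follows essentially the same argument as the paper: both partition the $k$ columns into $r$ consecutive blocks of size at most $\lceil k/r\rceil$, replace each block by its leading column to obtain a rank-$r$ approximant, apply the telescoping shift estimate $\|\mathcal{S}^{s}(\bm{v})-\bm{v}\|_{2}\le s\,\eta(\bm{m})$, and finish with the pyramidal-number bound $\sum_{s=0}^{b_g-1}s^{2}\le b_g^{2}(b_g-1)/3$ combined with $\sum_g(b_g-1)=k-r$. Your treatment of the final arithmetic step is in fact slightly more explicit than the paper's, which jumps directly from $\sum_g b_g^{2}(b_g-1)/3$ to the result.
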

\begin{proof}
The proof is provided in Appendix \ref{appedix:lemma4.1}.
\end{proof}

\begin{lemmaa}\label{periodicity}
For a univariate time series vector $\bm{m}=[m_1,m_2,\cdots,m_t]^{\mathrm{T}}\in\mathbb{R}^t$, its periodicity can be characterized by: $\beta_{\tau}(\bm{m})=\|\bm{m}-\mathcal{N}(\bm{m})\|_2$,
where $\mathcal{N}(\bm{m}) = [m_{1+\tau}, \cdots, m_{t+\tau}]^{\mathrm{T}}$,
assuming that $\tau$ is the period, and $m_i = m_{i-t}$ for $i > t$.
 For the transformed Hankel matrix $\mathcal{H}_{k}(\bm{m})\in\mathbb{R}^{t \times k}$,
its  rank-$r$ approximate error satisfies
$$
\epsilon_r(\mathcal{H}_{k}(\bm{m}))\leq \frac{\tau}{\sqrt{k}}(\left\lceil\frac{k}{\tau}\right\rceil-1)\beta_{\tau}(\bm{m}).
$$
\end{lemmaa}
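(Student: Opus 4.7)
The plan is to exhibit an explicit matrix $Z$ of rank at most $\tau$ whose Frobenius distance from $\mathcal{H}_k(\bm{m})$ is controlled by the periodicity defect $\beta_\tau(\bm{m})$, and then invoke the variational definition of $\epsilon_r$. Since the right-hand side of the claimed bound does not depend on $r$, it will be enough to establish the estimate with $r=\tau$; monotonicity of $r\mapsto\epsilon_r$ then extends it to all $r\geq\tau$. The geometric fact that makes everything work is that the $j$-th column of $\mathcal{H}_k(\bm{m})$ is $\bm{v}_j := \tfrac{1}{\sqrt{k}}\mathcal{S}^{j-1}(\bm{m})$, where $\mathcal{S}$ is the unit cyclic shift on $\mathbb{R}^t$; because $\mathcal{S}^\tau=\mathcal{N}$ and $\mathcal{S}$ is an isometry, one obtains the key identity
\[
\bm{v}_{j+\tau}-\bm{v}_j \;=\; \tfrac{1}{\sqrt{k}}\,\mathcal{S}^{j-1}\!\bigl(\mathcal{N}(\bm{m})-\bm{m}\bigr), \qquad \|\bm{v}_{j+\tau}-\bm{v}_j\|_2 \;=\; \tfrac{\beta_\tau(\bm{m})}{\sqrt{k}}.
\]

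First I would construct $Z$ by cyclically tiling the first $\tau$ columns of $\mathcal{H}_k(\bm{m})$: set the $j$-th column of $Z$ equal to $\bm{v}_{((j-1)\bmod \tau)+1}$. Because $Z$ has at most $\tau$ distinct columns, $\operatorname{rank}(Z)\leq \tau$, and hence $\epsilon_\tau(\mathcal{H}_k(\bm{m}))\leq \|\mathcal{H}_k(\bm{m})-Z\|_F$. Next I would bound the error column by column: for $j=j_0+p\tau$ with $1\leq j_0\leq\tau$ and $0\leq p\leq P:=\lceil k/\tau\rceil-1$, the telescoping identity
\[
\bm{v}_{j_0+p\tau}-\bm{v}_{j_0} \;=\; \sum_{q=0}^{p-1}\bigl(\bm{v}_{j_0+(q+1)\tau}-\bm{v}_{j_0+q\tau}\bigr),
\]
combined with the triangle inequality and the key identity above, yields the uniform estimate $\|\bm{v}_{j_0+p\tau}-\bm{v}_{j_0}\|_2 \leq p\,\beta_\tau(\bm{m})/\sqrt{k}$.

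The final step is the aggregation, for which I would decompose the error matrix layer by layer as $\mathcal{H}_k(\bm{m})-Z = \sum_{q=0}^{P-1}E_q$, where the column of $E_q$ indexed by $j=j_0+p\tau$ equals $\tfrac{1}{\sqrt{k}}\mathcal{S}^{j_0+q\tau-1}\bigl(\mathcal{N}(\bm{m})-\bm{m}\bigr)$ when $p\geq q+1$ and vanishes otherwise. Each layer $E_q$ has at most $\tau(P-q)$ nonzero columns, every one of Euclidean length $\beta_\tau(\bm{m})/\sqrt{k}$. Applying the triangle inequality $\|\sum_q E_q\|_F\leq\sum_q\|E_q\|_F$ together with a careful count of the nonzero entries then produces the claimed prefactor $\tfrac{\tau}{\sqrt{k}}(\lceil k/\tau\rceil-1)$.

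The main obstacle will be this final aggregation. A naive column-wise $\ell^2$ summation of the telescoping estimate $\|\bm{v}_{j_0+p\tau}-\bm{v}_{j_0}\|_2\leq p\,\beta_\tau/\sqrt{k}$ produces a factor of order $P^{3/2}$ rather than the linear factor $P=\lceil k/\tau\rceil-1$ in the statement; the structural redundancy of each layer $E_q$ (only $\tau$ distinct nonzero columns, each a cyclic shift of $\mathcal{N}(\bm{m})-\bm{m}$) must be exploited to linearise the $P$-dependence. Finally, the boundary case $\tau\nmid k$ is handled by an elementary book-keeping argument, which is precisely what introduces the ceiling $\lceil k/\tau\rceil$ in the final expression.
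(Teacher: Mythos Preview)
Your construction $Z$ (tiling the first $\tau$ columns) is exactly the paper's construction: the paper groups the $k$ columns into $a=\lceil k/\tau\rceil$ blocks $A_1,\dots,A_a$ of width $\tau$ and sets $Z=\tfrac{1}{\sqrt{k}}[A_1,A_1,\dots,A_1']$. So the only issue is the final aggregation, and that is where your proposal fails.

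Your layer decomposition is correct, but the claim that ``structural redundancy'' of $E_q$ linearises the $P$-dependence is wrong. The Frobenius norm counts every column, repeated or not; having only $\tau$ \emph{distinct} nonzero columns in $E_q$ buys nothing. With your own count (for $\tau\mid k$), $E_q$ has $(P-q)\tau$ nonzero columns, each of length $\beta_\tau/\sqrt{k}$, so $\|E_q\|_F=\sqrt{(P-q)\tau}\,\beta_\tau/\sqrt{k}$ and
\[
\sum_{q=0}^{P-1}\|E_q\|_F
=\frac{\sqrt{\tau}\,\beta_\tau}{\sqrt{k}}\sum_{s=1}^{P}\sqrt{s}
\;\sim\;\frac{2\sqrt{\tau}}{3\sqrt{k}}\,P^{3/2}\beta_\tau,
\]
still $P^{3/2}$, not $P$; only the constant improved over the naive column sum. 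In fact no argument using this particular $Z$ can yield the stated bound, because the exact distance is $\|\mathcal{H}_k(\bm{m})-Z\|_F^2=\tfrac{\tau}{k}\sum_{p=0}^{P}\|\mathcal{N}^p(\bm{m})-\bm{m}\|_2^2$ (up to the last partial block), and this quantity is genuinely of order $\tfrac{\tau}{k}P^3\beta_\tau^2$ in the worst case. Worse, the inequality as stated is false: for $\tau=1$, $k=t=100$ and $m_l=\cos\bigl(2\pi(l-1)/100\bigr)$ one computes $\epsilon_1(\mathcal{H}_{100}(\bm{m}))=5$ (the matrix has rank $2$ with both singular values equal to $5$), whereas the right-hand side is $\tfrac{99}{10}\sqrt{200}\,\sin(\pi/100)\approx 4.40<5$. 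The paper's own proof shares this gap: its last inequality $\tfrac{1}{\sqrt{k}}\sum_{i=1}^{a}\|A_i-A_1\|_F\le \tfrac{\tau}{\sqrt{k}}(a-1)\beta_\tau$ is asserted without justification and is false in general.
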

\begin{proof}
The proof is provided in Appendix \ref{appedix:lemma4.2}.
\end{proof}
Here, $\lceil \cdot \rceil$ in Lemmas \ref{smoothness} and \ref{periodicity} represents the ceiling operation.  
Lemmas \ref{smoothness} and \ref{periodicity} demonstrate that as the periodicity or smoothness of the original univariate time series vector $\bm{m}$ increases along the temporal dimension, the low-rank property of its Hankel matrix $\mathcal{H}_{k}(\bm{m})$ is correspondingly enhanced.   Specifically, strong smoothness in $\bm{m}$ results in a very small $\eta(\bm{m})$, which in turn yields a small $\varepsilon_r(\mathcal{H}_{k}(\bm{m}))$, meaning $\mathcal{H}_{k}({\bm{m}})$ is very close to a rank-$r$ matrix. Thus, temporal smoothness in $\bm{m}$ can induce a low-rank structure in $\mathcal{H}_{k}({\bm{m}})$. The same argument holds for periodicity.
The study of the Hankel low-rankness mechanism provides a reliable guarantee for its application, which is lacking in other Hankel tensor completion methods 
\cite{yokota2018missing,sedighin2020matrix,wang2023low,yamamoto2022fast,gillard2018structured,butcher2017simple,zhang2019correction}.

For multidimensional time series, their smoothness and periodicity are typically manifested only in the temporal dimension, as observed in network traffic data, power consumption data, taxi pickup/dropoff data. Consequently, for multivariate/multidimensional time series, we have designed the following temporal isometric delay-embedding transform.

\subsection{Temporal Isometric Delay-embedding Transform}
For a multidimensional time series $\M \in \mathbb{R}^{t \times n_1 \times \cdots \times n_p}$, temporal isometric delay-embedding transform is denoted as $\mathcal{H}_k(\cdot)$, also referred to as the temporal Hankel transform. The transformed tensor is called the temporal Hankel tensor $\mathcal{H}_k(\M) \in \mathbb{R}^{t \times k \times n_1 \times \cdots \times n_p}$, and it can be expressed in the following form:
\begin{equation}
\mathcal{H}_k(\M) = \frac{1}{\sqrt{k}}\left[\begin{array}{cccc}
\M_1 & \M_2 & \cdots & \M_k \\
\M_2 & \M_3 & \cdots & \M_{k+1} \\
\vdots & \vdots & \vdots & \vdots \\
\M_t & \M_1 & \cdots & \M_{k-1}
\end{array}\right],
\end{equation}
where $\M_i \in \mathbb{R}^{1 \times 1 \times n_1 \times \cdots \times n_p}$ represents the information of the time series $\M$ at the $i$-th time sampling point. In other words, $\M_i = reshape(\M(i, :, :, \dots), 1, 1, n_1, \dots, n_p)$.
It is readily apparent that  each face slice of the tensor $\mathcal{H}_k(\M)$  is a Hankel matrix of size $t \times k$.

Inspired by Lemmas \ref{smoothness} and \ref{periodicity}, the low-rank property of the constructed temporal Hankel tensor can be attributed to the smoothness and periodicity of the original time series. 
A formal exposition of this claim are provided in  Appendix \ref{appde:b}.
To offer an intuitive illustration, we consider a multivariate time series exhibiting temporal smoothness, as shown in Fig.\ref{fig:PS-TCPS}(a,b). Applying the temporal Hankel transform to the time series matrix yields the Hankel tensor depicted in Fig.\ref{fig:PS-TCPS}(c). As observed in Fig.\ref{fig:PS-TCPS}(d), the singular values of the Hankel tensor decay rapidly, indicating a pronounced low-rank structure. This numerically confirms that the smoothness of the original time series indeed induces the low-rankness of the temporal Hankel tensor, thereby supporting our claim.

\begin{figure}[!ht]
\centering
\vspace{-0.2cm}
\includegraphics[width=1\linewidth]{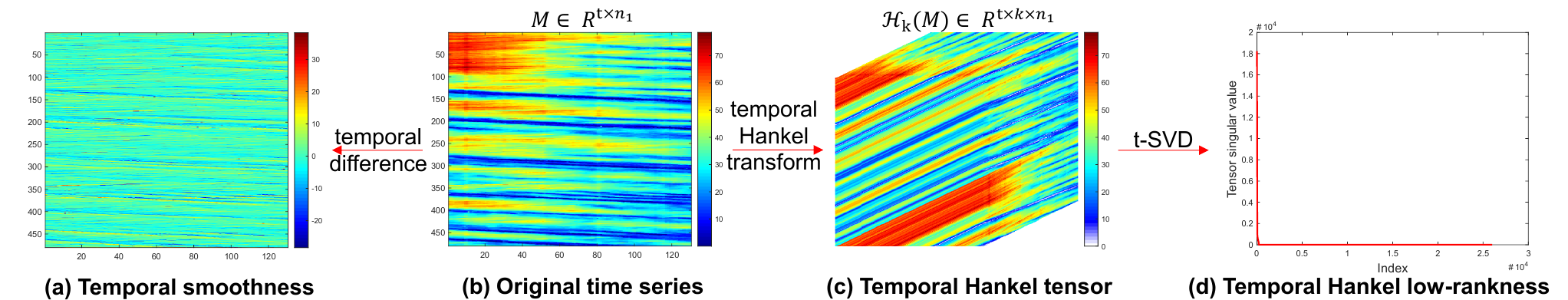}
\caption{Illustrations of temporal Hankel low-rankness from smoothness using  real-world data.(a): Difference diagram of the original time series in the time dimension, the phenomenon that the values in the difference graph are concentrated at 0 indicates that this time series is smooth along the time dimension; (b): multivariate time series from traffic data;  (c): the temporal Hankel tensor, which is derived from the temporal Hankel transform applied to the
real-world data; (d): the corresponding curves of tensor singular values of the temporal Hankel tensor. }\label{fig:PS-TCPS}
\vspace{-0.3cm}
\end{figure}

\begin{remark}
    Compared with the \textit{Multiway Delay-embedding Transform}(MDT),  our proposed TIDT has three key advantages.
1. Origin of Hankel Low-Rankness:
   the source of the Hankel tensor’s low-rank property induced by TIDT is explicitly interpretable, providing a solid theoretical foundation for low-rank Hankel tensor completion—something that MDT lacks.
2. Hankel Consistency:
   MDT fails to preserve the Frobenius norm due to
   $
      \normF{\M} \cancel{\propto} \normF{\mathcal{D}_\tau(\M)},  
   $
   whereas our method ensures consistency between the original data and its Hankelization, i.e.,
   $\normF{\M} = \normF{\mathcal{H}_k(\M)}.$
   % Such consistency facilitates the establishment of theoretical recovery guarantees for non-random missing tensor recovery.
3. Computational Efficiency:
   MDT transforms a ($p+1$)-order tensor into a ($2p+2$)-order one, while TIDT only increases it to ($p+2$)-order.
   The resulting smaller tensor size significantly reduces computational cost in subsequent applications.
\end{remark}

\subsection{Non-Random Missing Multidimensional Time Series Recovery}
To gain insights into the recovery of non-random missing patterns, we first examine the lower-order case of tensor (multidimensional time series), namely matrix completion. For matrices, exact recovery under non-random missing patterns generally requires that each row and column contain a sufficient number of observed entries \cite{liu2019matrix}. Consequently, recovering matrices with entirely missing rows or columns is regarded as one of the most challenging problems \cite{liu2022recovery}. In such cases, even the matrix nuclear norm fails to recover from fully missing rows or columns \cite{candes2009exact}.

As noted in Remark \ref{tensor-matrix}, under the t-SVD framework, the rows and columns of matrices correspond to the horizontal and lateral slices of tensors. Analogous to the matrix case, under non-random sampling patterns, the minimum number of horizontal and lateral slice samples is crucial for recovery \cite{Shu2025Guaranteed}, which makes the recovery of tensors with entirely missing horizontal or lateral slices extremely challenging.
The tensor nuclear norm fails to handle this segment missing  scenario, as illustrated in Fig.\ref{fig:temporal hankel}(a,d). Fortunately, the aforementioned temporal isometric delay-embedding transform can mitigate this issue by reordering and restructuring the data.

To clearly illustrate the  role of the temporal isometric delay-embedding transform for addressing non-random missing patterns in multidimensional time series, we first introduce the concept of temporal Hankel sampling set in the following.
\begin{defn}[Temporal Hankel sampling set]\label{Temporal Hankel Sampling Set} 
For a non-random sampling set $\Omega \in  \left[t \right] \otimes \left[n_1\right]  \otimes  \cdots \otimes \left[n_p\right] $, its temporal Hankel sampling set associated with  scale coefficient $k$ is denoted by $\Omega_{\mathcal{H}}$ and given by
\begin{equation}
\bar{\Omega}_{\mathcal{H}}=\mathcal{H}_k(\bar{\Omega}) \text { and } \Omega_{\mathcal{H}}=\operatorname{supp}(\bar{\Omega}_{\mathcal{H}}),
\end{equation}
where $\bar{\Omega} \in \mathbb{R}^{t \times n_1 \times \cdots \times n_p}$  is the original sampling tensor (the mask tensor 
of $\Omega$) and $\bar{\Omega}_{\mathcal{H}} \in \mathbb{R}^{t \times k\times n_1 \times \cdots \times n_p}$ is temporal Hankel sampling tensor (the mask tensor 
of $\Omega_{\mathcal{H}}$). Note that
the consistency between sampling after Hankelization and
Hankelization after sampling: $\mathcal{H}_k (\mathcal{P}_{\Omega}(\M))$ $=\mathcal{P}_{\Omega_{\mathcal{H}}} (\mathcal{H}_k(\M))$,
where $\mathcal{P}_{\Omega_{\mathcal{H}}}(\Z)=\bar{\Omega}_{\mathcal{H}} \circ \Z, \forall \Z \in \mathbb{R}^{t \times k \times  n_1 \times \cdots \times n_p }$.
\end{defn}
Regardless of how the observed entries are selected, the temporal Hankel sampling set $\Omega_{\mathcal{H}}$ 
consistently exhibits a well-posed pattern, as illustrated in Fig.\ref{fig:temporal hankel}(b). In other words, even if the original sampling set contains entirely missing horizontal or lateral slices, each horizontal or lateral slice in the temporal Hankel sampling set still contains a sufficient number of observed entries. 
% In this case, tensor nuclear norm minimization on the transformed multidimensional time series becomes feasible for recovery, as illustrated in Fig.\ref{fig:temporal hankel}(b,c).
Therefore, performing the temporal Hankel  transform on the original data provides a feasible path for the  non-random missing time series recovery problem and  holds the promise of achieving a theoretically precise recovery.

\begin{figure}[t]
\centering
\vspace{-0.2cm}
\includegraphics[width=0.6\linewidth]{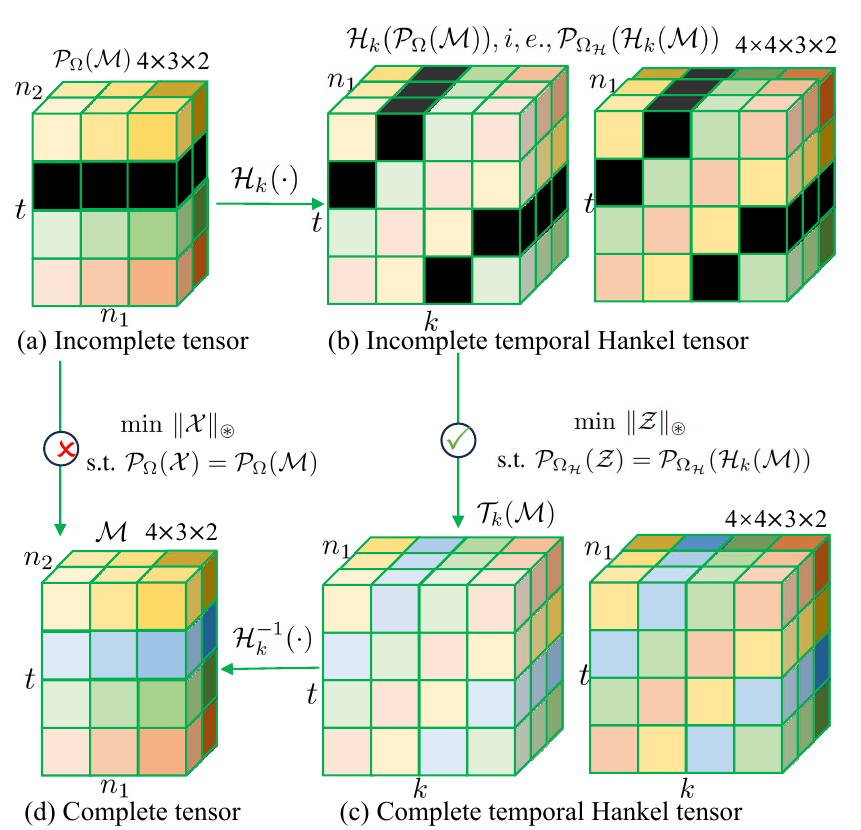}
\vspace{-0.3cm}
\caption{Illustrations of temporal Hankel  tensor nuclear norm
for multidimensional time series prediction. (a): incomplete
tensor, where black squares represent 0 values (not observed)
and other color blocks represent non-zero values (observed); (b):
incomplete temporal Hankel tensor; (c): complete temporal
Hankel tensor; (d): complete tensor. This illustrates that
introducing temporal Hankel transform helps distribute unsampled
regions, ensuring that each horizontal/lateral subtensor contains
a sufficient number of sampled elements.}\label{fig:temporal hankel}
\vspace{-0.4cm}
\end{figure}

\subsection{ The LRTC-TIDT Model}
The smoothness and periodicity of the original time series tensor $\M \in \mathbb{R}^{t \times n_1 \times \cdots \times n_p}$ induce a low-rank structure in its temporal Hankel tensor $\mathcal{H}_k(\M) \in \mathbb{R}^{t \times k \times n_1 \times \cdots \times n_p}$. To exploit this property, we adopt the tensor nuclear norm as a convex surrogate of the tensor rank, leading to the following completion problem:
\begin{equation}\label{DTC}
    \begin{aligned}
    \min_{\Z\in \mathbb{R}^{t \times k \times n_1 \times\cdots\times n_p}} ~\norm{\Z}_{\circledast},
    ~\text { s.t.} ~ \Pomegah(\Z)= \Pomegah(\mathcal{H}_k(\M)). \\
    \end{aligned}
\end{equation}
Let $\hat{\Z}$ denote an estimate of $\mathcal{H}_k(\M)$. The final prediction of $\M$ is then obtained through inverse temporal Hankelization, $\mathcal{H}_k^{-1}(\hat{\Z})$.
As illustrated in Fig.\ref{fig:temporal hankel}, the overall recovery process consists of three sequential steps:
1. Temporal Hankel transform : Transform the incomplete tensor $\Pomega(\M)$ into the incomplete temporal Hankel tensor $\mathcal{H}_k (\mathcal{P}_{\Omega}(\M)),i,e.,\Pomegah(\mathcal{H}_k(\M))$;
2. Tensor completion: Solve Eq.~(\ref{DTC}) to recover the full temporal Hankel tensor $\mathcal{H}_k(\M)$;
3. Inverse transform: Apply $\mathcal{H}_k^{-1}(\cdot)$ to reconstruct the complete tensor $\M$.
Integrating these steps into a single optimization yields the \textit{Low-
Rank Tensor Completion with Temporal Isometric
Delay-embedding Transform} (LRTC-TIDT) model for non-random missing  time series recovery:
\begin{equation}\label{LRTC-TIDT}
\min_{\X\in \mathbb{R}^{t \times n_1 \times \cdots \times n_p}} \norm{\mathcal{H}_k(\X)}_{\circledast}, \quad
\text{s.t.} \ \Pomega(\X) = \Pomega(\M),
\end{equation}
where $\Omega \in  \left[t \right] \otimes \left[n_1\right]  \otimes  \cdots \otimes \left[n_p\right] $ is a non-random sampling set.
When the observations are contaminated by additive noise, the LRTC-TIDT model can be formulated as
\begin{equation}\label{LRTC-TIDT_noise}
\min_{\X \in \mathbb{R}^{t \times n_1 \times \cdots \times n_p}} \norm{\mathcal{H}_k(\X)}_{\circledast}, \quad
\text{s.t.} \quad \norm{\Pomega(\X - \Y)}_F \leq \delta,
\end{equation}
where the observed tensor is given by $\Y = \Pomega(\M + \E)$,  $\E$ representing the noise term and $\delta$ controlling the noise level, assuming $\norm{\Pomega(\M - \Y)}_F \leq \delta$. Note that when $\delta=0$, model (\ref{LRTC-TIDT_noise}) reduces to model (\ref{LRTC-TIDT}). Therefore, in both the algorithm and experiments, we only use model (\ref{LRTC-TIDT_noise}) to evaluate the recovery performance of the LRTC-TIDT model.

\section{Recovery Theory}\label{Non-random Missing Time Series Imputation Theory}	
In this section, we establish the recovery theory for the LRTC-TIDT model under non-random missingness by introducing key concepts, including the temporal time sampling rate and  the temporal Hankel tensor incoherence.

\subsection{Minimum Temporal Sampling Rate}
In tensor completion tasks with random sampling, it is commonly assumed that the data sampling adheres to a particular distribution, such as the Bernoulli distribution. Under this assumption, the sampling probability 
 naturally serves as a metric for sampling \cite{candes2009exact} \cite{lu2019tensor} \cite{qin2022low}. Nonetheless, this method proves inadequate when dealing with non-random sampling scenarios.
Inspired by the concept of the minimum row/column sampling ratio for matrices under non-random sampling \cite{liu2019matrix}, we introduce a new sampling metric for time series, termed the minimum temporal sampling rate $\rho(\Omega)$. Since this metric is independent of the underlying distribution, it is particularly suitable for scenarios with non-random missing data.

\begin{defn}[Temporal sampling number] 
For any fixed sampling set $\Omega \subseteq \left[t\right] \otimes\left[n_1\right] \otimes  \cdots \otimes \left[n_p\right] $, its $(i_1,i_2,\cdot,i_p)$-th  temporal sampling number is defined as
\begin{equation}\left|\Omega{(i_1,i_2,\cdot,i_p)}\right|:=\sharp\{i_t| (i_t,i_1,i_2,\cdots,i_p)\in \Omega\},
\end{equation}
where $\sharp$ denotes the cardinality of a set.
\end{defn}

\begin{defn}[Minimum temporal sampling rate] For any fixed sampling set $\Omega \subseteq \left[t\right] \otimes\left[n_1\right] \otimes  \cdots \otimes \left[n_p\right] $, its minimum temporal sampling rate is defined as the smallest fraction of sampled entries in each time series vector; namely,
\begin{equation}\label{samplingrate}
\rho (\Omega) =\min _{1 \leq i_j \leq n_j} \frac{\left|\Omega{(i_1,i_2,\cdot,i_p)}\right|}{t}, \text{where}  \ j=1,2,\cdots,p.
\end{equation}

\label{temporal sampling rate}
\end{defn}
Note that the prediction problem can be regarded as a special case of deterministic completion, wherein $\rho(\Omega) = (t-h)/t$ and $h$ represents the forecast horizon. This serves to underscore the practical value of the sampling metric.

Importantly, independent of the choice of observed entries, as long as the original tensor $ \M $ has a sufficiently large minimum temporal sampling rate $\rho(\Omega)$, each horizontal or lateral slice in the temporal Hankel sampling set $\Omega_{\mathcal{H}}$ will contain enough observed entries. For instance, consider an observed tensor $ \Pomega(\M)\in \mathbb{R}^{t \times n_1 \times \cdots \times n_p}$ with a minimum temporal sampling rate of $\rho(\Omega)$.
Regardless of the original sampling set $\Omega$, when the temporal Hankel transform is applied with $k = t$, each horizontal or lateral slice in the temporal Hankel sampling set $\Omega_{\mathcal{H}}$ contains at least $t n_1 n_2 \cdots n_p\rho(\Omega)$ observed entries, making successful recovery possible.

\subsection{Temporal Hankel Tensor Incoherence}
Inspired by the incoherence conditions in the classic low-rank recovery theoretical framework \cite{candes2009exact,zhang2016exact,qin2022low,wang2023guaranteed,peng2022exact}, we define the temporal Hankel tensor incoherence conditions to avoid pathological situations in the temporal Hankel tensor.
Note that the  temporal Hankel tensor incoherence  applies  tensor incoherence condition in \cite{qin2022low} to the temporal Hankel tensor, rather than to the original tensor.

\begin{defn} [Temporal Hankel  tensor incoherence]
For $\M \in\mathbb{R}^{ t\times n_1\times\cdots\times n_p}$, assume that the temporal Hankel  tensor $\mathcal{H}_k(\M)\in\mathbb{R}^{ t \times k\times n_1\times\cdots\times n_p}$   with t-SVD rank $r$ and it has  the skinny t-SVD $\mathcal{H}_k(\M)=\U * \mathcal{S} *\V^\mathrm{T}$, and then $\M$ is said to satisfy the temporal Hankel  tensor incoherence conditions with size coefficient $k$ and parameter $\mu>0$ if
\begin{equation}\label{incoherence_hankel}
\max_{i_t = 1,\cdots,t} \|\U^\mathrm{T}*\mathring{\mathfrak{e}}_t^{(i_t)}\|_\mathrm{F}\leq
\sqrt{\frac{\mu r}{t n}}, \
\max_{i_k = 1,\cdots,k} \|\V^\mathrm{T}*\mathring{\mathfrak{e}}_k^{(i_k)}\|_\mathrm{F}\leq
\sqrt{\frac{\mu r}{k n}},
\end{equation}
where $n =n_1 \times  \cdots \times n_p$, $\mathring{\mathfrak{e}}_t^{(i_t)}$ is the order-$p+2$ tensor sized $t\times1\times n_1\times\cdots\times n_p$, whose $(i_t,1,i_1,\cdots,i_p)$-th entry equals 1 and the rest equal 0, and $\mathring{\mathfrak{e}}_k^{(i_k)}:=(\mathring{\mathfrak{e}}_t^{(i_t)})^\mathrm{T}$.
\end{defn}

\subsection{Main Results}
\begin{theorem}[Exact Recovery Theory] \label{thm: exact non-random tensor completion}
Suppose that  $\M \in \mathbb{R}^{t\times n_1\times\cdots\times n_p}$  obeys the  temporal Hankel tensor incoherence conditions   (\ref{incoherence_hankel}) with scale factor k and $\Omega \in \left[t\right] \otimes \left[n_1\right]  \otimes  \cdots \otimes \left[n_p\right] $. If
\begin{equation}\label{sampling-exact-a0.24}
\rho(\Omega) > 1-\frac{ k }{2 \mu r (r_s+1)t},
\end{equation}
where  $\rho(\Omega)$ is minimum temporal sampling rate, $r$ is the t-SVD rank of tensor $\mathcal{H}_k(\M)$, also referred to as the temporal Hankel rank, $r_s$ is the multi-rank sum of tensor $\mathcal{H}_k(\M)$  and  $\mu$ is the parameter of  temporal Hankel tensor incoherence conditions,
then $\M$ is the unique solution to  LRTC-TIDT model  (\ref{LRTC-TIDT}) in the noise-free setting.
\end{theorem}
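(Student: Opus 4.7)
The plan is to follow the dual-certificate strategy familiar from low-rank matrix and tensor completion (Candès--Recht, \cite{qin2022low}), adapted to work in the Hankel domain so as to accommodate the delay embedding. Because $\mathcal{H}_k$ is an isometric linear injection, the feasibility constraint $\Pomega(\X)=\Pomega(\M)$ in \eqref{LRTC-TIDT} is equivalent to $\Pomegah(\mathcal{H}_k(\X))=\Pomegah(\mathcal{H}_k(\M))$, and any competing optimum $\X=\M+\D$ gives rise to a nonzero Hankel-structured perturbation $\mathcal{H}_k(\D)$ that vanishes on $\Omega_{\mathcal{H}}$. Proving Theorem~\ref{thm: exact non-random tensor completion} therefore reduces to showing that no such perturbation can decrease the TNN of $\mathcal{H}_k(\M)$.

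I would next fix the skinny t-SVD $\mathcal{H}_k(\M)=\U*\SS*\V^{\mathrm{T}}$ and its tangent space $\mathbb{T}=\{\U*\A^{\mathrm{T}}+\B*\V^{\mathrm{T}}\}$. By the subgradient characterization of the tensor nuclear norm, it suffices to produce a dual certificate $\Y$ supported in $\Omega_{\mathcal{H}}$ with $\Pt(\Y)=\U*\V^{\mathrm{T}}$ and $\|\Ptc(\Y)\|<1$, together with injectivity of $\Pomegah$ restricted to $\mathbb{T}$; both statements are standard consequences of a single operator-norm bound $\|\Pt\Pomegahc\Pt\|<1$.

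The crux of the argument, and the step on which everything hinges, is proving this deterministic operator-norm bound. Since $\Omega$ is arbitrary, random-matrix concentration is unavailable, so the analysis must rely purely on the incoherence conditions \eqref{incoherence_hankel} and the minimum temporal sampling rate $\rho(\Omega)$. The plan is to expand $\Pomegahc$ as a sum over its unsampled indices, apply $\Pt$ to each rank-one basis tensor of the form $\mathring{\mathfrak{e}}_t^{(i_t)}*(\mathring{\mathfrak{e}}_k^{(i_k)})^{\mathrm{T}}$ tensored with a spatial unit, and use \eqref{incoherence_hankel} face slice by face slice to bound the squared Frobenius image of each such basis tensor by $\mu r/(tn)+\mu r/(kn)$. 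The number of missing indices intersecting any horizontal or lateral fibre of $\Omega_{\mathcal{H}}^{\perp}$ is controlled by at most $(1-\rho(\Omega))t$ by Definition~\ref{temporal sampling rate}, while converting the pointwise Frobenius control into an operator-norm bound on $\mathbb{T}$ introduces a factor of $r_s+1$ coming from the multi-rank sum of $\mathcal{H}_k(\M)$. Combining these three inputs yields $\|\Pt\Pomegahc\Pt\|\le 2\mu r(r_s+1)(1-\rho(\Omega))t/k$, which is strictly below one precisely when the sampling condition \eqref{sampling-exact-a0.24} holds.

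Once the operator-norm bound is available, the remaining steps are routine. Injectivity of $\Pomegah$ on $\mathbb{T}$ is immediate, and the dual certificate can be constructed by a Neumann series, $\Y=\Pomegah(\mathcal{I}-\Pt\Pomegahc\Pt)^{-1}(\U*\V^{\mathrm{T}})$, whose tangent component matches $\U*\V^{\mathrm{T}}$ by construction and whose $\mathbb{T}^{\perp}$-component has spectral norm strictly below one by the geometric-series estimate. Splitting any competing Hankel perturbation $\mathcal{H}_k(\D)$ into its $\mathbb{T}$ and $\mathbb{T}^{\perp}$ parts and inserting the certificate into the subgradient inequality, together with $\Pomegah(\mathcal{H}_k(\D))=0$, then forces $\mathcal{H}_k(\D)=0$; the isometry of $\mathcal{H}_k$ yields $\D=0$, completing the proof of uniqueness.
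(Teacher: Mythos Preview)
Your overall strategy---reduce to the Hankel domain via the isometry, establish $\normop{\Pt\Pomegahc\Pt}<1$, build the dual certificate $\Lambda=\Pomegah\Pt(\Pt\Pomegah\Pt)^{-1}(\U*\V^{\mathrm{T}})$ through the resulting Neumann series, and conclude via the subgradient inequality---is exactly what the paper does (Lemmas~\ref{lemmaeq}--\ref{lemmaxm}; Theorem~\ref{thm: exact non-random tensor completion} is then read off as the $\delta=0$ case of Theorem~\ref{thm: approximate non-random tensor completion}).

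The one place where your sketch diverges from the paper, and where it has a genuine gap, is the bookkeeping of the factor $(r_s+1)$. You assert that it appears when passing from pointwise Frobenius control to the operator norm, giving $\normop{\Pt\Pomegahc\Pt}\le 2\mu r(r_s+1)(1-\rho(\Omega))t/k<1$ under \eqref{sampling-exact-a0.24}. But $\normop{\Pt\Pomegahc\Pt}<1$ alone does \emph{not} force $\|\Ptc(\Lambda)\|<1$: by Lemma~\ref{lemma2.18} one has $\normop{\Ptc\Pomegah\Pt(\Pt\Pomegah\Pt)^{-1}}=\sqrt{1/(1-\normop{\Pt\Pomegahc\Pt})-1}$, and this must still absorb a factor $\|\U*\V^{\mathrm{T}}\|_F$ of order $\sqrt{r_s}$ when the spectral norm of $\Ptc(\Lambda)$ is estimated. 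The paper resolves this by obtaining the much sharper bound $\normop{\Pt\Pomegahc\Pt}<\alpha/(r_s+1)$, which it gets not by attacking $\Pt$ directly but by first splitting $\normop{\Pt\Pomegahc\Pt}\le\normop{\Pu\Pomegahc\Pu}+\normop{\Pv\Pomegahc\Pv}$ (Lemma~\ref{lemmatuv}) and bounding each piece separately from incoherence and the fibre-wise missingness count (Lemma~\ref{lemmauv}). In that route the $(r_s+1)$ never enters the operator-norm bound; it is held back for the spectral-norm step on the certificate (Lemma~\ref{lemmadu}). If you rerun your counting argument through $\Pu$ and $\Pv$ individually rather than through $\Pt$, you will recover the paper's bound and the remainder of your outline goes through unchanged.
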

\begin{proof}
The proof of Theorem \ref{thm: exact non-random tensor completion} follows directly from that of Theorem \ref{thm: approximate non-random tensor completion} as a degenerate case in the noise-free setting.
\end{proof}
 
\begin{theorem}\label{thm: approximate non-random tensor completion}
Suppose that  $\M \in \mathbb{R}^{t\times n_1\times\cdots\times n_p}$  obeys the  temporal Hankel tensor incoherence conditions   (\ref{incoherence_hankel}) with scale factor k and $\Omega \in \left[t\right] \otimes \left[n_1\right]  \otimes  \cdots \otimes \left[n_p\right] $. If
\begin{equation}\label{sampling-pr-a0.24}
\rho (\Omega) > 1-\frac{\alpha k }{2 \mu r (r_s+1)t},
\end{equation}
let $\hat{\M}$ denote the solution to the LRTC-TIDT model (\ref{LRTC-TIDT_noise}) under noisy observations  and $\norm{\Pomega(\M-\Y)}_F \leq \delta$,
then  $\hat{\M}$  obeys 
\begin{equation}
\norm{\hat{\M}-\M}_F \leq  (2 \frac{\sqrt{r_s+1-\alpha}+\sqrt{r_s+1}}{\sqrt{r_s+1-\alpha}-\sqrt{\alpha r_s}}\sqrt{nt}+2) \delta,
\end{equation}
where $\alpha \in (0,1)$ is a number that controls the relationship between sampling and  recovery  accuracy, $n=n_1 \times  \cdots \times n_p$.
Here, $r$, $r_s$, and $\mu$ follow the same definitions as in Theorem~\ref{thm: exact non-random tensor completion}.
\end{theorem}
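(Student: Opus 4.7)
The plan is to lift the problem to the temporal Hankel domain and run a deterministic variant of the standard nuclear-norm recovery argument. Put $\Z := \HH_k(\M)$, $\hat{\Z} := \HH_k(\hat{\M})$, and $\Delta := \hat{\Z} - \Z$. Since $\HH_k$ is an isometry and $\HH_k\circ\Pomega = \Pomegah\circ\HH_k$ by Definition~\ref{Temporal Hankel Sampling Set}, one has $\normF{\hat{\M}-\M} = \normF{\Delta}$. Feasibility of $\M$, optimality of $\hat{\M}$ in~(\ref{LRTC-TIDT_noise}), and the triangle inequality then yield $\normF{\Pomegah(\Delta)} = \normF{\Pomega(\hat{\M}-\M)} \leq 2\delta$ together with $\norm{\hat{\Z}}_{\circledast} \leq \norm{\Z}_{\circledast}$. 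All remaining analysis is carried out on $\Delta$ in the Hankel domain, and at the end the decomposition $\normF{\Delta} \leq \normF{\Pomegah(\Delta)} + \normF{\Pomegahc(\Delta)}$ will absorb the additive $+2$ contribution in the target bound.

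Writing the skinny t-SVD $\Z = \U*\SS*\V^{\mathrm{T}}$ with tangent space $\mathbb{T}$, I would invoke the subgradient inequality for the tensor nuclear norm: for every $\W\in\mathbb{T}^{\perp}$ with $\norm{\W}\leq 1$, $\norm{\hat{\Z}}_{\circledast} \geq \norm{\Z}_{\circledast} + \langle \U*\V^{\mathrm{T}}+\W,\Delta\rangle$. Choosing $\W$ dual to $\Ptc(\Delta)$ and combining with the optimality bound produces the key one-sided estimate $\norm{\Ptc(\Delta)}_{\circledast} \leq -\langle \U*\V^{\mathrm{T}}, \Pt(\Delta)\rangle \leq \norm{\Pt(\Delta)}_{\circledast} \leq \sqrt{r_s}\,\normF{\Pt(\Delta)}$, where the last step uses that $\U*\V^{\mathrm{T}}$ has unit tensor spectral norm and that $\Pt(\Delta)$ has multi-rank sum at most $r_s$.

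The hard part of the proof, and the place where the sampling and incoherence hypotheses actually enter, is to promote the minimum temporal sampling rate condition~(\ref{sampling-pr-a0.24}) into a restricted-isometry-type inequality on $\mathbb{T}$ of the form $\normF{\Pomegahc(\HH)}^2 \leq (1-\alpha/(r_s+1))\normF{\HH}^2$ for every $\HH\in\mathbb{T}$. I plan to do this in two substeps. First, expanding $\HH = \U*\Xm^{\mathrm{T}} + \Ym*\V^{\mathrm{T}}$ and contracting with $\mathring{\mathfrak{e}}_t^{(i_t)}$ and $\mathring{\mathfrak{e}}_k^{(i_k)}$, the temporal Hankel tensor incoherence condition~(\ref{incoherence_hankel}) forces a uniform pointwise bound of order $(\mu r)/(ktn)\normF{\HH}^2$ on every Hankel entry of $\HH$, so its Frobenius mass is spread evenly across all indices. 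Second, Definition~\ref{Temporal Hankel Sampling Set} shows that each temporal fiber of $\Omegah$ inherits the missing fraction $1-\rho(\Omega)$ from $\Omega$, so summing the pointwise bound over the missing indices and invoking~(\ref{sampling-pr-a0.24}) yields the stated operator-norm inequality with constant $\alpha/(r_s+1)$. The technical difficulty here is the interplay between the Hankel duplication and the tangent-space structure: the pointwise-to-Frobenius passage has to be done carefully enough to keep the constant sharp and the dependence on $\alpha$ linear.

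To finish, I decompose $\Pomegahc(\Delta) = \Pomegahc\Pt(\Delta) + \Pomegahc\Ptc(\Delta)$, apply the restricted-isometry bound to the first term, use $\normF{\Pomegahc\Ptc(\Delta)} \leq \normF{\Ptc(\Delta)} \leq \norm{\Ptc(\Delta)}_{\circledast} \leq \sqrt{r_s}\,\normF{\Pt(\Delta)}$ on the second, and combine with $\normF{\Pomegah(\Delta)} \leq 2\delta$. This produces a small linear inequality system coupling $\normF{\Pt(\Delta)}$ and $\normF{\Ptc(\Delta)}$; solving it gives the resolvent factor $(\sqrt{r_s+1-\alpha}+\sqrt{r_s+1})/(\sqrt{r_s+1-\alpha}-\sqrt{\alpha r_s})$ appearing in the statement, while the normalization between per-slice and global Frobenius norms contributes the $\sqrt{nt}$ factor. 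This bookkeeping step is routine once the RIP-type inequality on $\mathbb{T}$ is in hand. Theorem~\ref{thm: exact non-random tensor completion} then drops out by sending $\delta\to 0$, which forces $\Delta=0$ and hence $\hat{\M}=\M$.
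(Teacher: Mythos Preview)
Your overall architecture (lift to the Hankel domain, establish a restricted-isometry-type bound on $\mathbb{T}$, then couple $\Pt(\Delta)$ and $\Ptc(\Delta)$) is the right shape, and your second step essentially reproduces the paper's Lemmas~\ref{lemmauv}--\ref{lemmatuv}, i.e.\ the bound $\normop{\Pt\Pomegahc\Pt}<\alpha/(r_s+1)$. The gap is in how you couple the tangential and normal components of $\Delta$. Your ``key one-sided estimate'' $\norm{\Ptc(\Delta)}_{\circledast}\le -\langle\U*\V^{\mathrm{T}},\Pt(\Delta)\rangle\le \sqrt{r_s}\,\normF{\Pt(\Delta)}$ uses only the raw subgradient and never touches the sampling set, so the constant in front of $\normF{\Pt(\Delta)}$ is of order $\sqrt{r_s}$. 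When you feed this into your linear system together with the RIP bound $\normF{\Pomegah\Pt(\Delta)}\ge\sqrt{\alpha/(r_s+1)}\,\normF{\Pt(\Delta)}$ and $\normF{\Pomegah\Pt(\Delta)}\le 2\delta+\normF{\Ptc(\Delta)}$, the closure condition becomes $\sqrt{r_s}<\sqrt{\alpha/(r_s+1)}$, which is impossible for any $r_s\ge 1$ and $\alpha<1$. In particular, even in the noiseless case $\delta=0$ your argument does not force $\Delta=0$.

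What is missing is exactly the dual certificate. The paper (Lemma~\ref{lemmadu}) constructs $\Lambda=\Pomegah\Pt(\Pt\Pomegah\Pt)^{-1}(\U*\V^{\mathrm{T}})$, which is supported on $\Omegah$, satisfies $\Pt(\Lambda)=\U*\V^{\mathrm{T}}$, and---via Lemmas~\ref{lemmaeq} and~\ref{lemma2.18}---obeys $\norm{\Ptc(\Lambda)}<\sqrt{\alpha r_s/(r_s+1-\alpha)}<1$. Because $\Lambda$ is supported on $\Omegah$, the inner product $\langle\Lambda,\Delta\rangle=\langle\Lambda,\Pomegah(\Delta)\rangle$ is $O(\delta)$, and the subgradient inequality then yields Lemma~\ref{lemmaxm}, namely $\norm{\HH_k(\X)}_{\circledast}\ge\norm{\HH_k(\M)}_{\circledast}+\bigl(1-\sqrt{\alpha r_s/(r_s+1-\alpha)}\bigr)\norm{\Ptc(\Delta)}_{\circledast}$. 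It is this factor strictly less than $1$, not your $\sqrt{r_s}$, that makes the system close and produces the resolvent $(\sqrt{r_s+1-\alpha}+\sqrt{r_s+1})/(\sqrt{r_s+1-\alpha}-\sqrt{\alpha r_s})$ in the final bound. Your proposal conflates the dual-certificate route with a pure RIP route; the latter would require the isometry to hold on \emph{all} low-rank tensors, not just on $\mathbb{T}$, which the hypotheses do not give.
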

\begin{proof}
 The proof is provided in Appendix \ref{appendix:c}.   
\end{proof}

Theorems \ref{thm: exact non-random tensor completion} and \ref{thm: approximate non-random tensor completion} establish the recovery guarantees of the LTRC-TIDT model in the noiseless and noisy settings, respectively. These results show that, as long as the condition
$
\rho(\Omega) > k/{2 \mu r (r_s+1)t}
$
is satisfied, the LRTC-TIDT model achieves exact recovery in the noiseless scenario.
Moreover, in noisy scenarios, as the noise level $\E$ decreases, the recovered result $\hat{\M}$ becomes increasingly closer to the ground truth $\M$; in other words, even when a portion of the observed entries is perturbed by mild noise, the recovery remains accurate.
%%%阐述模型11的稳健性
It is worth emphasizing that the sampling condition depends on the low-rankness of the temporal Hankel tensor. A stronger temporal Hankel low-rankness allows exact recovery to be achieved even under a lower minimum temporal sampling rate.
Furthermore, if the original time series exhibits pronounced periodicity and smoothness, the temporal Hankel tensor becomes more low-rank, which further aids model (\ref{LRTC-TIDT}) in achieving exact recovery.

As noted in Definition \ref{temporal sampling rate}, the forecasting problem can be regarded as a special case of deterministic completion, where the observation region
$\Omega \in [t] \otimes [n_1] \otimes \cdots \otimes [n_p]$
degenerates into a historical region
$\Omega = [t-h] \otimes [n_1] \otimes \cdots \otimes [n_p]$,
with $h$ representing the forecast horizon. In this case, the non-random exact recovery theory of Theorem  \ref{thm: exact non-random tensor completion} naturally reduces to its exact prediction theory. 
\begin{Prop}\label{prop:pridction}
Suppose that  $ \M \in \mathbb{R}^{t\times n_1\times\cdots\times n_p} $  satisfies the temporal Hankel tensor incoherence conditions (\ref{incoherence_hankel}) with scale factor $k$, and let
$\Omega = [t-h] \otimes [n_1] \otimes \cdots \otimes [n_p]$
denote the historical observation region. If
\begin{equation}
h < \frac{k}{2 \mu r (r_s + 1)},
\end{equation}
where $h$ is the forecasting horizon, 
and $r$, $r_s$, and $\mu$ are specified as in Theorem \ref{thm: exact non-random tensor completion},
then $\M$ is the unique solution to the LRTC-TIDT model (\ref{LRTC-TIDT}) in the noise-free setting.
\end{Prop}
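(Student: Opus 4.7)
The plan is straightforward: recast the prediction problem as a specific instance of the non-random completion problem already governed by Theorem~\ref{thm: exact non-random tensor completion}, and then verify that the hypothesis on the forecast horizon $h$ is exactly the minimum temporal sampling rate condition required there. Since $\Omega = [t-h] \otimes [n_1] \otimes \cdots \otimes [n_p]$ is a prescribed deterministic subset of $[t] \otimes [n_1] \otimes \cdots \otimes [n_p]$, the LRTC-TIDT recovery model and the temporal Hankel incoherence setup apply without modification; the temporal Hankel incoherence hypothesis is carried over verbatim from Theorem~\ref{thm: exact non-random tensor completion}.

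First, I would evaluate $\rho(\Omega)$ directly from Definition~\ref{temporal sampling rate}. For every spatial index $(i_1,\ldots,i_p)$, the temporal fiber is observed precisely at the indices $i_t \in [t-h]$, so the temporal sampling number equals $|\Omega_{(i_1,\ldots,i_p)}| = t-h$ for all spatial indices. Because this count is constant across all spatial slots, the minimum in (\ref{samplingrate}) is attained uniformly, which yields $\rho(\Omega) = (t-h)/t$.

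Second, I would substitute this value into the sampling condition (\ref{sampling-exact-a0.24}) of Theorem~\ref{thm: exact non-random tensor completion}. The inequality $\rho(\Omega) > 1 - k/(2\mu r (r_s+1) t)$ becomes
\begin{equation*}
\frac{t-h}{t} \;>\; 1 - \frac{k}{2\mu r (r_s+1) t},
\end{equation*}
which after canceling $1$ on both sides and multiplying by $t$ reduces to $h < k/(2\mu r (r_s+1))$, precisely the hypothesis of the proposition. With the incoherence assumption satisfied by assumption, all premises of Theorem~\ref{thm: exact non-random tensor completion} are met, and its conclusion delivers uniqueness of $\M$ as the solution to the LRTC-TIDT model (\ref{LRTC-TIDT}) in the noise-free setting.

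The only conceptual step worth flagging is the observation that forecasting sits inside the deterministic completion framework via the rectangular mask $\Omega = [t-h] \otimes [n_1] \otimes \cdots \otimes [n_p]$; this has already been noted right after Definition~\ref{temporal sampling rate} and is the key bridge. Once it is made, no genuine obstacle remains: the evaluation of $\rho(\Omega)$ is an immediate consequence of the product structure of $\Omega$, and the reduction between the two threshold conditions is an elementary algebraic rearrangement. The proposition is therefore a clean corollary of Theorem~\ref{thm: exact non-random tensor completion}, and no new machinery is required.
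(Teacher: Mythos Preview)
Your proposal is correct and follows essentially the same approach as the paper: the paper's proof simply invokes Theorem~\ref{thm: exact non-random tensor completion} together with the identity $\rho(\Omega) = (t-h)/t$, which is exactly what you compute and substitute. No differences worth noting.
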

\begin{proof}
  Combined with Theorem \ref{thm: exact non-random tensor completion} and the relation $\rho(\Omega) = (t-h)/t$, Proposition \ref{prop:pridction} can be directly derived.
\end{proof}
Moreover, the proposed deterministic exact recovery theory can be extended to the random sampling case. The following corollary reveals that when the sampling set satisfies $\Omega\sim\operatorname{Ber}(\theta)$ and $\theta\geq  1-k/{ 2 \mu rt (r_s+1)}$, exact recovery holds with high probability.
\begin{Prop}\label{Ber}
Suppose that  $ \M \in \mathbb{R}^{t\times n_1\times\cdots\times n_p} $  satisfies the temporal Hankel tensor incoherence conditions (\ref{incoherence_hankel}) with scale factor $k$ and $\Omega\sim\operatorname{Ber}(\theta)$.  
If
$\theta \geq  1-k/{ 2 \mu rt (r_s+1)}$, then $\M$ is the unique solution to the LRTC-TIDT  model  (\ref{LRTC-TIDT}) with probability at least $1-e^{-2a^2 m_0}$, 
where  $a=p-1+k/{ 2 \mu r t(r_s+1)}, m_0=tkn_1\cdots n_p$.
\end{Prop}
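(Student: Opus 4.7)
The strategy is to reduce Proposition~\ref{Ber} to the deterministic guarantee of Theorem~\ref{thm: exact non-random tensor completion}. Since the temporal Hankel incoherence hypothesis on $\M$ is already part of the premise, exact recovery by the LRTC-TIDT model will follow from Theorem~\ref{thm: exact non-random tensor completion} as soon as the event
\[
\mathcal{E} := \left\{ \rho(\Omega) > 1 - \frac{k}{2\mu r t (r_s+1)} \right\}
\]
is realised. The entire argument therefore reduces to controlling $\Pr(\mathcal{E}^{c})$ under the Bernoulli model $\Omega \sim \operatorname{Ber}(\theta)$ with $\theta \geq 1 - k/(2\mu r t(r_s+1))$.

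For each fixed spatial multi-index $(i_1,\ldots,i_p)$, the temporal sampling count $|\Omega(i_1,\ldots,i_p)|$ is a sum of $t$ i.i.d.\ Bernoulli$(\theta)$ variables with mean $t\theta$. Hoeffding's inequality then gives
\[
\Pr\!\left( \frac{|\Omega(i_1,\ldots,i_p)|}{t} \leq \theta - a \right) \leq e^{-2a^{2} t}
\]
for any $a \geq 0$. Choosing $a = \theta - \bigl(1 - k/(2\mu r t(r_s+1))\bigr)$ makes $a$ non-negative by assumption, and by construction the threshold $\theta - a$ coincides exactly with the deterministic cut-off produced by Theorem~\ref{thm: exact non-random tensor completion}. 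A union bound over the $n_1 n_2 \cdots n_p$ spatial multi-indices then controls the failure event $\mathcal{E}^{c}$, and combining this tail bound with the deterministic recovery guarantee closes the argument at the elementary level.

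The principal obstacle is calibrating the exponent so that it matches the advertised form $e^{-2a^{2} m_{0}}$ with $m_{0} = t k n_{1}\cdots n_{p}$. The naive Hoeffding-plus-union-bound pipeline sketched above produces a bound of order $n_1\cdots n_p \cdot e^{-2a^{2} t}$, which is weaker than what is stated. To bridge this gap one should exploit the structural redundancy of the temporal isometric delay-embedding transform: each original Bernoulli trial is duplicated $k$ times inside $\bar{\Omega}_{\mathcal{H}} = \mathcal{H}_{k}(\bar{\Omega})$, so the effective number of coordinates over which $\rho(\Omega)$ concentrates is inflated from $t n_1\cdots n_p$ to $m_{0} = tkn_1\cdots n_p$. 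A clean way to implement this is via McDiarmid's bounded-difference inequality applied to the functional $\rho(\Omega)$ viewed as a function of the entries of $\bar{\Omega}_{\mathcal{H}}$, where each single-coordinate change perturbs the minimum temporal sampling rate by at most $1/(tk)$; summing these squared differences over $m_{0}$ coordinates is what yields the factor $m_{0}$ in the exponent.

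Once this concentration bookkeeping is carried out, the proof is essentially assembled: the Bernoulli hypothesis on $\theta$ forces $a \geq 0$; the McDiarmid/Hoeffding bound gives $\Pr(\mathcal{E}^{c}) \leq e^{-2 a^{2} m_{0}}$; and on the complementary event $\mathcal{E}$ the deterministic Theorem~\ref{thm: exact non-random tensor completion} identifies $\M$ as the unique minimiser of the LRTC-TIDT model~(\ref{LRTC-TIDT}). The hard part is not the reduction itself but the concentration calibration that replaces the elementary union bound by a sharper functional inequality exploiting the $k$-fold redundancy of the Hankel mask.
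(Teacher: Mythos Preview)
Your overall reduction---show that the random set satisfies the deterministic sampling condition of Theorem~\ref{thm: exact non-random tensor completion} with high probability, via Hoeffding applied to each temporal fiber---is precisely the route the paper takes. In fact the paper's argument is shorter than yours: it writes down the per-fiber Hoeffding bound $\Pr(\tfrac{1}{t}\sum x_i \le \theta-a)\le e^{-2a^2 t}$, then passes in a single line (``Assume that the sampling of each temporal mask vector is independent, then \ldots'') directly to $\Pr(\rho(\Omega)\le\theta-a)\le e^{-2a^{2}m_{0}}$ with $m_{0}=tkn_{1}\cdots n_{p}$, and finally sets $a=\theta-1+k/(2\mu r t(r_s+1))$. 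There is no union bound, no McDiarmid, and no further justification for the appearance of the factor $k n_{1}\cdots n_{p}$ in the exponent; the step is simply asserted. So your suspicion that the advertised exponent does not drop out of the elementary Hoeffding pipeline is well founded, and the paper does not supply the missing mechanism either.

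That said, your proposed McDiarmid repair does not close the gap. The $m_{0}=tkn_{1}\cdots n_{p}$ entries of $\bar{\Omega}_{\mathcal{H}}=\mathcal{H}_{k}(\bar{\Omega})$ are \emph{not} independent: each Bernoulli trial in $\bar{\Omega}$ is deterministically copied $k$ times into the Hankel mask, and McDiarmid requires independent coordinates. If instead you (correctly) regard $\rho(\Omega)$ as a function of the genuinely independent $tn_{1}\cdots n_{p}$ entries of $\bar{\Omega}$, flipping a single entry changes one fiber's count by $1$ and hence perturbs $\rho(\Omega)$ by at most $1/t$, not $1/(tk)$; the resulting bounded-differences exponent is $2a^{2}t/(n_{1}\cdots n_{p})$, which is \emph{weaker} than the per-fiber Hoeffding bound, not stronger. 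Deterministic duplication cannot manufacture extra concentration. In short: your strategy matches the paper's, you are right that the stated exponent needs more than Hoeffding-plus-union, but neither the paper's bare assertion nor your McDiarmid sketch actually establishes the bound $e^{-2a^{2}m_{0}}$ as written.
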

\begin{proof}
 The proof is provided in  Appendix \ref{appendix:d}.  
\end{proof}

\section{Optimization Algorithm}\label{sec:Algorithm}
This section gives the optimization algorithm of the proposed  the LRTC-TIDT model for the time series recovery task.

\subsection{Optimization to LRTC-TIDT}
we shall not try to solve the problem (\ref{LRTC-TIDT_noise}) directly, but instead
consider its equivalent version as in the following:
\begin{equation}
    \begin{aligned}
    &\min_{\X} \   \norm{\mathcal{H}_k(\X)}_{\circledast} +\frac{\lambda}{2} \norm{\Pomega(\X-\Y)}_F^2,\\
    \end{aligned}
    \label{TSP-aa}
\end{equation}
where $\lambda$ is a trade-off  parameter for balancing constraint and regularization. We adopt the famous optimization framework \textit{alternating direction method of multipliers} (ADMM)  to solve the model. First of all, we introduce the auxiliary variable  $\Z=\mathcal{H}_k(\X)$ to separate the temporal Hankel
operation $\mathcal{H}_k(\cdot)$, 
then the LRTC-TIDT model can be rewritten as below:
\begin{equation}
    \min_{\X,\Z} \   \norm{\Z}_{\circledast} +\frac{\lambda}{2} \norm{\Pomega(\X-\Y)}_F^2, ~ ~ \text { s.t.}~ \Z=\mathcal{H}_k(\X). 
    \label{tctnn_aau}
\end{equation}
For model \eqref{tctnn_aau}, we write its  augmented Lagrangian function  
\begin{equation} \label{tctnn_alf}
	L(\X,\Z,\N)  = 
  \norm{\Z}_{\circledast}+ 
 \frac{\mu_j}{2}\normlarge{\Z-\mathcal{H}_k(\X)+{\N}/{\mu_j}}_F^2 +\frac{\lambda}{2} \norm{\Pomega(\X-\Y)}_F^2+\C,
\end{equation}
where  $\N$ is Lagrange multiplier, $\mu_j>0$ is a positive scalar, and $\C$ is only the multiplier dependent squared items.
According to the augmented Lagrangian function, the ADMM optimization framework transforms the minimization problem  \eqref{tctnn_alf} into the following subproblems for each variable in an iterative manner, i.e.,
\begin{align}
\label{subba1}\Z^{j+1}&=\operatorname{arg}\min_{\Z} ~L(\X^{j},\Z,\N^j), \\
\label{subba2}\X^{j+1}=\operatorname{arg}\min_{\X} ~L(\X,&\Z^{j+1},\N^j),~~   \text{s.t.}~ \Pomega(\X^{j+1})= \Pomega(\Y),
\end{align}
and the multiplier is updated by
\begin{align}
\N^{j+1}&=\N^{j}+\mu_j(\Z^{j+1}-\mathcal{H}_k(\X^{j+1})),\label{subba4} 
\end{align}
where $j$ denotes the count of iteration in the ADMM. In the following, we deduce the solutions for \eqref{subba1} and \eqref{subba2} respectively, each of which has the closed-form solution.

\textit{1) Updating $\Z^{j+1} $}: Fixed other variables in \eqref{subba1}, then it degenerates into the following  tensor nuclear norm minimization with respect to $\Z$, i.e.,
\begin{equation}
    \begin{aligned}
   \Z^{j+1}=\operatorname{arg}\min_{\Z} \frac{1}{\mu_j}\norm{\Z}_{\circledast}+
 \frac{1}{2}\normlarge{\Z-(\mathcal{H}_k(\X)-{\N^j}/{\mu_j})}_F^2,
    \end{aligned}\label{TNN-PRO}
\end{equation}
The close-form solution of this sub-problem is given as
\begin{align}\label{aGJ1}
\mathcal{Z}^{j+1} = \operatorname{t-SVT}_{1/\mu_j}(\mathcal{H}_k(\X^{j})-{\N^j}/{\mu_j})
\end{align}
via the order-$p+2$ t-SVT as stated in Theorem \ref{th.2}.

\textit{2) Updating $\X^{j+1}$}:
Taking the derivative in \eqref{subba2} with respect to $\X$, it gets the following linear system :
\begin{equation}\label{aaXJ1}
 (\lambda \Pomega +\mu_j \I) \X =\mathcal{H}_k^{*}(\mu_j \Z^{j+1}+\N^j)+\lambda \Pomega(\Y),
\end{equation}
where  $ \mathcal{H}_k^{*}$ is the Hermitian adjoint of $\mathcal{H}_k$. 
According to the definition of the Hermitian adjoint operator and Hankelization consistency $
\|\M\|_F = \left\|\mathcal{H}_k(\M)\right\|_F$, it is clear that $ \mathcal{H}_k^{*}$=$\mathcal{H}_k^{-1}$.
Then the close-form solution of this sub-problem is given as
\begin{equation}\label{aXJ1}
 \X =  (\mathcal{H}_k^{*}(\mu_j \Z^{j+1}+\N^j)+\lambda \Pomega(\Y))/(\lambda \Pomega +\mu_j \I),
\end{equation}
where the operator $/$ is simply the entry-wise tensor division. 

The whole optimization procedure for solving the proposed LRTC-TIDT model is summarized in Algorithm \ref{alg1}.

\begin{algorithm}[tbp]\vspace{-1mm}
\renewcommand{\algorithmicrequire}{ \textbf{Input}:}
\renewcommand{\algorithmicensure}{ \textbf{Output}:}
\caption{ADMM for solving the LRTC-TIDT  model}\label{alg1}
\begin{algorithmic}[1]
\REQUIRE observed tensor $\Pomega(\Y)$, $k$, $\lambda$.
\STATE Initialize $\X^0=\Pomega(\Y)$, $\G^0=\mathcal{H}_k(\X)$, $\N^0=\0$, $\mu_0=1e-6$.
\STATE \textbf{while} not converge \textbf{do}
\STATE \quad Update $\Z^{j+1}$ by \eqref{aGJ1};
\STATE \quad Update $\X^{j+1}$ by \eqref{aXJ1};
\STATE \quad Update multipliers $\N^{j+1}$  by \eqref{subba4} ;
\STATE \quad Let $\mu_{j+1}=1.1\mu_j$; $j = j +1$.
\STATE \textbf{end while}
\ENSURE imputed tensor $\hat{\X}=\X^{j+1}$.
\end{algorithmic}
\end{algorithm}

\subsection{Computational Complexity Analysis}

For Algorithm 1, the computational complexity in each iteration contains three parts, i.e., steps $3\sim5$. First, the time complexity for order-$d$ t-SVT in step 3 is $O(tk(n_1\cdots n_p)(n_1+\cdots+n_p)+tk^2n_1\cdots n_p)$, corresponding to the linear transform  and the matrix SVD, respectively \cite{qin2022low}. The steps 4 and 5 have the same complexity $O(tkn_1\cdots n_p)$ with only element-wise computation. In all, the pre-iteration computational complexity of Algorithm 1 is $O(tk(n_1\cdots n_p)(n_1+\cdots+n_p)+tk^2n_1\cdots n_p)$.

%%%事实上，对于p+1阶张量，CNNM的计算复杂度？MDT-Tucker的计算复杂度？提出方法的计算复杂度
%%%仿真数据测试时间

To demonstrate the computational efficiency of our proposed method, we conduct a time complexity analysis and comparison with two classical tensor completion methods that incorporate structural adjustments.  Considering a ($p$+1)-order multi-dimensional time series $\M \in \mathbb{R}^{n \times n \times \cdots \times n}$, the computational complexity per iteration is $O(n^{3p+3})$ for the CNNM model \cite{liu2022recovery} and $O(n^{2p+4})$  for the MDT-Tucker model \cite{yokota2018missing}. In contrast, our LRTC-TIDT model requires only $O(n^{p+3})$ per iteration. We use randomly generated tensors of size $a \times a \times a ( a = 10, 15, 20, \ldots, 50 )$ to empirically compare the computational time of the three methods. As shown in Fig.\ref{tab:time}, the proposed LRTC-TIDT model exhibits significantly lower computational cost than the two baseline methods, clearly demonstrating its superior efficiency.
%CNNM的高计算消耗来源于超大规模矩阵的奇异值分解，而MDT-Tucker来源于大量复制矩阵的运算以及高阶张量的Tucker分解。
\begin{figure}[t]
\centering
\vspace{-0.3cm}
\includegraphics[width=0.6\linewidth]{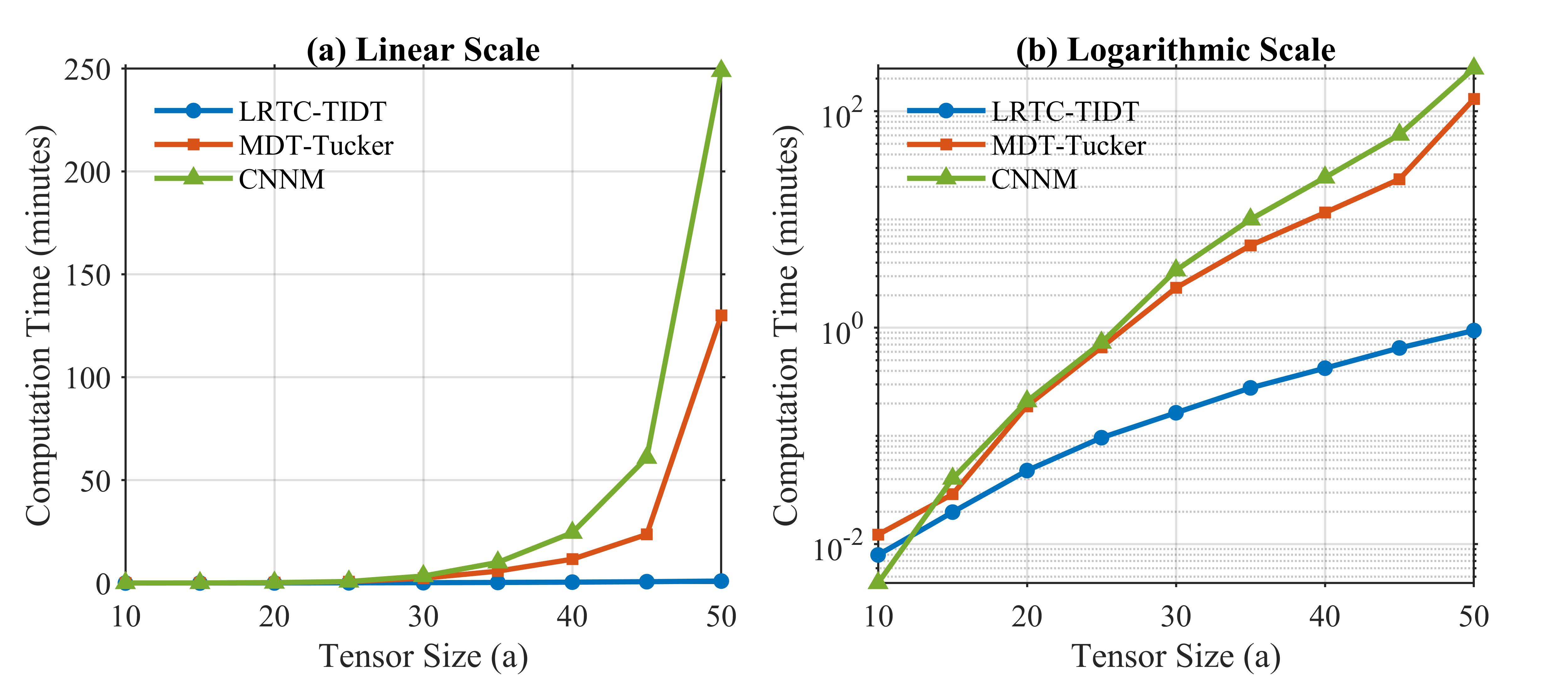}
\vspace{-0.2cm}
\caption{Line plots of computation time on linear and logarithmic scales}\label{tab:time}
\vspace{-0.3cm}
\end{figure}

\section{Experimental Results}\label{sec:experiments}
In this section, we first validate the main theoretical results through experiments on synthetic tensors, and then demonstrate the effectiveness of the proposed method in real-world applications.
The non-random sampling patterns used for testing are illustrated in Fig.\ref{sampling_pattern23}. 

\begin{figure}[!ht]
\centering
\includegraphics[width=0.6\linewidth]{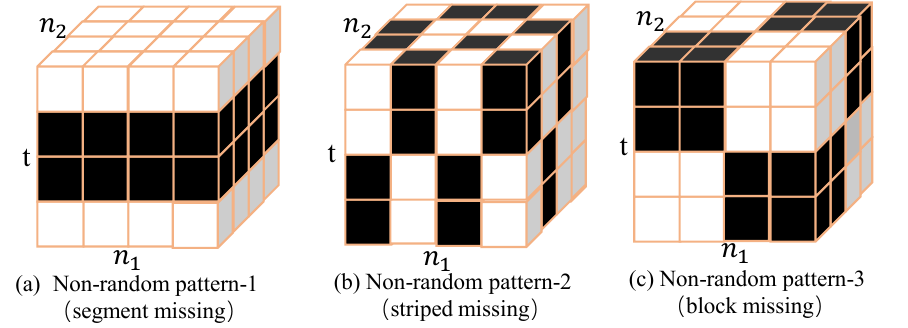}
\vspace{-0.2cm}
\caption{Different non-random sampling patterns in multidimensional time series, where black squares represent 0 values (not observed)
and  white blocks represent 1 values (observed) }\label{sampling_pattern23}
\vspace{-0.3cm}
\end{figure}

\subsection{Simulations for Exact Recovery Theory}

To validate the proposed exact recovery theory (Theorem \ref{thm: exact non-random tensor completion}) for LRTC-TIDT model (\ref{LRTC-TIDT}), we employ the procedure to generate  tensors with prescribed  temporal Hankel tensor rank  via 
$\M(i_t,i_1,i_2)=\sum_{l=1}^{a_{i_1i_2}}sin(\frac{2\pi li_t}{t}),$
where $a_{i_1i_2} \overset{\text{i.i.d.}}{\sim} \mathrm{Unif}\big(\{1, 2, \ldots, a_{\max}\}\big)$ for $i_1\cdot i_2< n^2$ and $a_{nn}=a_{max}$.
The generated tensor $\M \in \mathbb{R}^{t \times n \times n} $ satisfies $r=\operatorname{rank}_{\operatorname{t-SVD}}(\mathcal{H}_t(\M)) \leq 2a_{\max}$, since each frontal slice of $\mathcal{H}_t(\mathcal{M})\in \mathbb{R}^{t \times t \times n \times n} $ is a left circulant matrix \cite{karner2003spectral}  with rank at most $2a_{\max}$ \cite{liu2022recovery}.  
We set $t=n=21$, $r = 2, 4, \ldots, 20$, and $\rho(\Omega) = 1/21, 2/21, \ldots, 20/21$.
For each $(r, \rho(\Omega))$ pair, we simulate 50 test instances and consider a trial successful if the average RMSE is below 0.01.
The phase transition diagrams under three non-random sampling patterns are shown in Fig.\ref{fig.xiangbian}(a–c). The boundary between successful and failed recovery aligns well with the theoretical relationship between the deterministic sampling rate $\rho(\Omega)$ and the temporal Hankel rank $r$. Specifically, a lower temporal Hankel rank $r$ allows for exact recovery with a smaller  $\rho(\Omega)$, which verifies the theoretical recovery guarantee (Theorem \ref{thm: exact non-random tensor completion}). Furthermore, among the three patterns, the exact recovery region of non-random sampling pattern-1 is the smallest, indicating that segment missing poses greater challenges in multidimensional time series recovery. 
In addition, we construct a phase transition diagram under random sampling using a similar strategy (Fig.\ref{fig.xiangbian}(d)), indicating that a lower temporal Hankel rank likewise benefits recovery under random sampling.

\begin{figure}[!htbp]
\renewcommand{\arraystretch}{0.5}
\setlength\tabcolsep{0.5pt}
\centering
\vspace{-0.2cm}
\begin{tabular}{ccccccc}
\centering
\includegraphics[width=32mm, height = 30.3mm]{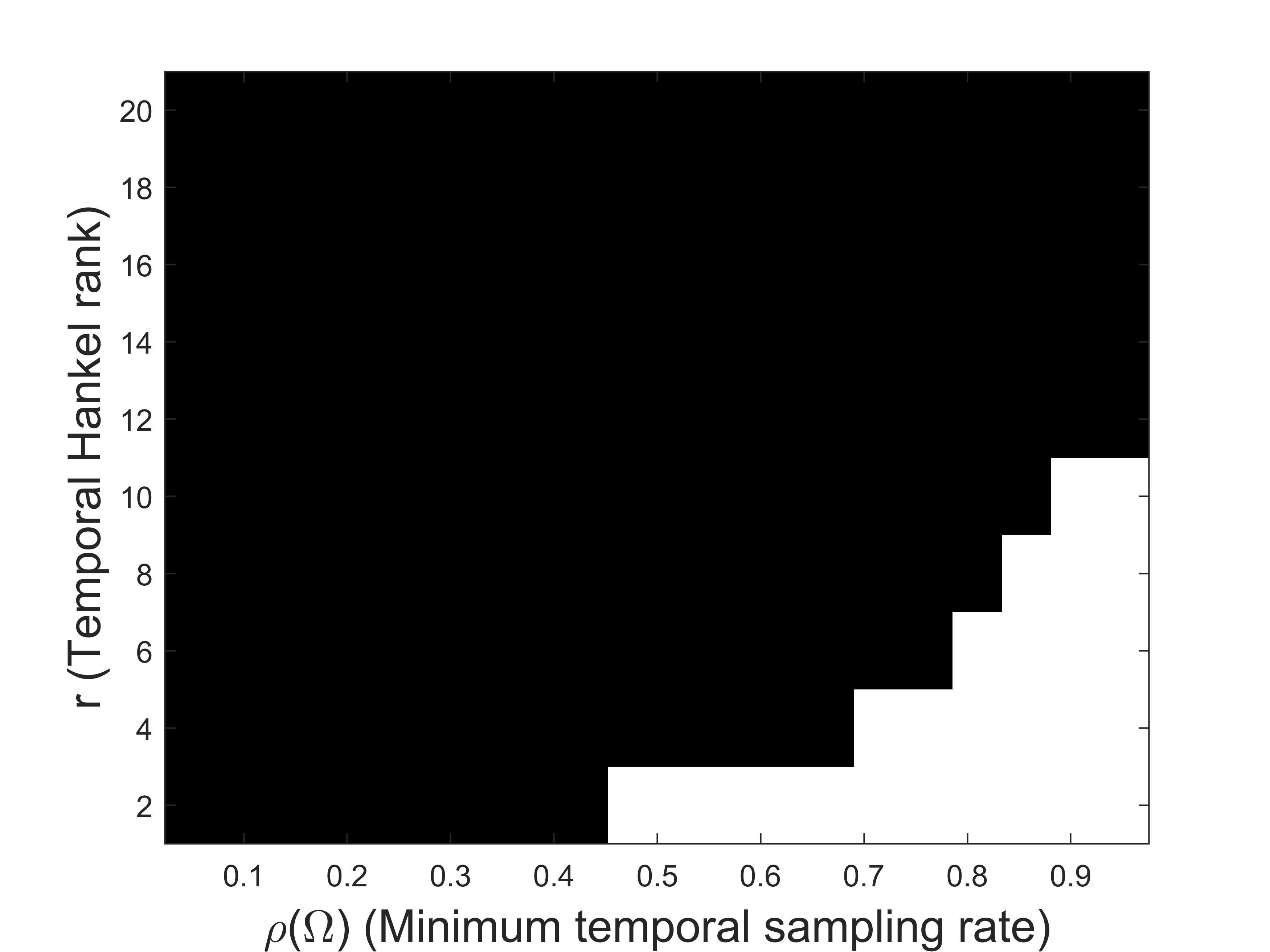}&
\includegraphics[width=32mm, height = 30.3mm]{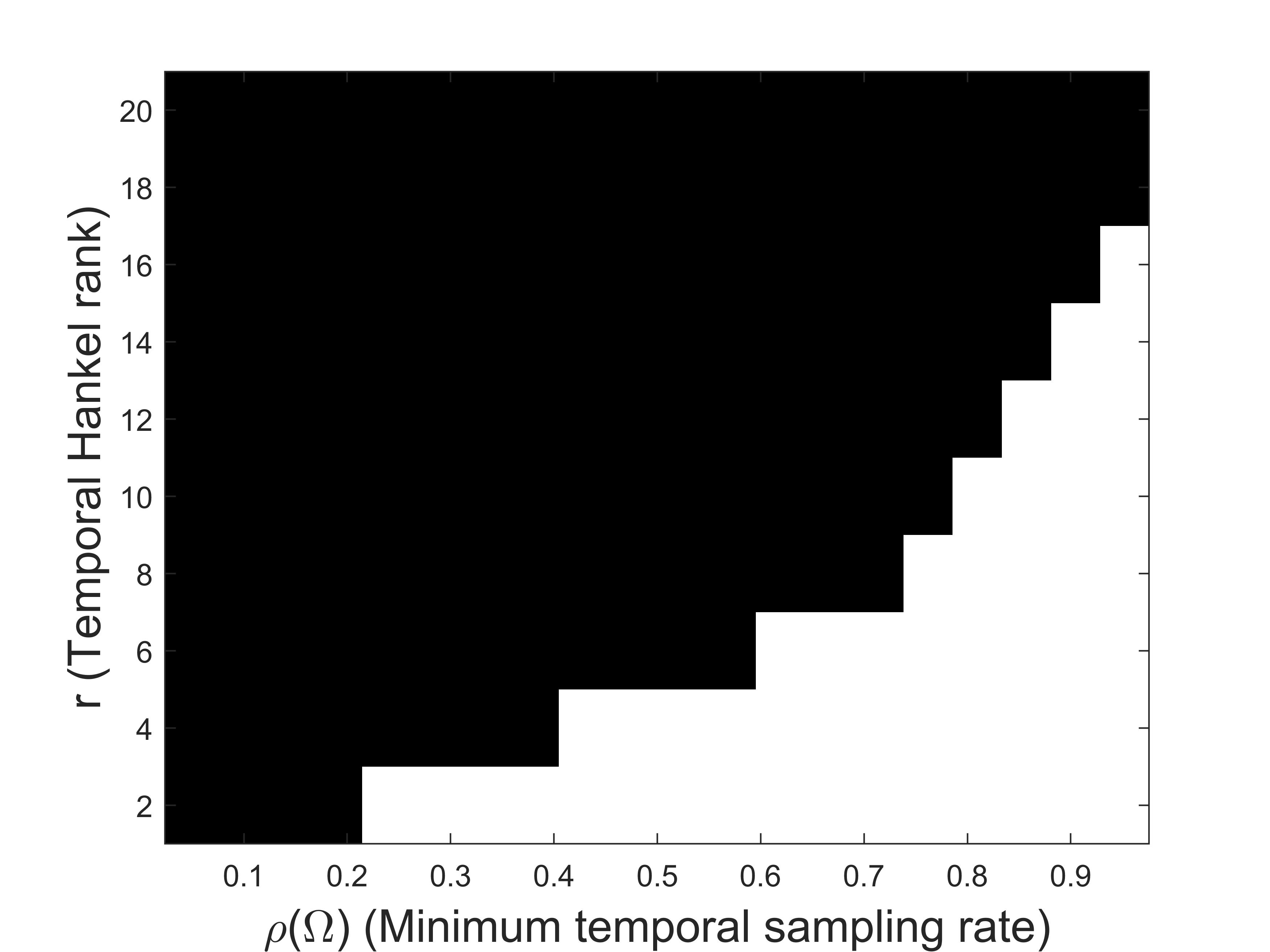}&
\includegraphics[width=32mm, height = 30.3mm]{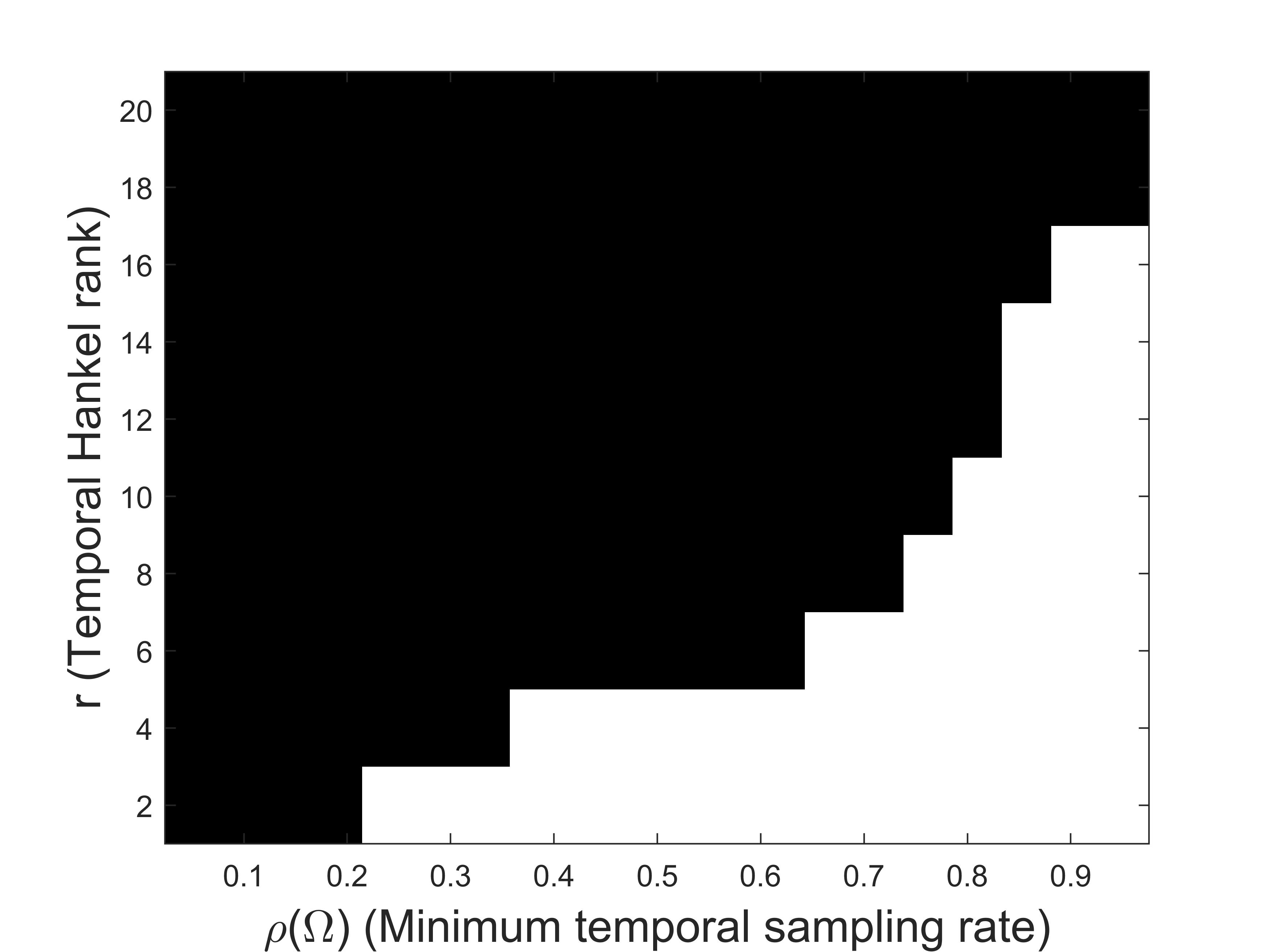}&
\includegraphics[width=32mm, height = 30.3mm]{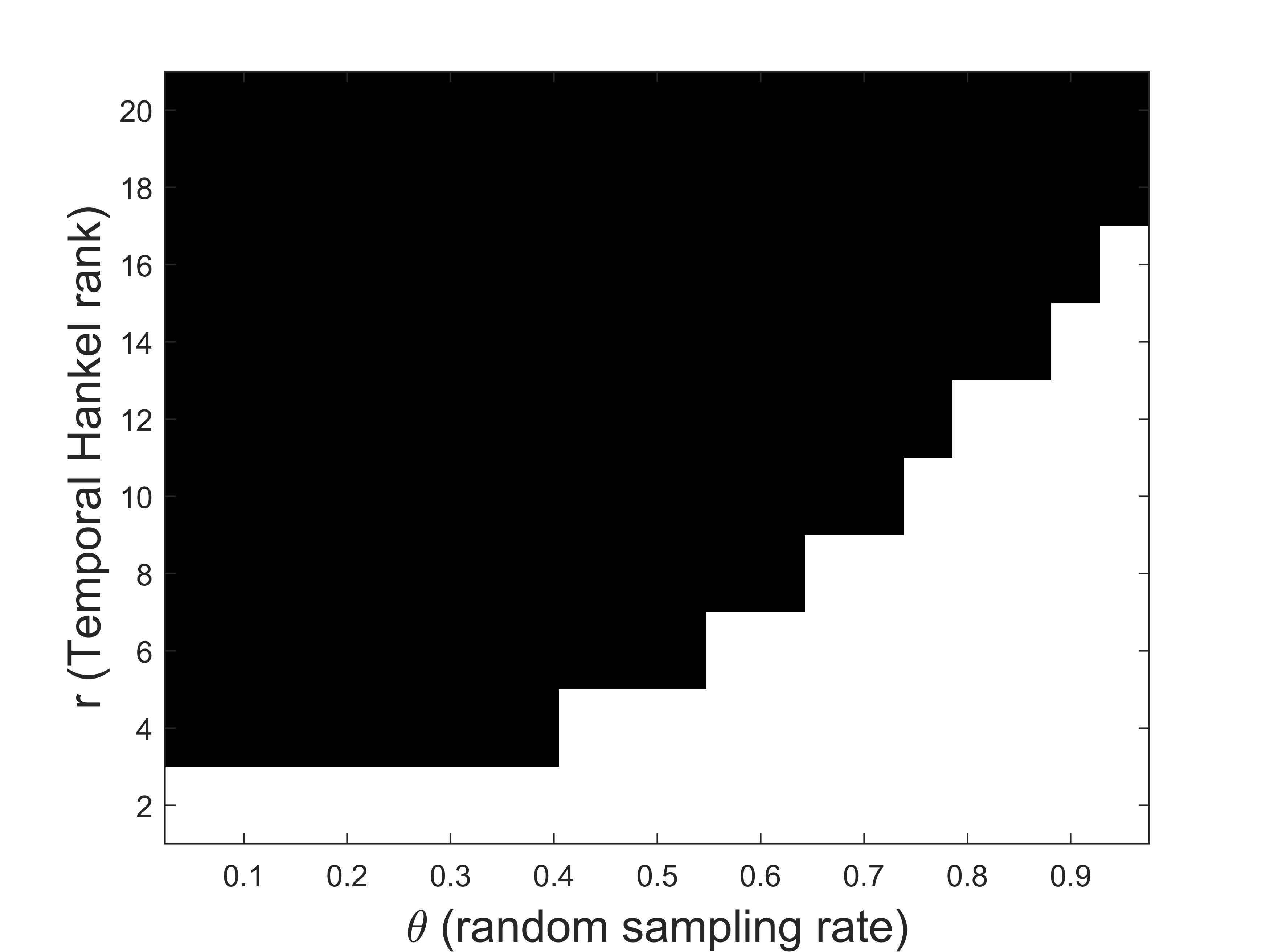}\\

\scriptsize \textbf{(a) Non-random }& \scriptsize \textbf{(b)  Non-random  }&\scriptsize \textbf{(c)  Non-random }& \scriptsize \textbf{(d)  random }\\
\scriptsize \textbf{ pattern-1}& \scriptsize \textbf{ pattern-2}&
\scriptsize \textbf{  pattern-3}& \scriptsize \textbf{pattern}\\
\end{tabular}
\caption{Phase transition diagrams for validating the exact recovery theory. The white and black regions indicate “success” and “failure,” respectively.}\label{fig.xiangbian}
\vspace{-0.7cm}
\end{figure}

\subsection{Baseline Methods}

To evaluate the proposed LRTC-TIDT method, we compare it
with the following baselines for multidimensional time series recovery:
 \textit{High-accuracy Low-Rank Tensor Completion} (HaLRTC)\cite{liu2012tensor},
 \textit{Tensor Nuclear Norm} (TNN)\cite{zhang2016exact},
 \textit{Bayesian CP factorization} (BCPF)\cite{zhao2015bayesian1}, 
 \textit{Tensor Correlated Total Variation} (TCTV)\cite{wang2023guaranteed},
 \textit{Multiway Delay-embedding Transform with  Tucker decomposition} (MDT -Tucker)
\cite{yokota2018missing}, 
and \textit{Convolution Nuclear Norm Minimization} (CNNM) \cite{liu2022recovery}.
Notably, the first three models rely solely on low-rank modeling of multidimensional time series. In contrast, the latter three models take low-rank properties into account within transformed domains. Specifically, TCTV utilizes low-rank characteristics in the gradient domain, MDT-Tucker considers low-rank properties in the delay-embedding transformed domain, and CNNM addresses low-rank features in the convolutional transformed domain.

\subsection{Applications to Network Flow Reconstruction}
\label{Applications to Network Traffic Data}

\begin{table*}[!htbp]
\renewcommand{\arraystretch}{1.4}
\setlength\tabcolsep{3pt} 
\footnotesize
\vspace{-0.3cm}
\caption{Performance comparison (in MAE/RMSE) of LRTC-TIDT and other baselines with different non-random sampling scenarios. The best is highlighted in \textbf{bold} and the second best is highlighted with \underline{underline}.}
 \vspace{-0.2cm}
\label{Abilene_nonoise}
\centering
\begin{tabular}{l|cccc|cc|c} 
     \hline
    \multirow{2}{*}{Method} & 
    \multicolumn{4}{c|}{Pattern-1} & 
    \multicolumn{2}{c|}{Pattern-2} & 
    Pattern-3 \\
    \cline{2-8}
    & 20\% & 40\% & 60\% & 80\% & 30\% & 70\% & 50\% \\
     \hline
     HaLRTC & 2.79/5.43 & 2.67/5.08 & 2.69/5.14 & 2.68/5.07 & 2.57/5.45 & 2.12/3.99 & 2.56/5.50\\
     TNN & 2.79/5.43 & 2.67/5.08 & 2.69/5.14 & 2.68/5.07 & 2.41/5.29 & 1.83/3.57 & 2.27/4.70 \\
     BCPF & 2.79/5.43 & 2.67/5.08 & 2.69/5.14 & 2.68/5.07 & 2.35/5.26 & 1.67/3.50 & 2.15/6.26 \\   
     TCTV & 2.45/4.59 & 2.40/4.10 & 2.35/4.11 & 2.24/3.95 & 2.37/5.02 & 1.74/3.07 & 2.16/4.07 \\    
     MDT-Tucker & 1.04/2.42 & \underline{0.77}/1.71 & \textbf{0.64}/\underline{1.52} & \underline{0.68}/1.41 & \underline{1.77}/\underline{4.36} & 1.18/2.33 & \underline{1.36}/\underline{3.45} \\ 
     CNNM & \underline{1.04}/\underline{2.28} & 0.78/\underline{1.61} & 0.70/1.67 & 0.69/\underline{1.40} & 2.08/4.76 & \underline{0.91}/\underline{1.76} & 1.65/3.91 \\ 
    LRTC-TIDT & \textbf{0.98}/\textbf{1.94} & \textbf{0.73}/\textbf{1.40} & \underline{0.69}/\textbf{1.23} & \textbf{0.68}/\textbf{1.24} & \textbf{1.46}/\textbf{2.99} & \textbf{0.71}/\textbf{1.35} & \textbf{1.13}/\textbf{2.53} \\ 
     \hline
\end{tabular}
\vspace{-0.1cm}
\end{table*}

\begin{figure}[!htbp]
\renewcommand{\arraystretch}{0.5}
\setlength\tabcolsep{0.5pt}
\centering
\vspace{-0.1cm}
\begin{tabular}{ccccccc}
\centering

\includegraphics[width=29.3mm, height = 28.3mm]{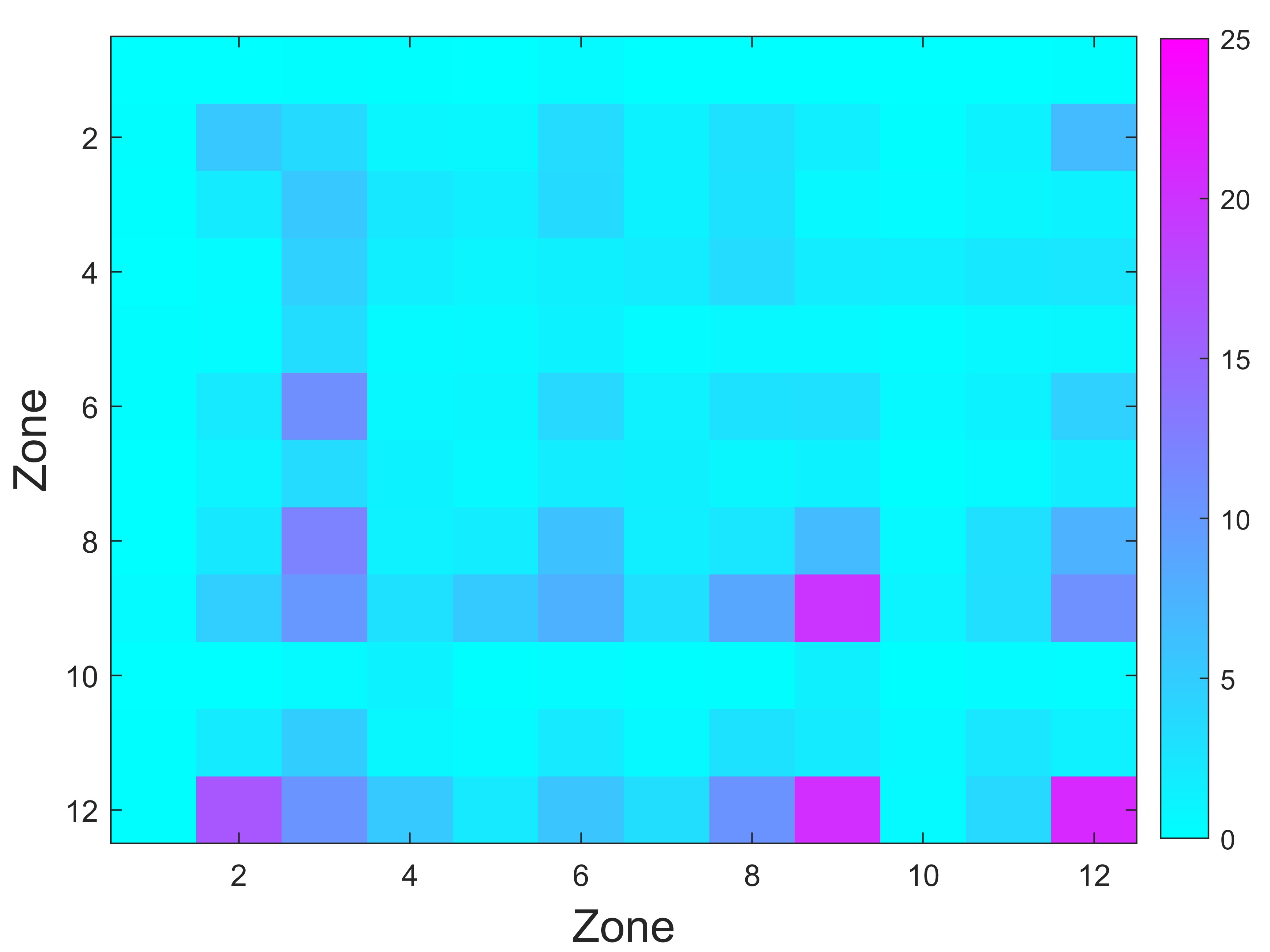}&
\includegraphics[width=29.3mm, height = 28.3mm]{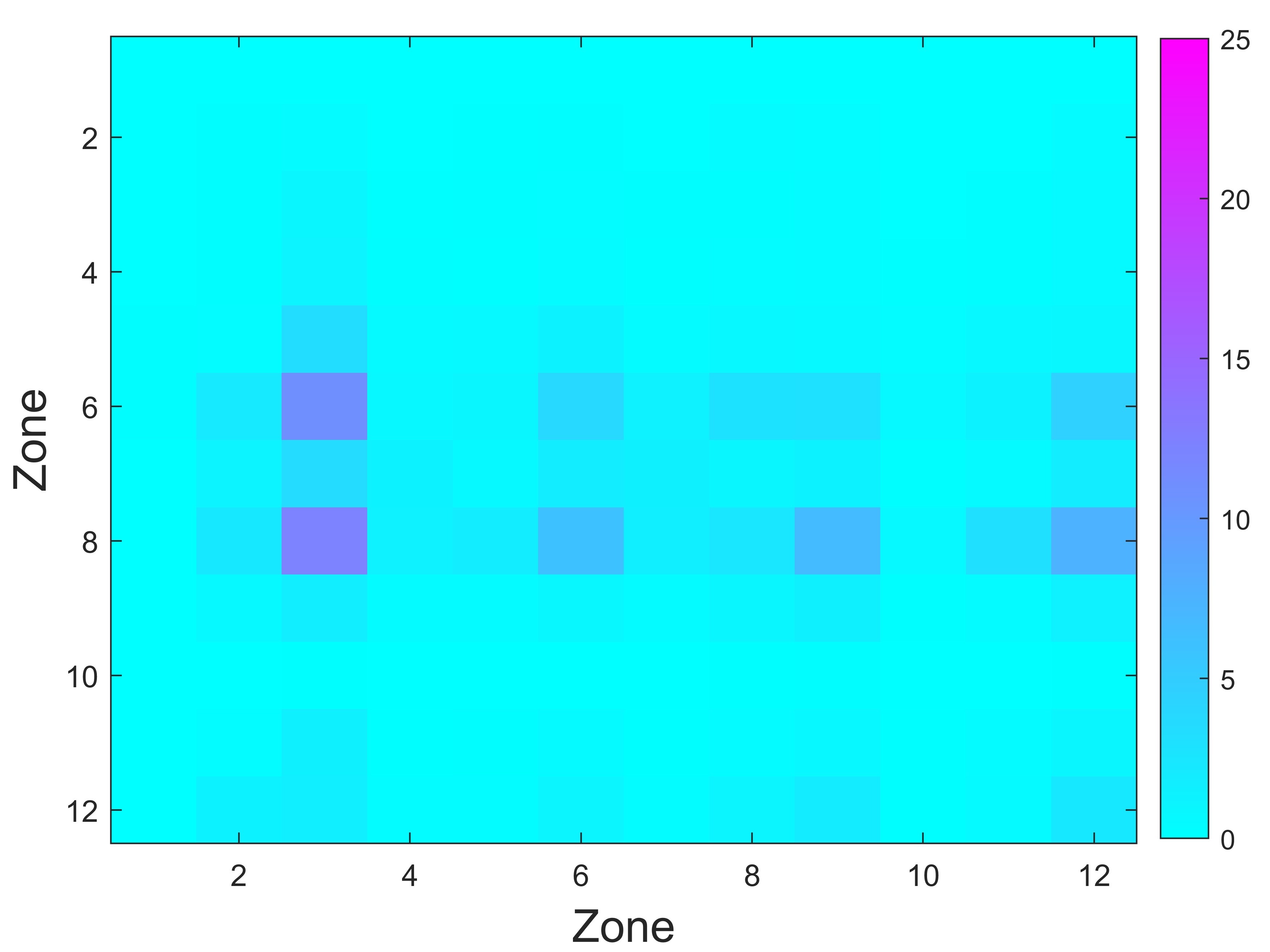}&
\includegraphics[width=29.3mm, height = 28.3mm]{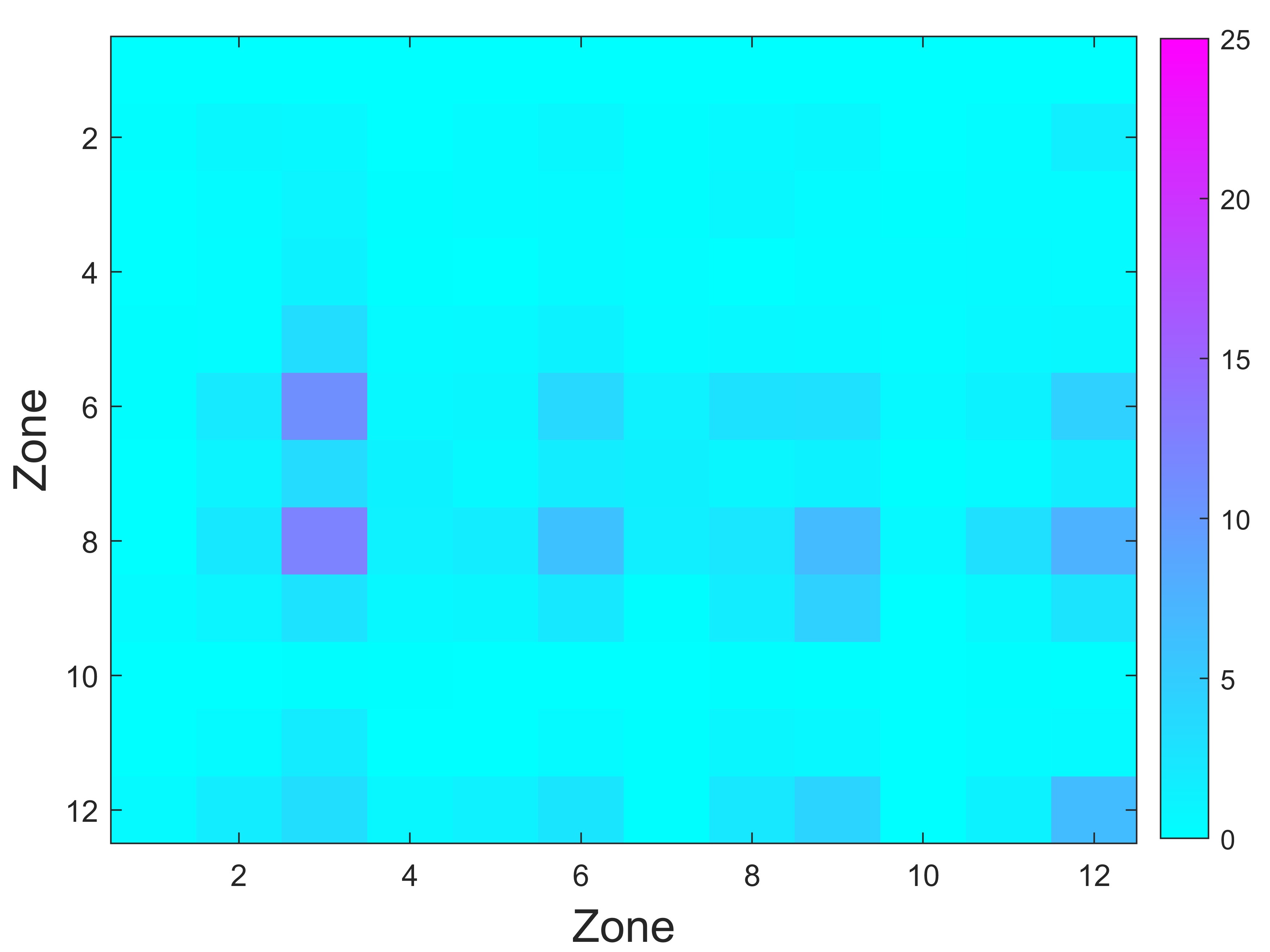}&
\includegraphics[width=29.3mm, height = 28.3mm]{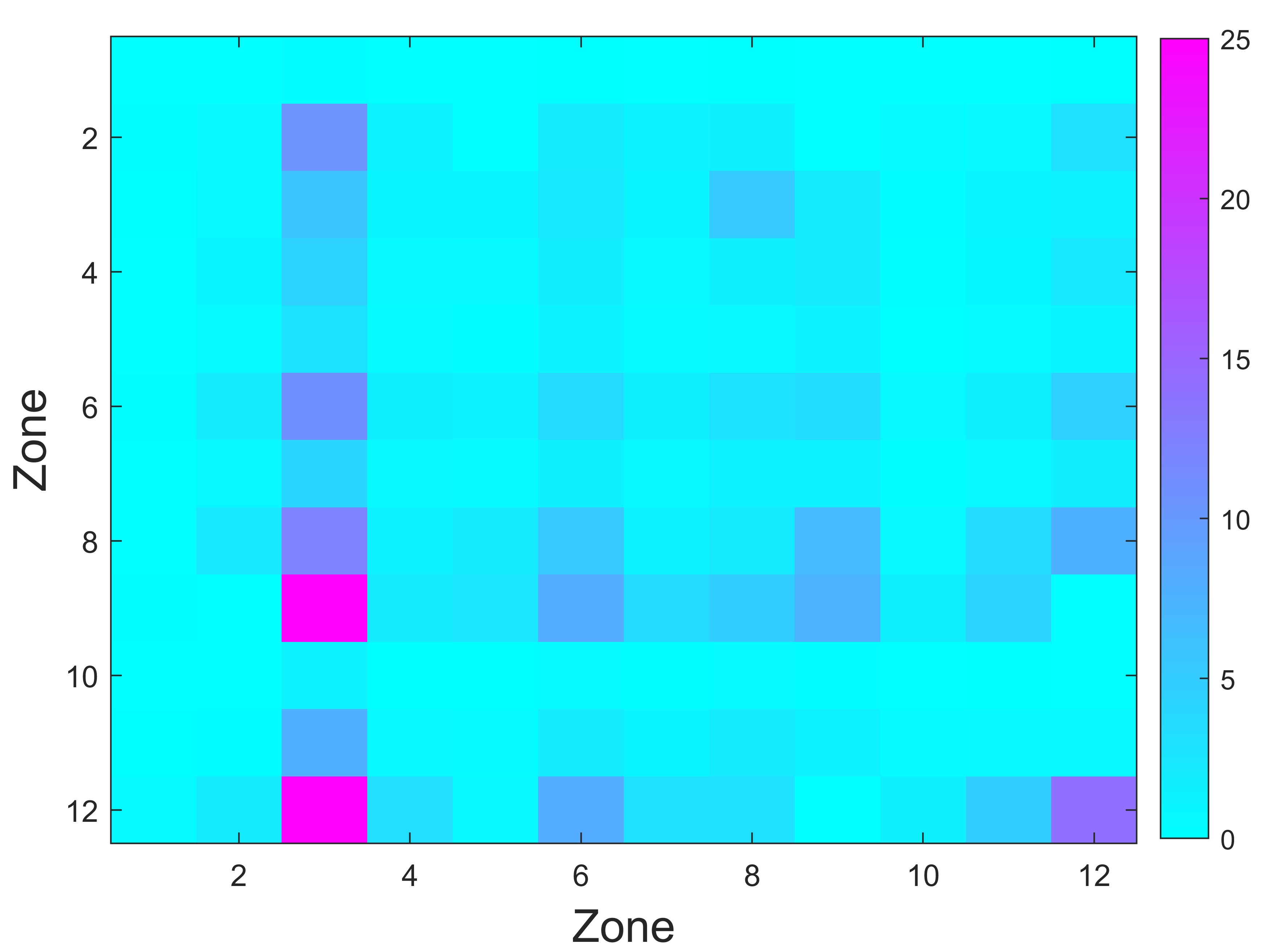}\\
\scriptsize \textbf{Ground truth}& \scriptsize \textbf{HaLRTC} & \scriptsize \textbf{TNN} & \scriptsize \textbf{BCPF} \\

\includegraphics[width=29.3mm, height = 28.3mm]{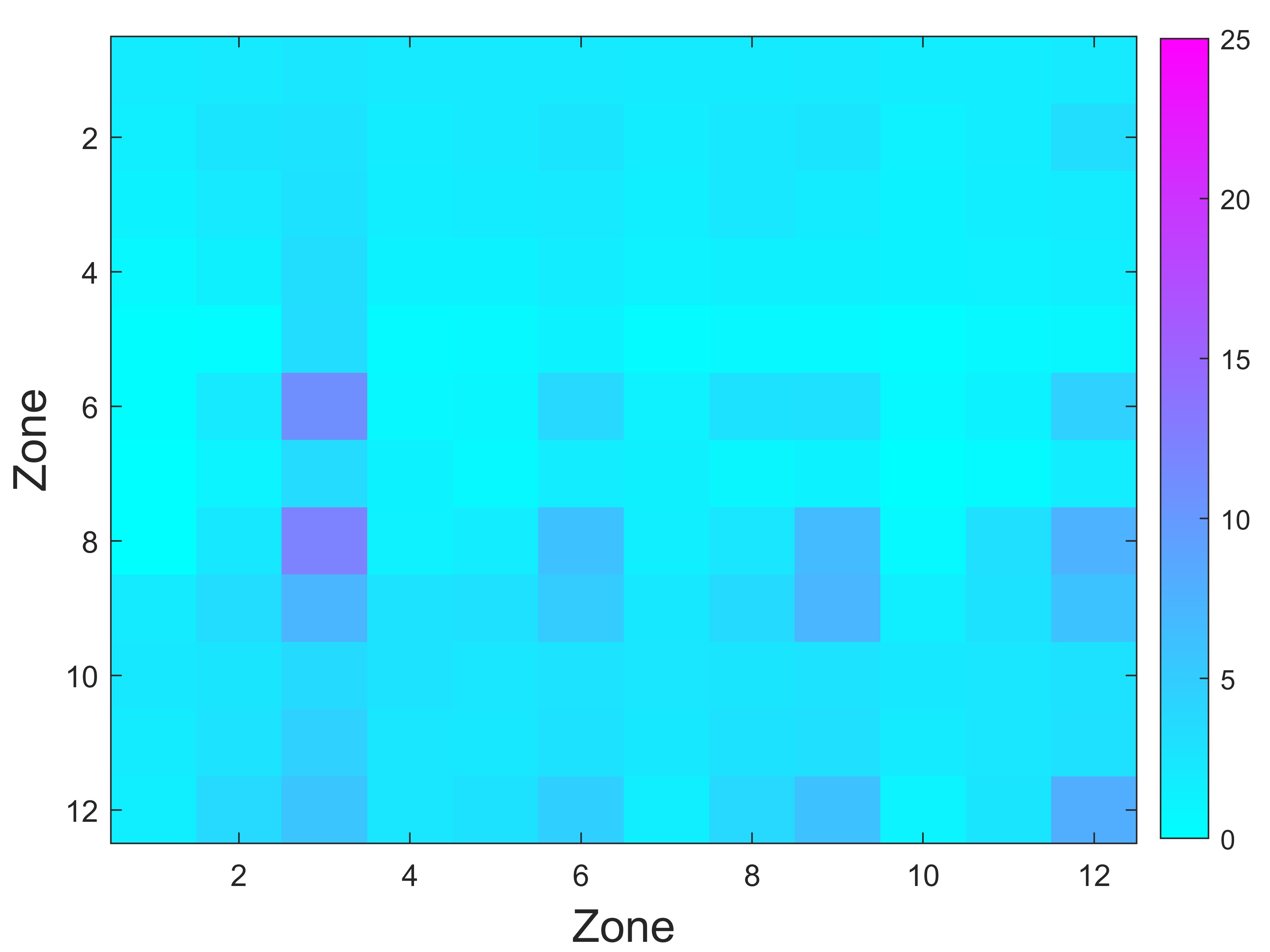}&
\includegraphics[width=29.3mm, height = 28.3mm]{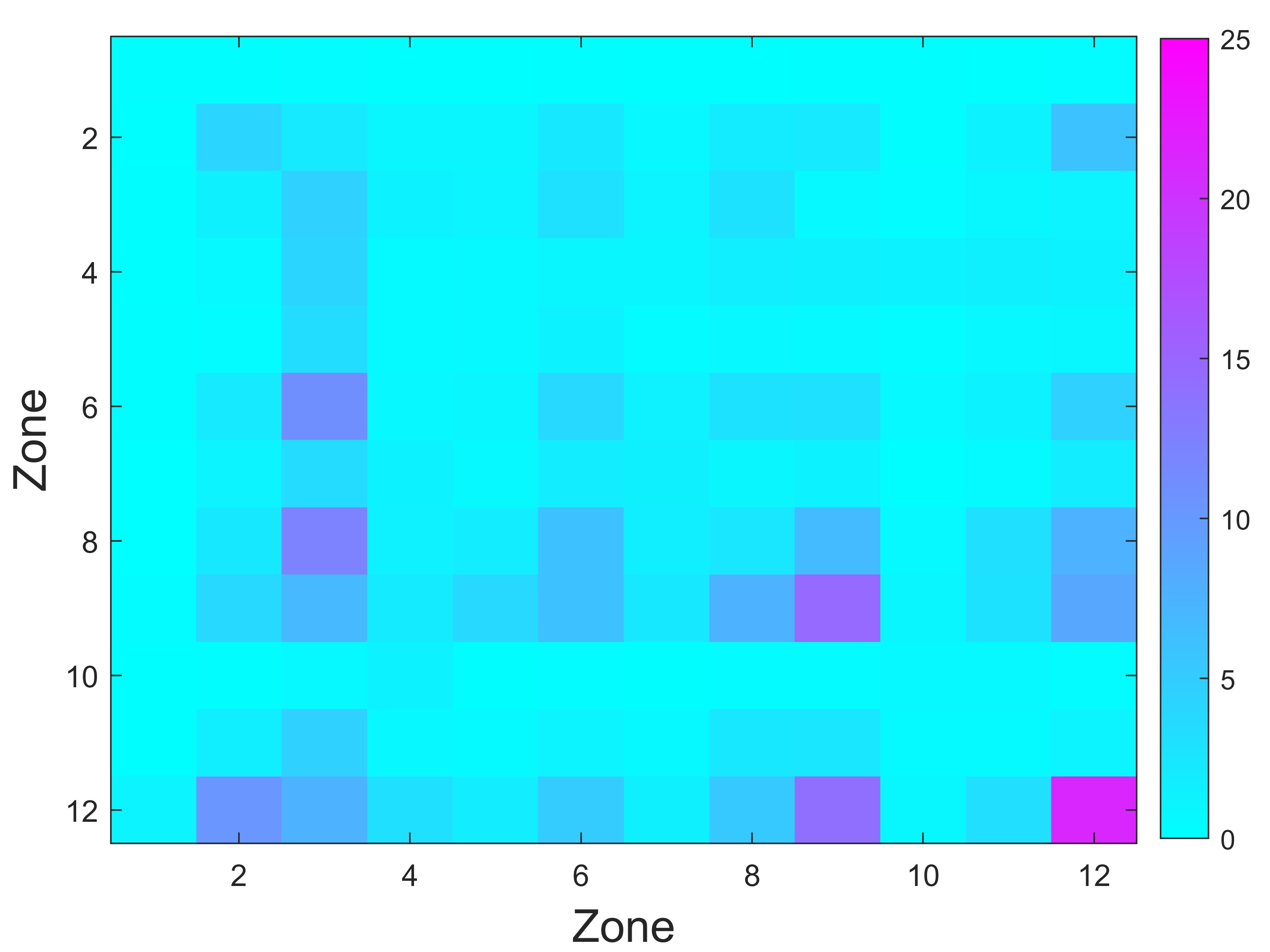}&
\includegraphics[width=29.3mm, height = 28.3mm]{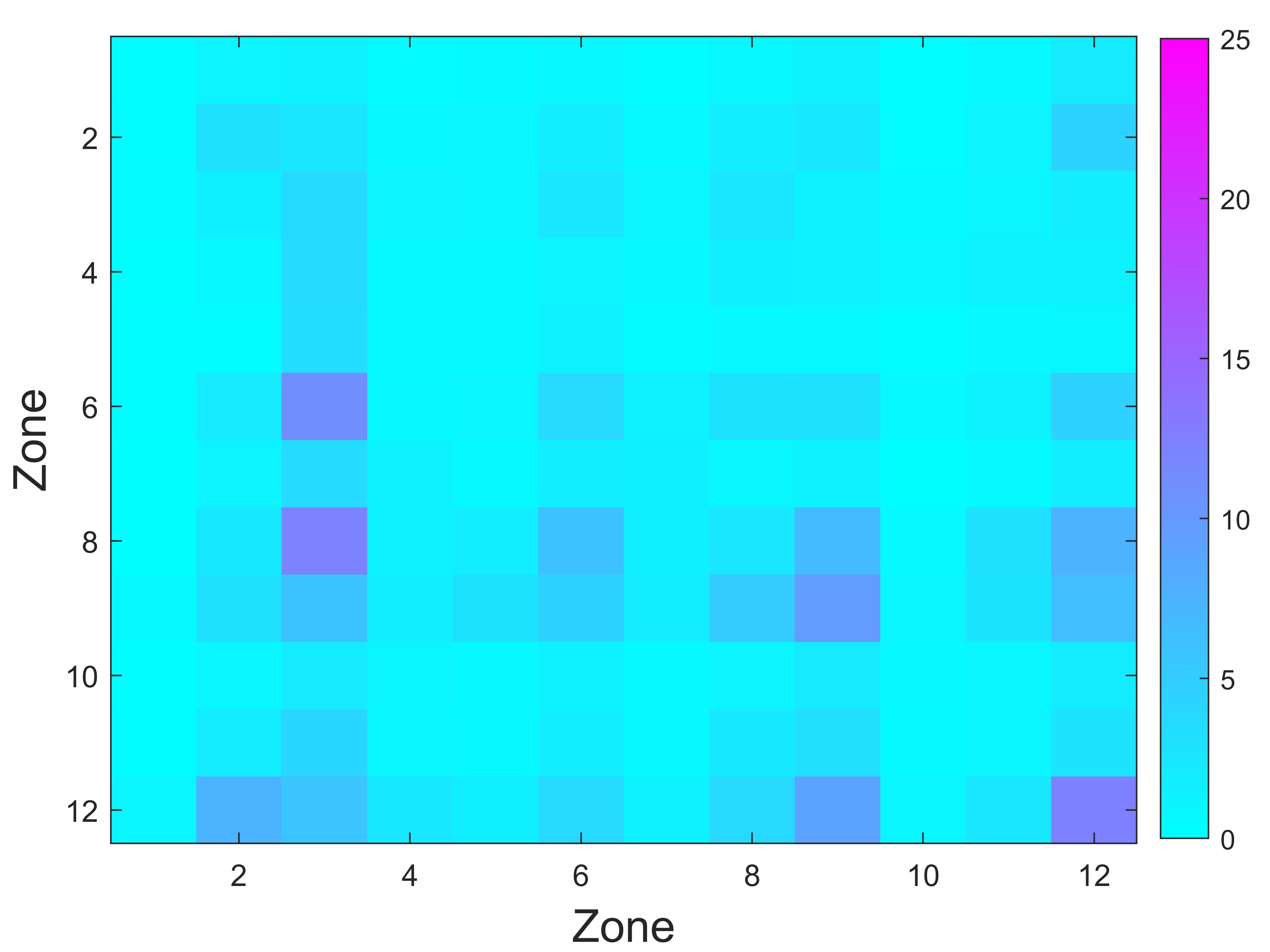}&
\includegraphics[width=29.3mm, height = 28.3mm]{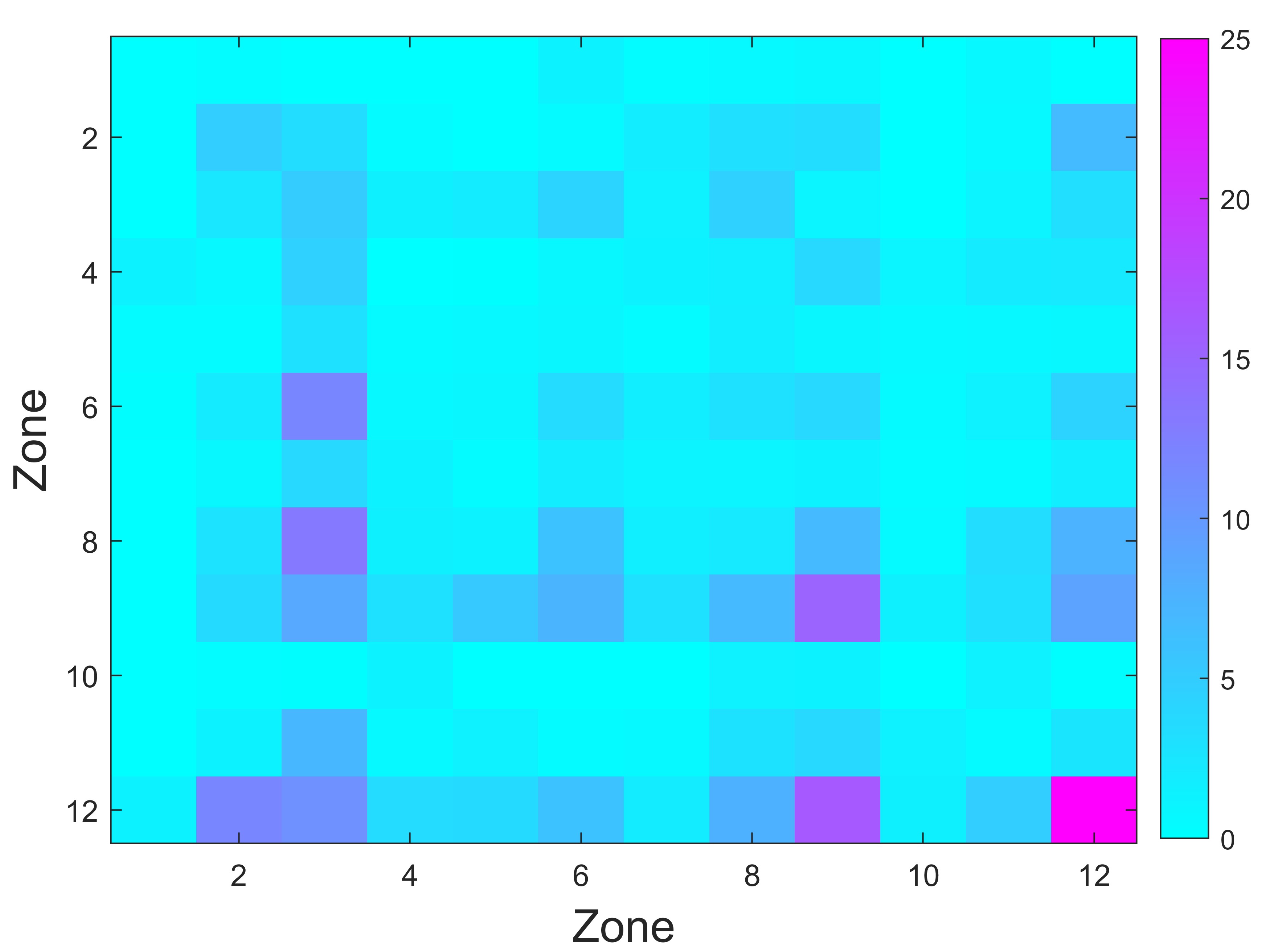}\\
 \scriptsize \textbf{TCTV} &
\scriptsize \textbf{MDT-Tucker}& \scriptsize \textbf{CNNM} & \scriptsize \textbf{LRTC-TIDT}\\
\end{tabular}
\caption{The recovery results of different methods at the 95th time sampling point of Abilene under sampling pattern-3 with a 50\% missing rate.}\label{fig.Abilene95}
\vspace{-0.5cm}
\end{figure}

Network flow data consist of a sequence of matrices recording the volumes of data exchanged between origin–destination (OD) pairs, which are widely used in computer network analysis.
However, it is often compromised by missing values due to hardware or software failures. In subsequent experiments, we apply our method to estimate network flow data from incomplete measurements under various non-random sampling scenarios.
The  network flow dataset we use is Abilene\footnote{\url{http://abilene.internet2.edu/observatory/data-collections.html}}, which is recorded from 12:00 AM to 5:00 PM on March 1, 2004, with a temporal resolution of 5 minutes. We structure this data into a tensor with dimensions 204 × 12 × 12, where the first mode represents 204 time intervals, the second mode denotes 12 source routers, and the third mode signifies 12 destination routers.

Based on the three different non-random sampling patterns illustrated in Fig.\ref{sampling_pattern23}(a, b, c), we establish seven non-random sampling scenarios (  Pattern-1 20\%, Pattern-1 40\%, Pattern-1 60\%,  Pattern-1 80\%, Pattern-2 30\%, Pattern-2 70\%, Pattern-3 50\%). For the proposed LRTC-TIDT model, we use the parameters 
$k=50$ and $\lambda= 1e10$ for all scenarios in this task. 
Table \ref{Abilene_nonoise} summarizes the MAE and RMSE values of the recovery results for various methods applied to the Abilene dataset without noise. As shown in Table \ref{Abilene_nonoise}, the proposed LRTC-TIDT method outperforms other baseline methods in most sampling cases, achieving nearly the lowest MAE and RMSE values. 
Under sampling pattern-1, HaLRTC, TNN, and BCPF yield identical all-zero imputations, as they rely solely on low-rank modeling. In contrast, TCTV, MDT-Tucker, and CNNM exploit low-rankness in transformed domains to capture richer structures, achieving moderate recovery but still lagging behind LRTC-TIDT. For sampling pattern-2 and pattern-3, all models show some recovery capability, yet those incorporating transform-domain low-rankness consistently outperform the pure low-rank approaches.
We illustrate the case of a 50\% sampling rate under sampling pattern-3 by showing all OD pairs at the 95th time point, corresponding to a 12×12 network flow matrix. LRTC-TIDT achieves the closest recovery to the ground truth, as shown in Fig.~\ref{fig.Abilene95}.

\subsection{Applications to  Urban Traffic Estimation}

Urban traffic datasets (e.g., population mobility data) are often affected not only by missing entries but also by noise, which typically arises from inherent GPS positioning errors and interference during data communication or transmission. The objective of this section is to estimate the unobserved urban traffic data from partially observed and noisy measurements.
In this study, we employ the \textit{New York City Yellow Taxi trip} (NYC-yellow) dataset\footnote{\url{https://www1.nyc.gov/site/tlc/about/tlc-trip-record-data.page}}. 
Specifically, we consider 69 regions in Manhattan as pickup and drop-off zones and aggregate the daily taxi trip counts over the first 60 
days of 2021, resulting in a tensor of size 60 $\times$ 69 $\times$ 69.

\begin{table}[!htbp]
\renewcommand{\arraystretch}{1.4}
\setlength\tabcolsep{3.5pt}
\footnotesize
 \vspace{-0.3cm}
  \caption{Performance comparison (in MAE/RMSE) of LRTC-TIDT and other baselines under different non-random sampling scenarios with 30\% noisy observations and 70\% missing entries.
  }\label{tab: nyc_noise}
  \centering
  \vspace{-0.2cm}
\begin{tabular}{l|ccccccc}
    \hline
    Pattern/Method & HaLRTC & TNN & BCPF & TCTV & MDT-Tucker & CNNM & LRTC-TIDT \\
    \hline
    Pattern-1 & 8.65/22.27 & 8.65/22.27 & 8.65/22.27 & 7.48/19.96 & \underline{3.73}/\underline{9.05} & 4.93/13.42 & \textbf{3.58}/\textbf{8.96} \\
    Pattern-2 & 7.39/19.74 & 6.74/18.53 & 6.53/18.40 & 5.22/14.99 & 5.88/14.76 & \underline{3.96}/\underline{11.06} & \textbf{3.71}/\textbf{7.89} \\
    Pattern-3 & 7.63/20.31 & 6.09/17.42 & 5.99/17.14 & 5.70/16.36 & 4.65/\underline{11.35} & \underline{4.51}/13.10 & \textbf{4.31}/\textbf{8.27} \\
    \hline
\end{tabular}
\vspace{-0.2cm}
\end{table}

\begin{figure}[!htbp]
\renewcommand{\arraystretch}{0.5}
\setlength\tabcolsep{0.5pt}
\centering
%\vspace{-0.2cm}
\begin{tabular}{ccccccc}
\centering
\includegraphics[width=32.3mm, height = 30.3mm]{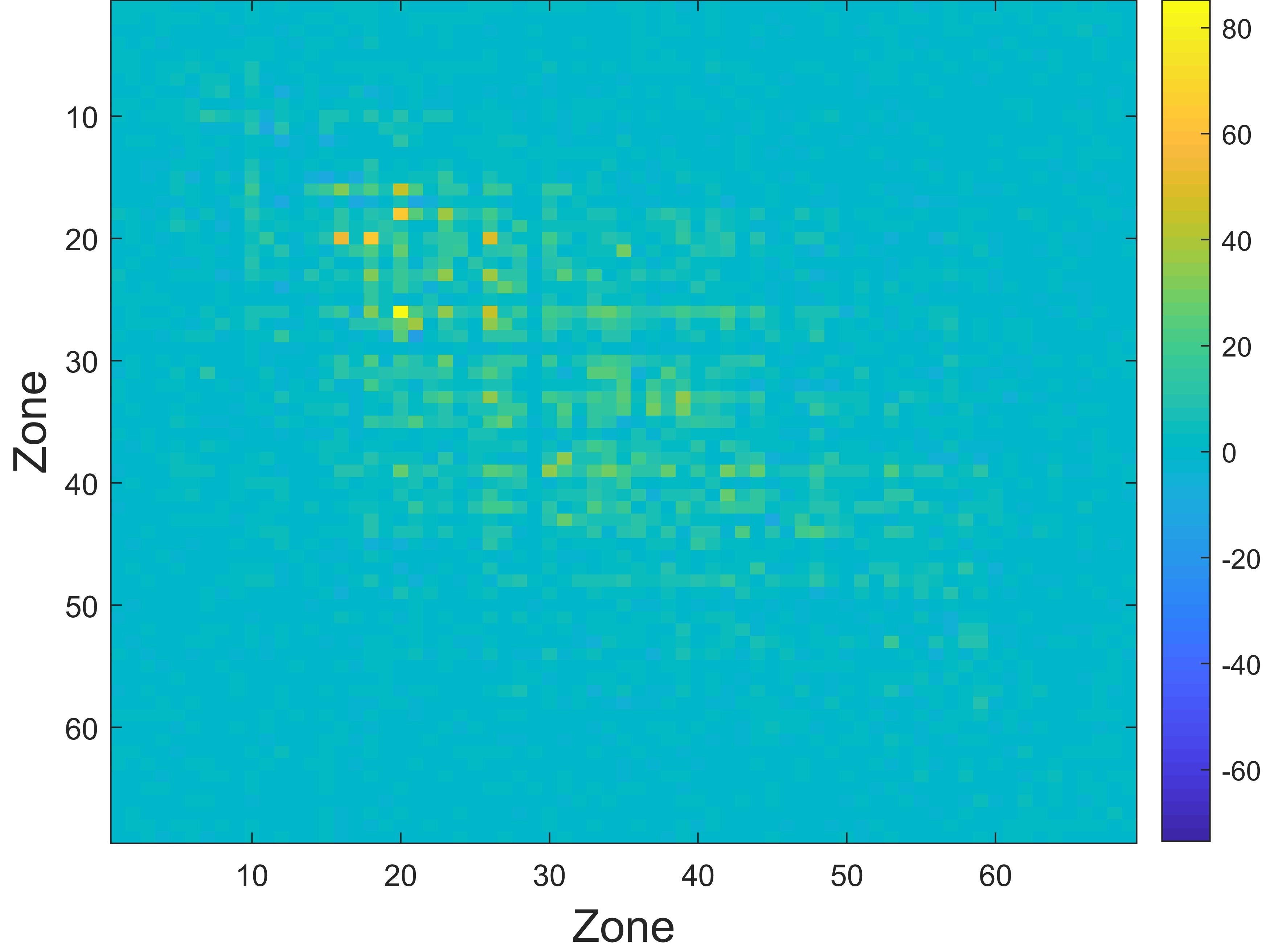}&
\includegraphics[width=32.3mm, height = 30.3mm]{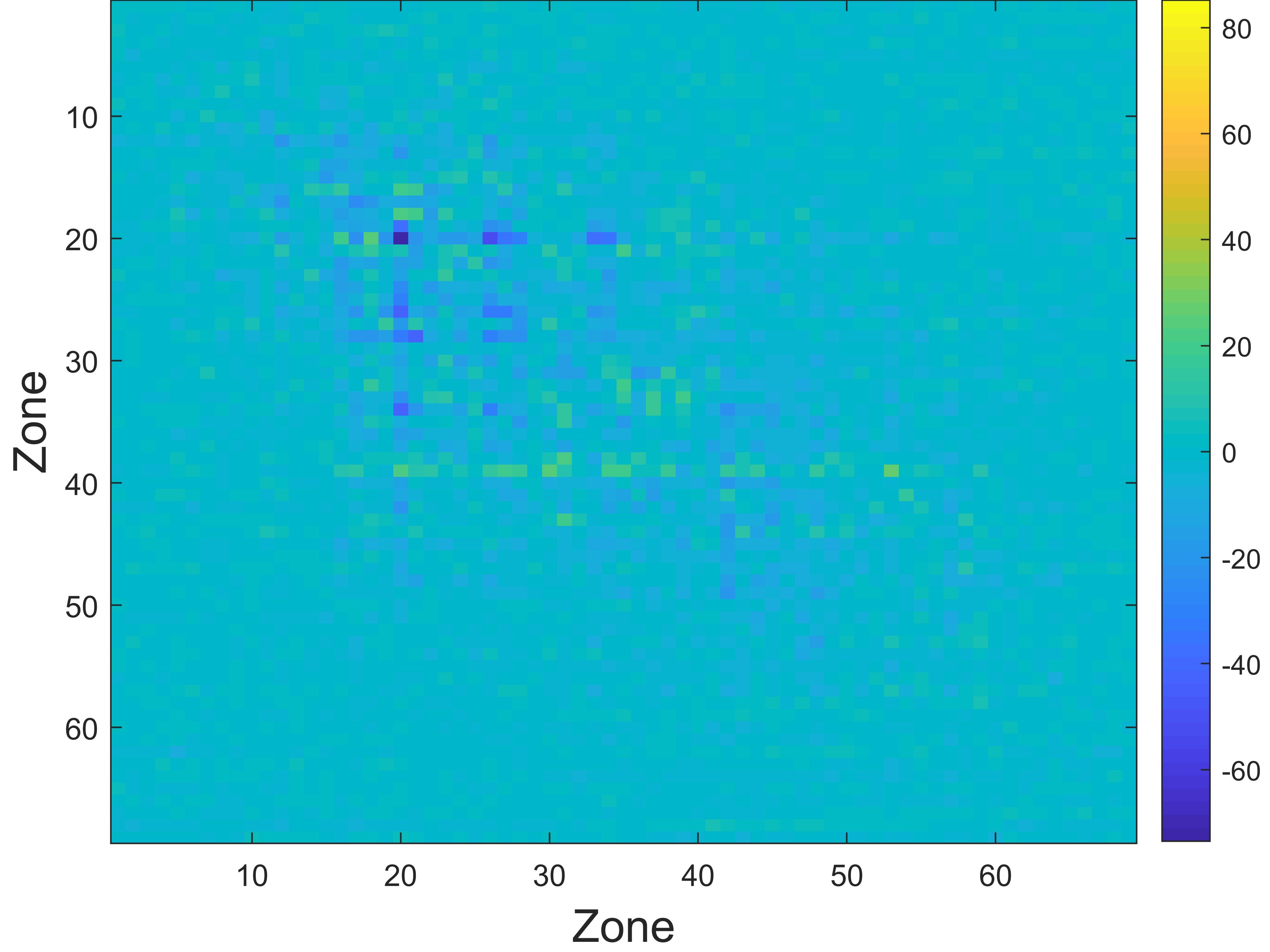}&
\includegraphics[width=32.3mm, height = 30.3mm]{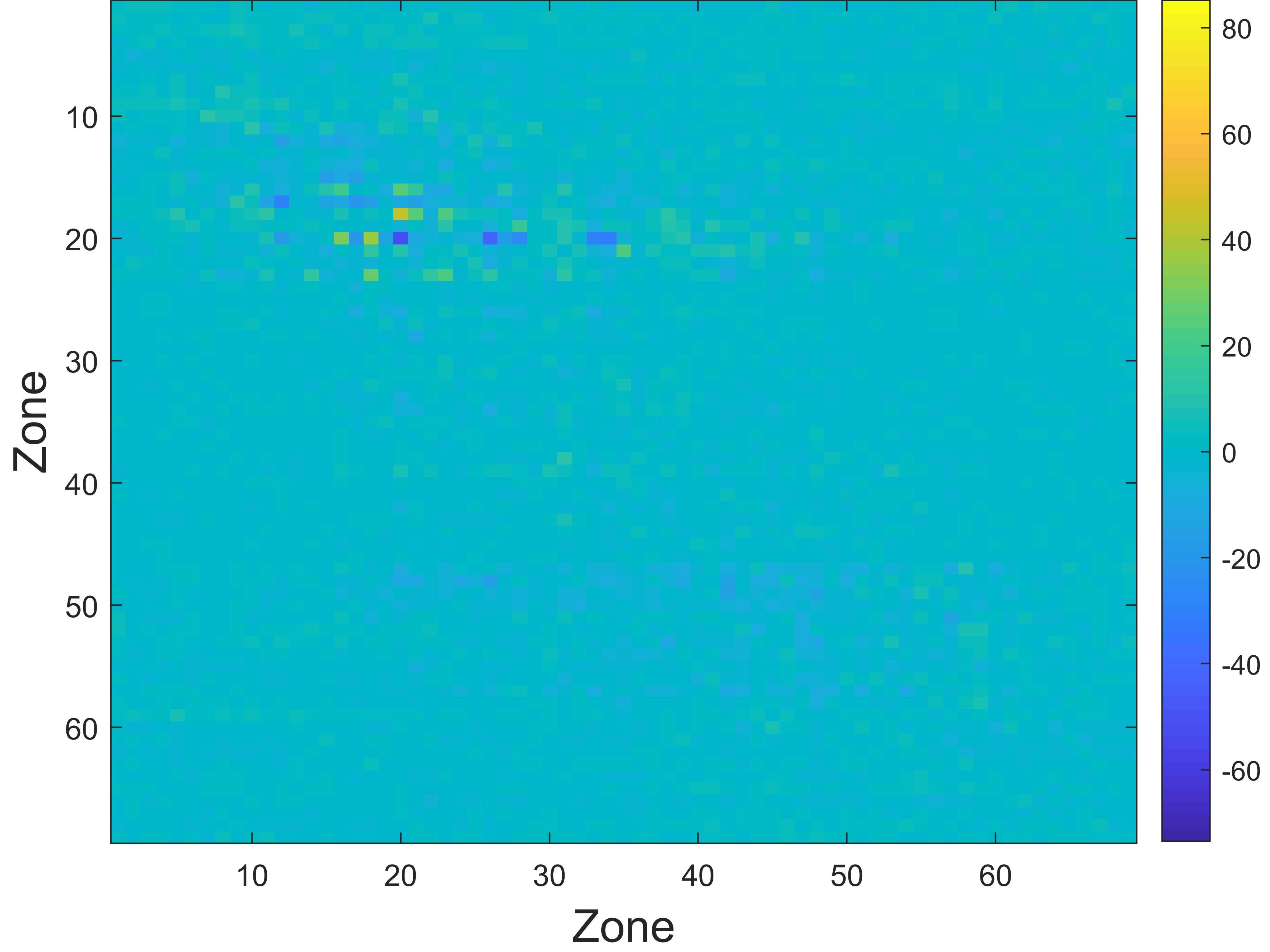}\\
\scriptsize \textbf{Pattern-1}& \scriptsize \textbf{Pattern-2} & \scriptsize \textbf{Pattern-3}\\
\end{tabular}
\caption{ The gap between the recovered and true values at the 46th time sampling point of NYC-yellow using the LRTC-TIDT model.}\label{fig.nyc}
\vspace{-0.3cm}
\end{figure}

During the experiments, we introduce Gaussian noise with a standard deviation of 1 under three non-random sampling patterns with a 30\% sampling rate. Each scenario thus involves 70\% non-random missing entries and 30\% noisy observations. 
For all scenarios in this task, the proposed LRTC-TIDT model is configured with parameters $k = 20$ and $\lambda = 0.01$.
The recovery accuracy under these settings is reported in Table \ref{tab: nyc_noise}, demonstrating the robustness of LRTC-TIDT in noisy environments. Furthermore, we compute the differences between the recovered and true values under different sampling patterns at the 46th time point, as illustrated in Fig.\ref{fig.nyc}. The results show that the discrepancies across all patterns are minimal, with most values close to zero.

\subsection{Applications to Temperature Field Prediction}

Accurate prediction of time-varying temperature fields is essential for optimizing energy systems and mitigating meteorological disasters. 
We evaluate the proposed LRTC-TIDT model on a Pacific Ocean sea surface temperature dataset\footnote{\url{http://iridl.ldeo.columbia.edu/SOURCES/.CAC/}}
 spanning 60 months (Jan 1970–Dec 1974). The data form a 60 × 30 × 84 tensor over a $2^\circ \times 2^\circ$latitude–longitude grid, with 60, 30, and 84 representing time, latitude, and longitude dimensions, respectively.

\begin{table}[!htbp]
\renewcommand{\arraystretch}{1.4}
\setlength\tabcolsep{3.5pt}
\footnotesize
  \caption{Performance comparison (in MAE/RMSE) of LRTC-TIDT and other baselines with different forecast scenarios. The forecast horizon (FH) row indicates the respective forecast horizons.
  }\label{tab:temperature_nonoise}
  \centering
  \vspace{-0.2cm}
\begin{tabular}{l|cccc}
    \hline
    Method/FH & h=4 & h=6 & h=8 & h=10\\
    \hline
     % Ground & 25.55/25.76 & 25.48/25.68 & 25.43/25.62
     %     & 25.39/25.56 \\
     % SNN & 25.55/25.76 & 25.48/25.68 & 25.43/25.62
     %     & 25.39/25.56 \\
     % TNN & 25.55/25.76 & 25.48/25.68 & 25.43/25.62 
     %     & 25.39/25.56 \\
     % BCPF & 25.55/25.76 & 25.48/25.68 & 25.43/25.62
     %     & 25.39/25.56\\   
     TCTV & 1.39/1.81 & 2.68/3.14 & 7.73/8.52
         & 11.36/12.30 \\    
     MDT-Tucker & 1.44/1.85 & 1.96/2.39 & 2.38/2.81 & 2.80/3.29  \\ 
     CNNM & \underline{1.25}/\underline{1.56} & \underline{1.66}/\underline{2.06} & \underline{1.81}/\underline{2.22} & \underline{1.85}/\underline{2.27} \\ 
     LRTC-TIDT & \textbf{0.97}/\textbf{1.34} & \textbf{1.14}/\textbf{1.58} & \textbf{1.25}/\textbf{1.69} & \textbf{1.29}/\textbf{1.72}  \\ 
    \hline
\end{tabular}
\vspace{-0.2cm}
\end{table}

\begin{figure}[!htbp]
\renewcommand{\arraystretch}{0.5}
\setlength\tabcolsep{0.5pt}
\centering
%\vspace{-0.2cm}
\begin{tabular}{ccccccc}
\centering
\includegraphics[width=25.3mm, height = 23.3mm]{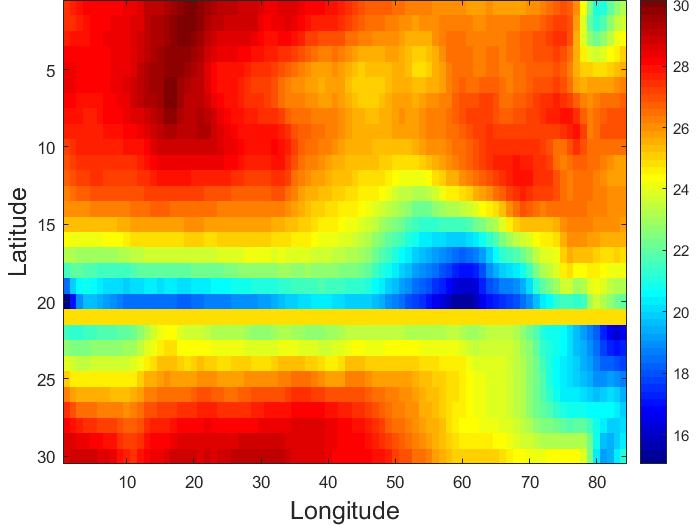}&
\includegraphics[width=25.3mm, height = 23.3mm]{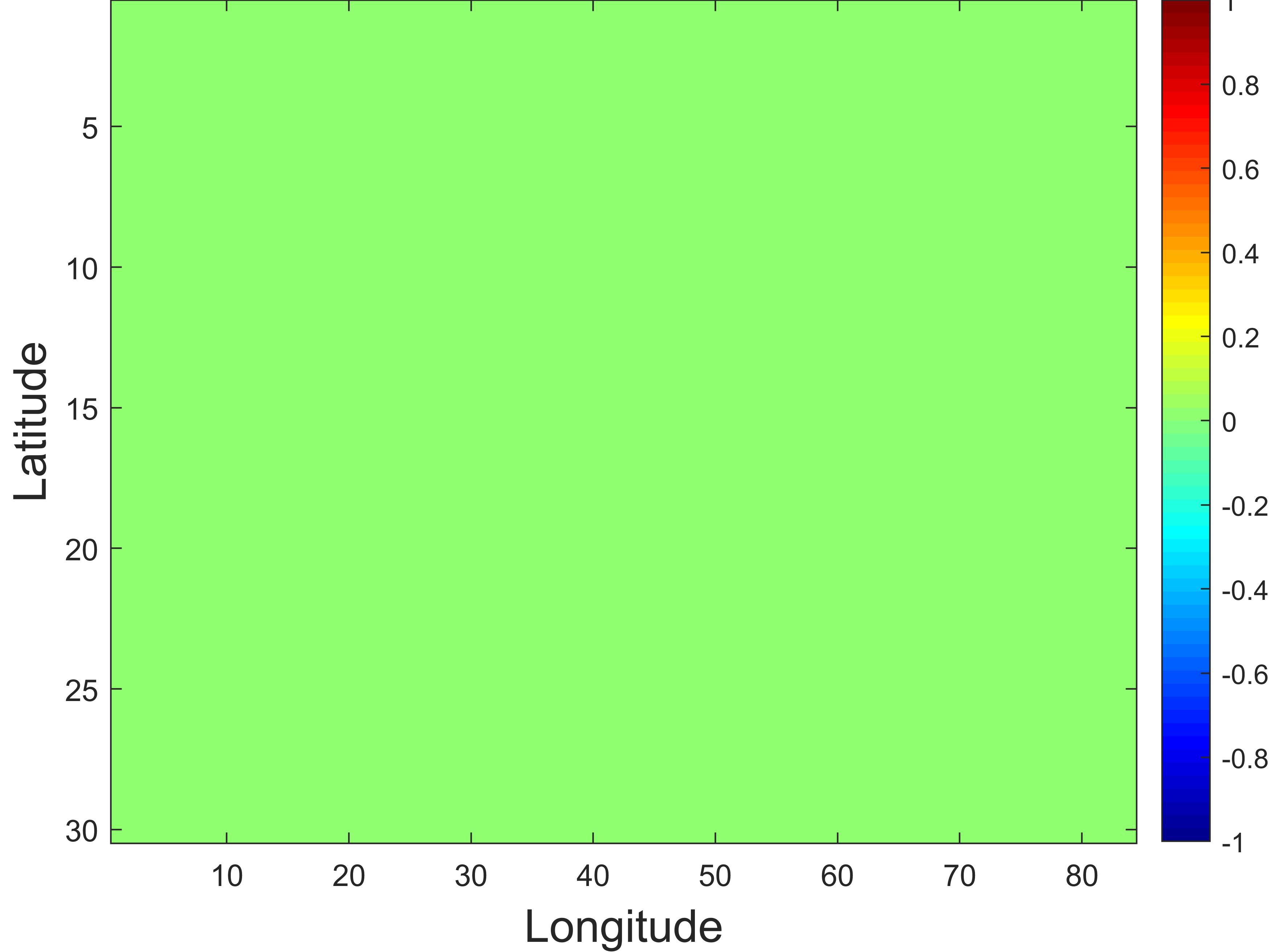}&
\includegraphics[width=25.3mm, height = 23.3mm]{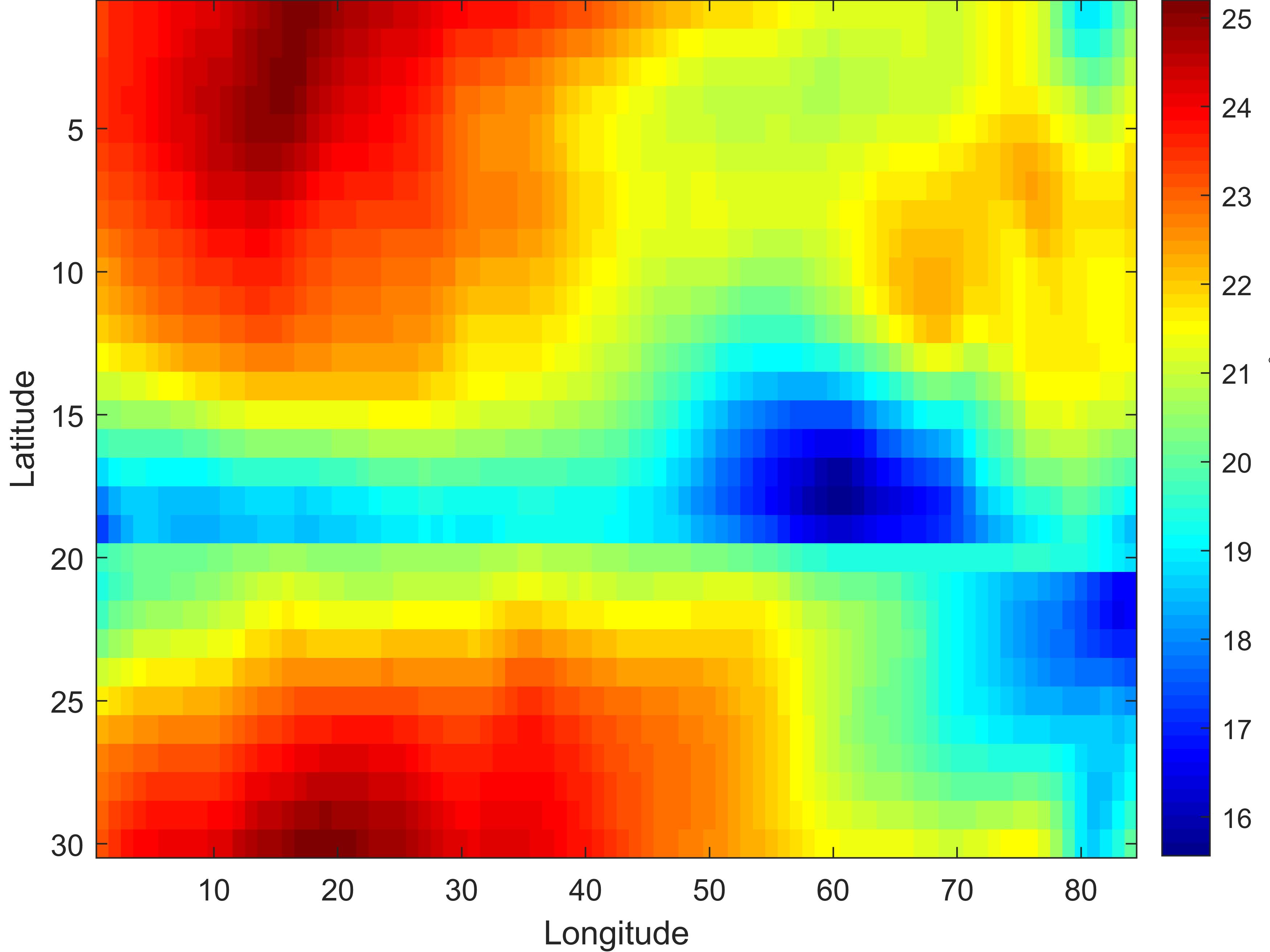}&
\includegraphics[width=25.3mm, height = 23.3mm]{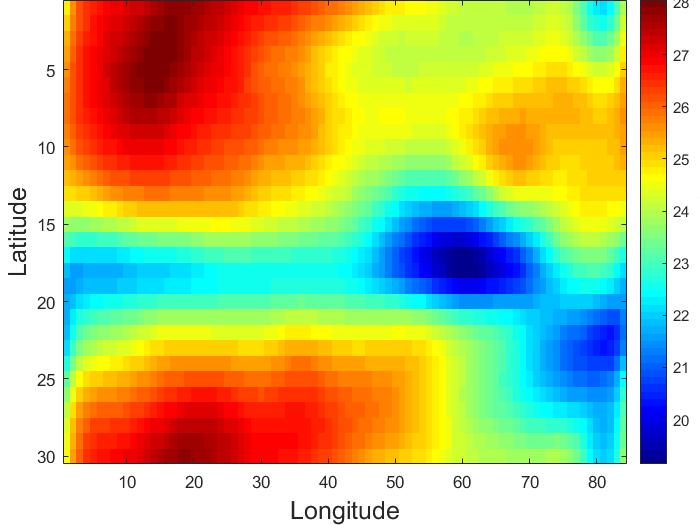}&
\includegraphics[width=25.3mm, height = 23.3mm]{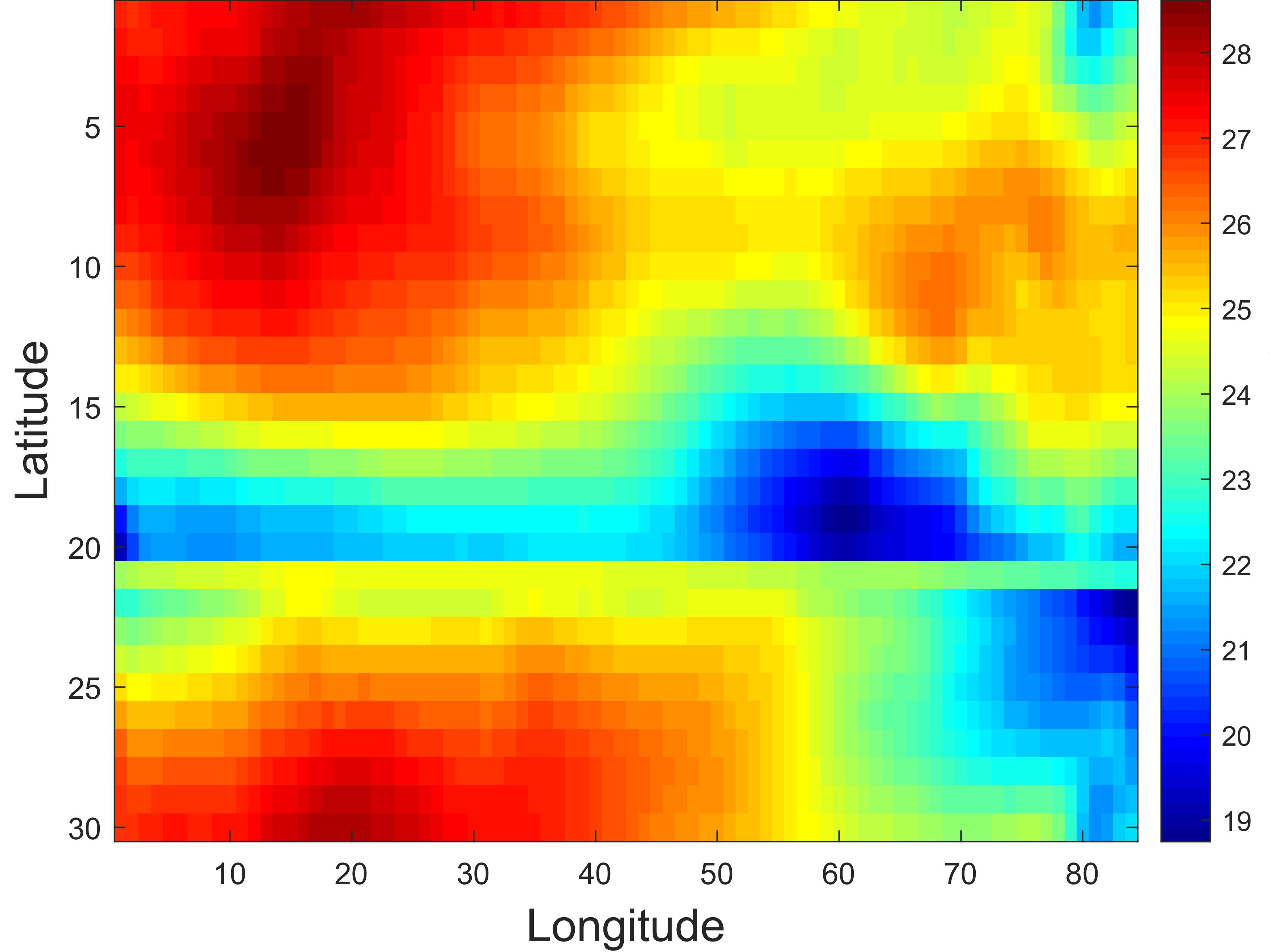}&
\includegraphics[width=25.3mm, height = 23.3mm]{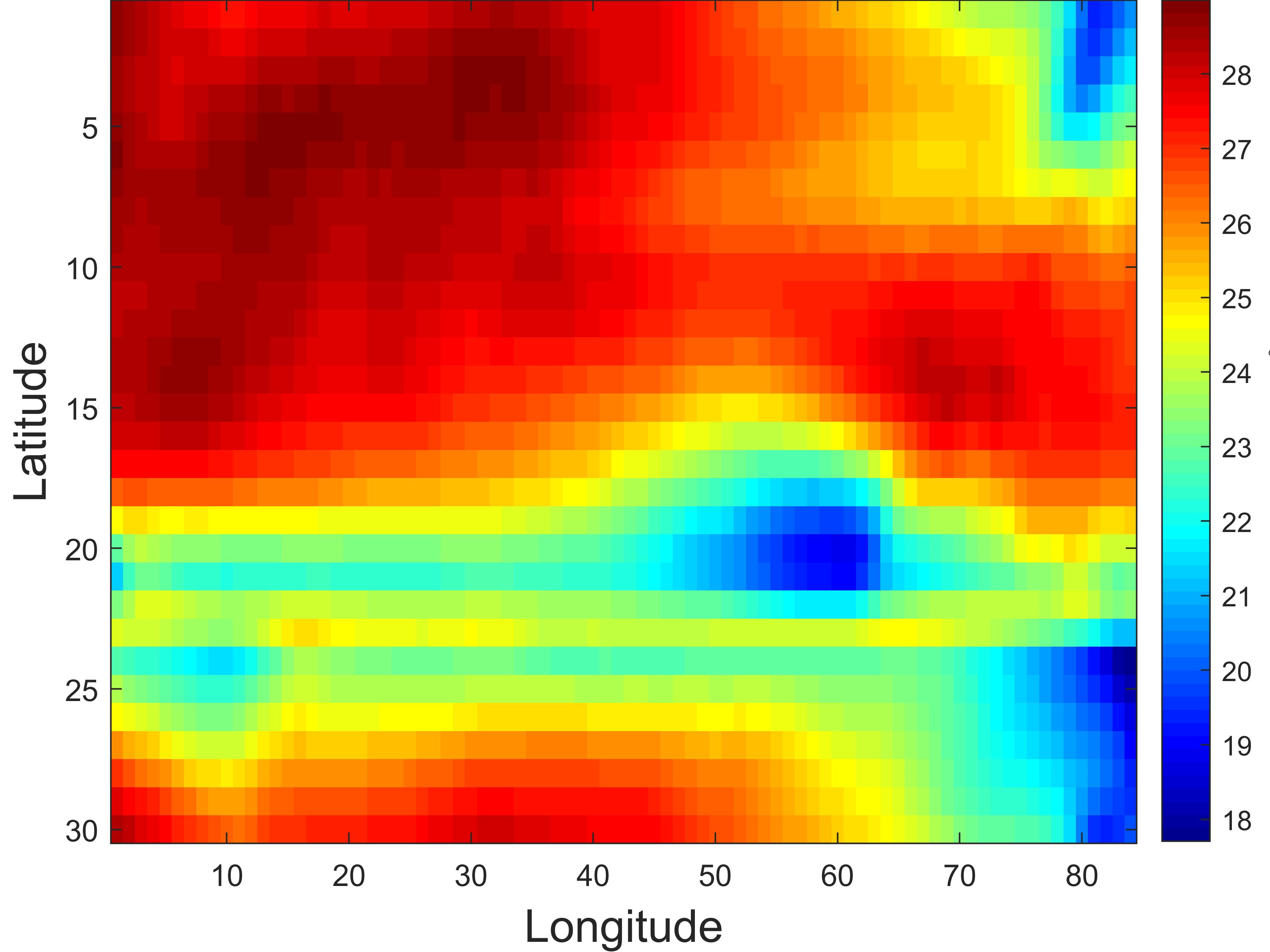}\\
\scriptsize \textbf{Ground Truth} & \scriptsize \textbf{HaLRTC $\&$ TNN}& \scriptsize \textbf{TCTV} &
\scriptsize \textbf{MDT-Tucker}& \scriptsize \textbf{CNNM} & \scriptsize \textbf{LRTC-TIDT}\\
\end{tabular}
\caption{The Prediction results of the Pacific surface temperature in May 1974 with h=8 using the LRTC-TIDT model and other  models.}\label{fig.Temperature_all_h8}
\vspace{-0.1cm}
\end{figure}

We test the prediction performance of different methods under various forecasting horizons, and the results are summarized in Table~\ref{tab:temperature_nonoise}.  For all forecasting horizons, the proposed LRTC-TIDT model is configured with parameters $k = 20$ and $\lambda = 1e10$.   As shown, the proposed LRTC-TIDT model consistently achieves the lowest MAE and RMSE across all scenarios. In addition, we visualize the forecasting results of each method for May 1974 under the forecast horizon $h=8$ in Fig.~\ref{fig.Temperature_all_h8}. It can be observed that the prediction generated by LRTC-TIDT is the closest to the true temperature field, whereas purely low-rank methods such as HaLRTC and TNN fail to produce meaningful results in forecasting scenarios.
To further demonstrate that the proposed LRTC-TIDT method possesses a stronger ability to capture temporal patterns in multidimensional time series compared with the MDT-Tucker approach, we apply both the LRTC-TIDT and MDT-Tucker models to the Pacific dataset with a forecast horizon  $h=6$. The predicted and ground-truth temperature variations over time are shown in Fig.~\ref{fig.temperature_h6}. Clearly, as time progresses, the predictions from MDT-Tucker deviate increasingly from the true values, whereas LRTC-TIDT consistently maintains close agreement with the ground truth. This effectively demonstrates the superior capability of LRTC-TIDT in capturing temporal dependencies in multidimensional time-series data.

\begin{figure*}[!ht]
\renewcommand{\arraystretch}{0.5}
\setlength\tabcolsep{0.5pt}
\centering
%\vspace{-0.2cm}
\begin{tabular}{ccccccc}
\centering

\tiny  \textbf{Jul. 1974}& \tiny  \textbf{Aug. 1974} & \tiny  \textbf{Sep. 1974} & \tiny \textbf{Oct. 1974} & \tiny  \textbf{Nov. 1974} & \tiny  \textbf{Dec. 1974}\\

\includegraphics[width=20.3mm, height = 19.3mm]{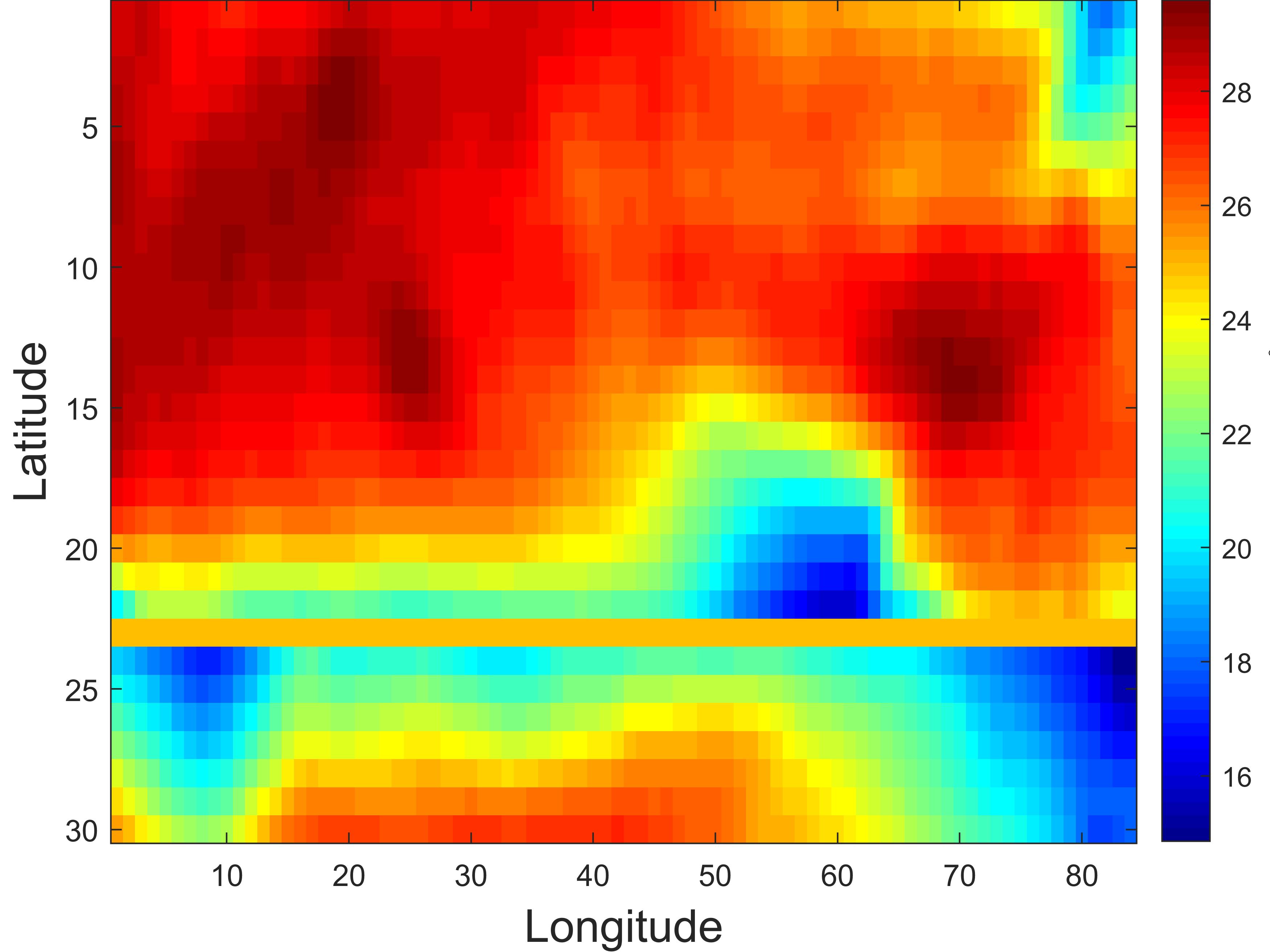}&
\includegraphics[width=20.3mm, height = 19.3mm]{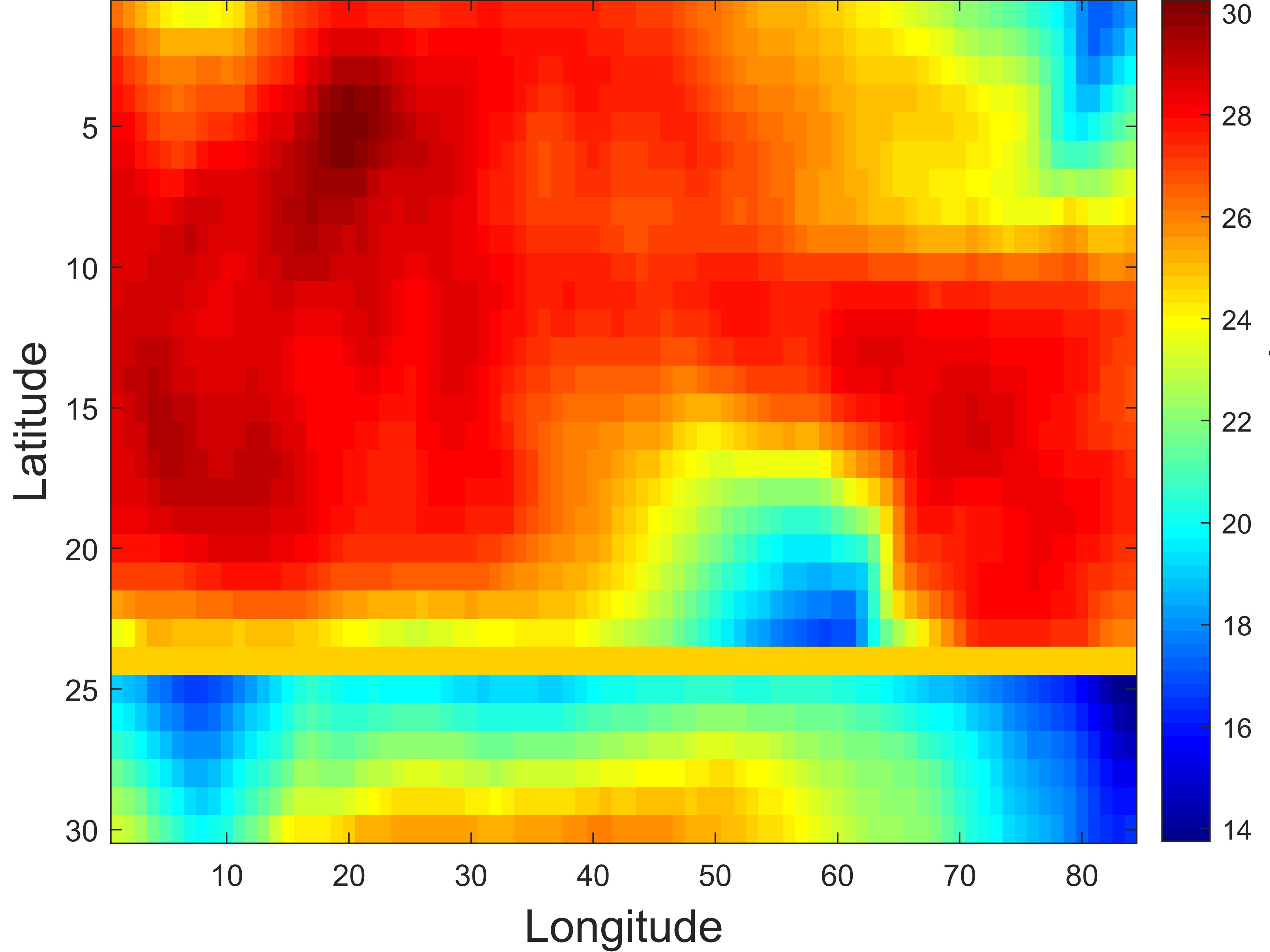}&
\includegraphics[width=20.3mm, height = 19.3mm]{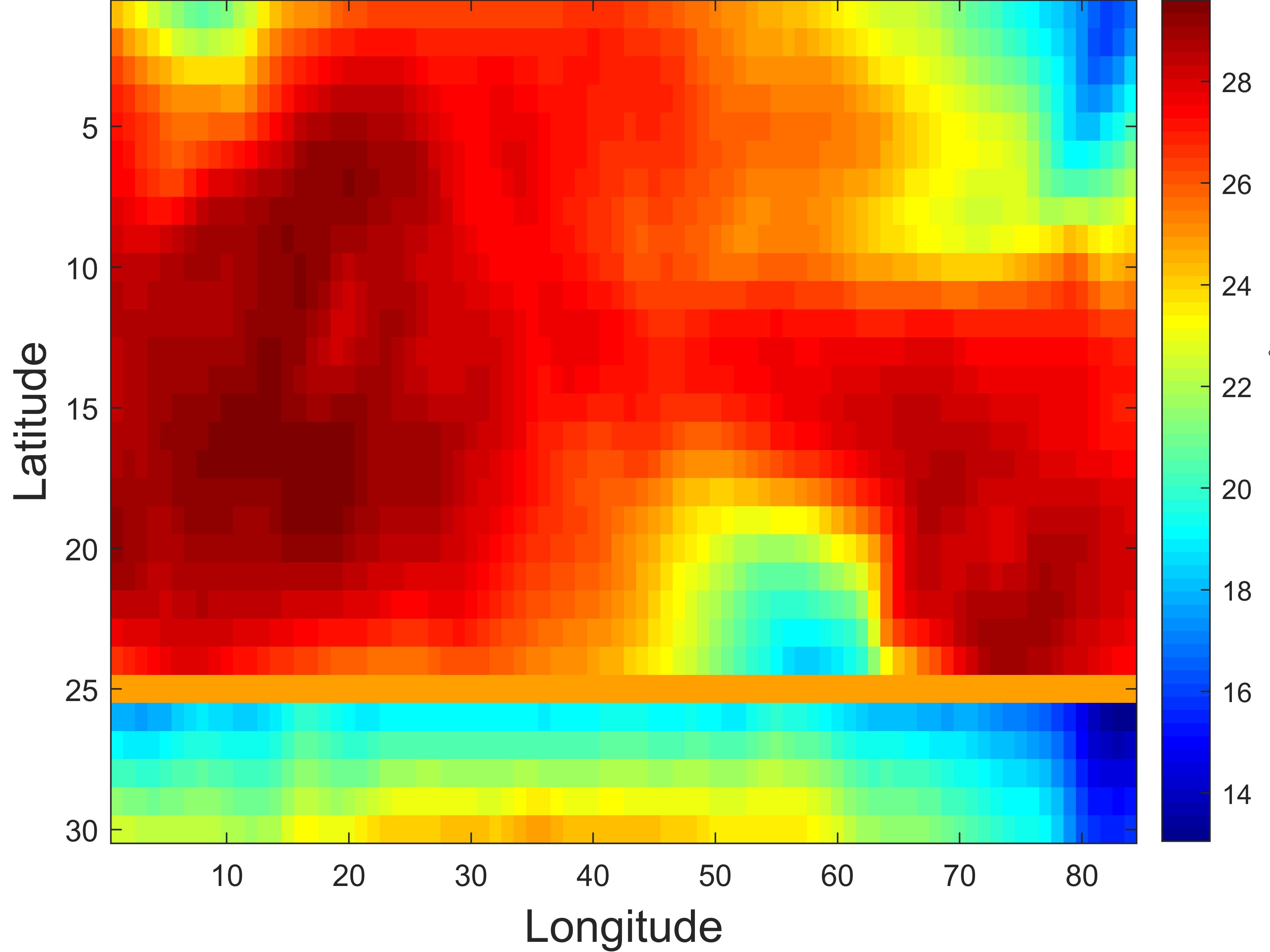}&
\includegraphics[width=20.3mm, height = 19.3mm]{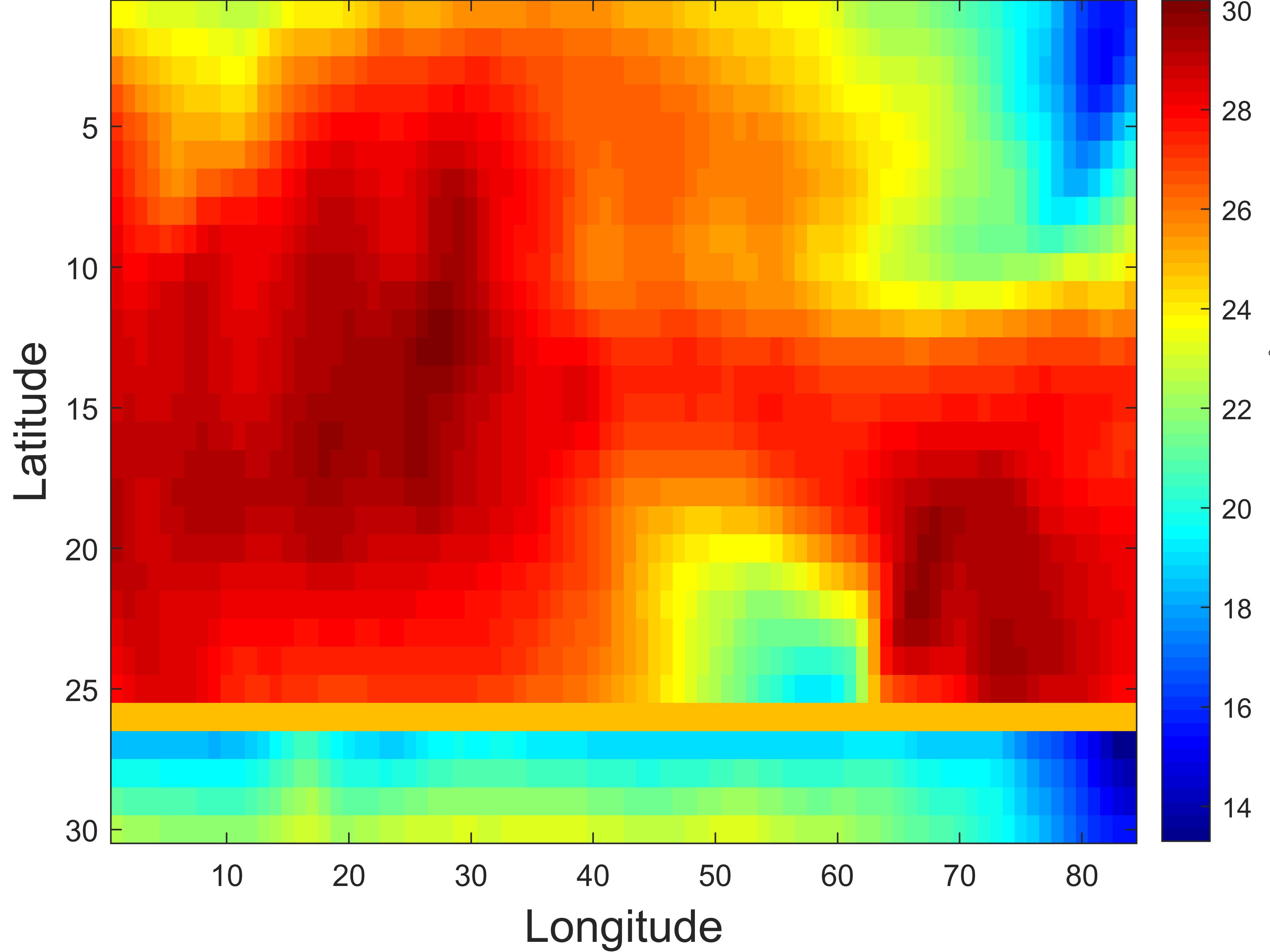}&
\includegraphics[width=20.3mm, height = 19.3mm]{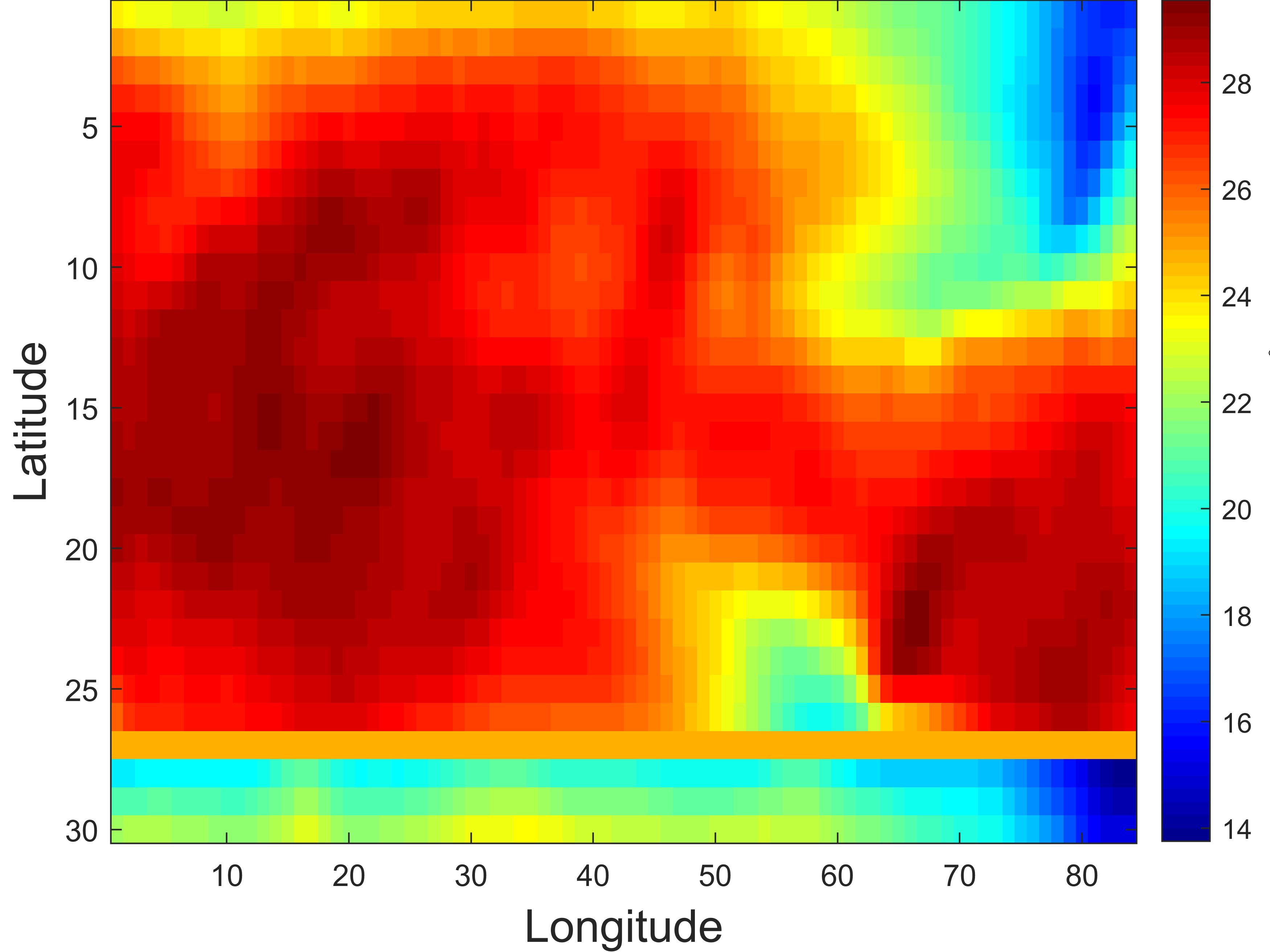}&
\includegraphics[width=20.3mm, height = 19.3mm]{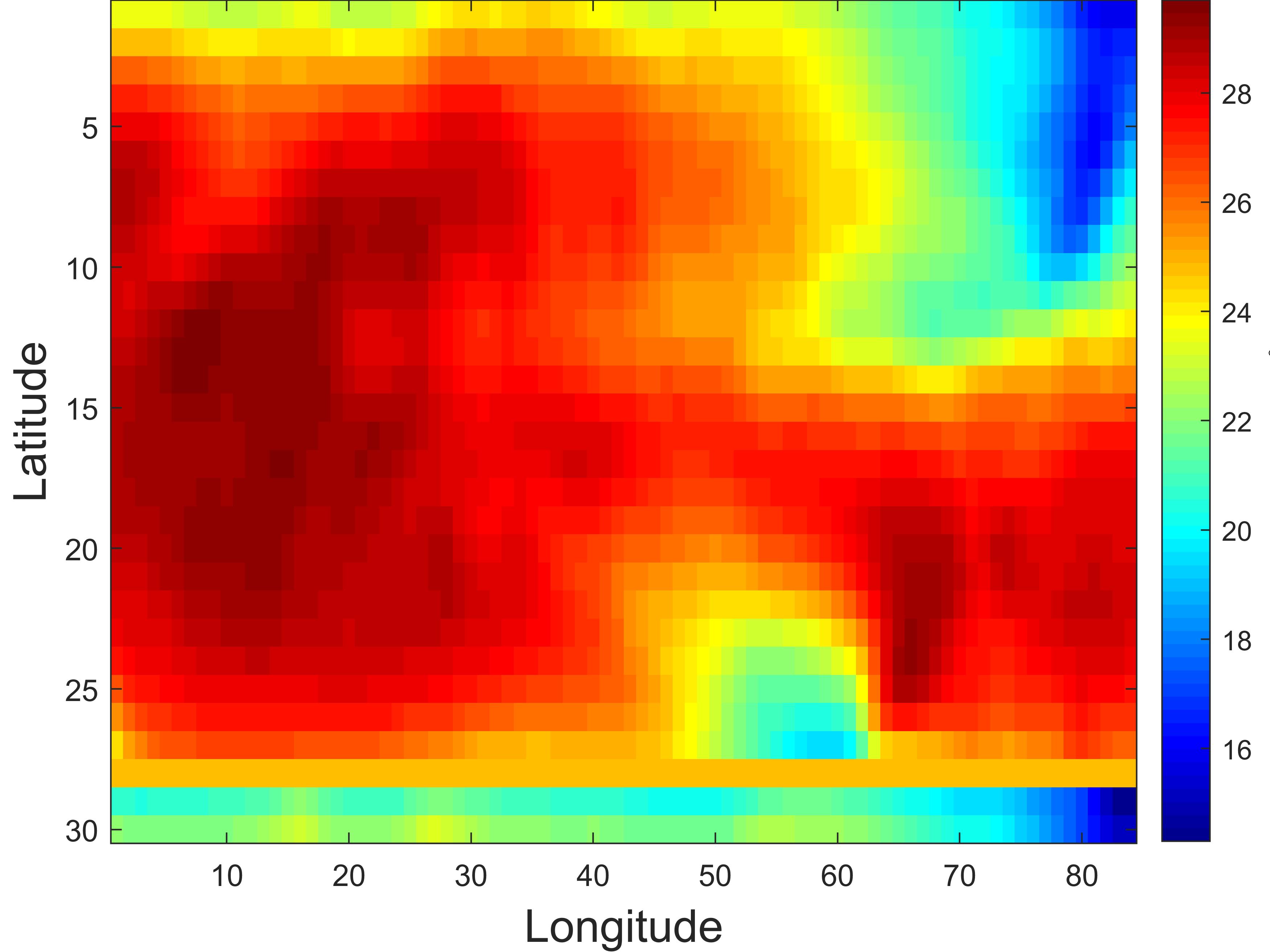}\\

\tiny  \textbf{Ground truth 1}& \tiny  \textbf{Ground truth 2} & \tiny  \textbf{Ground truth 3} & \tiny \textbf{Ground truth 4} & \tiny \textbf{Ground truth 5} & \tiny \textbf{Ground truth 6}\\

\includegraphics[width=20.3mm, height = 19.3mm]{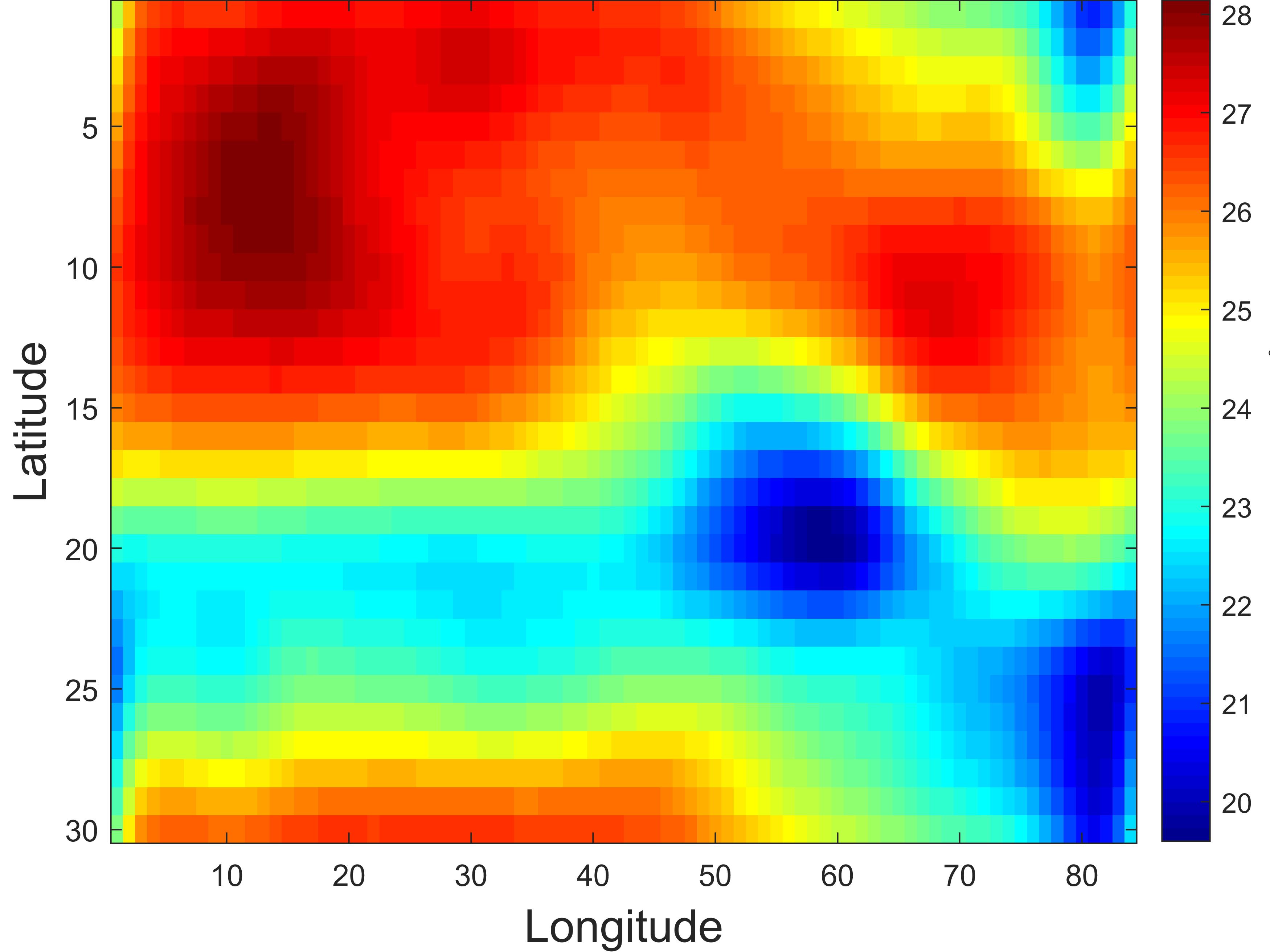}&
\includegraphics[width=20.3mm, height = 19.3mm]{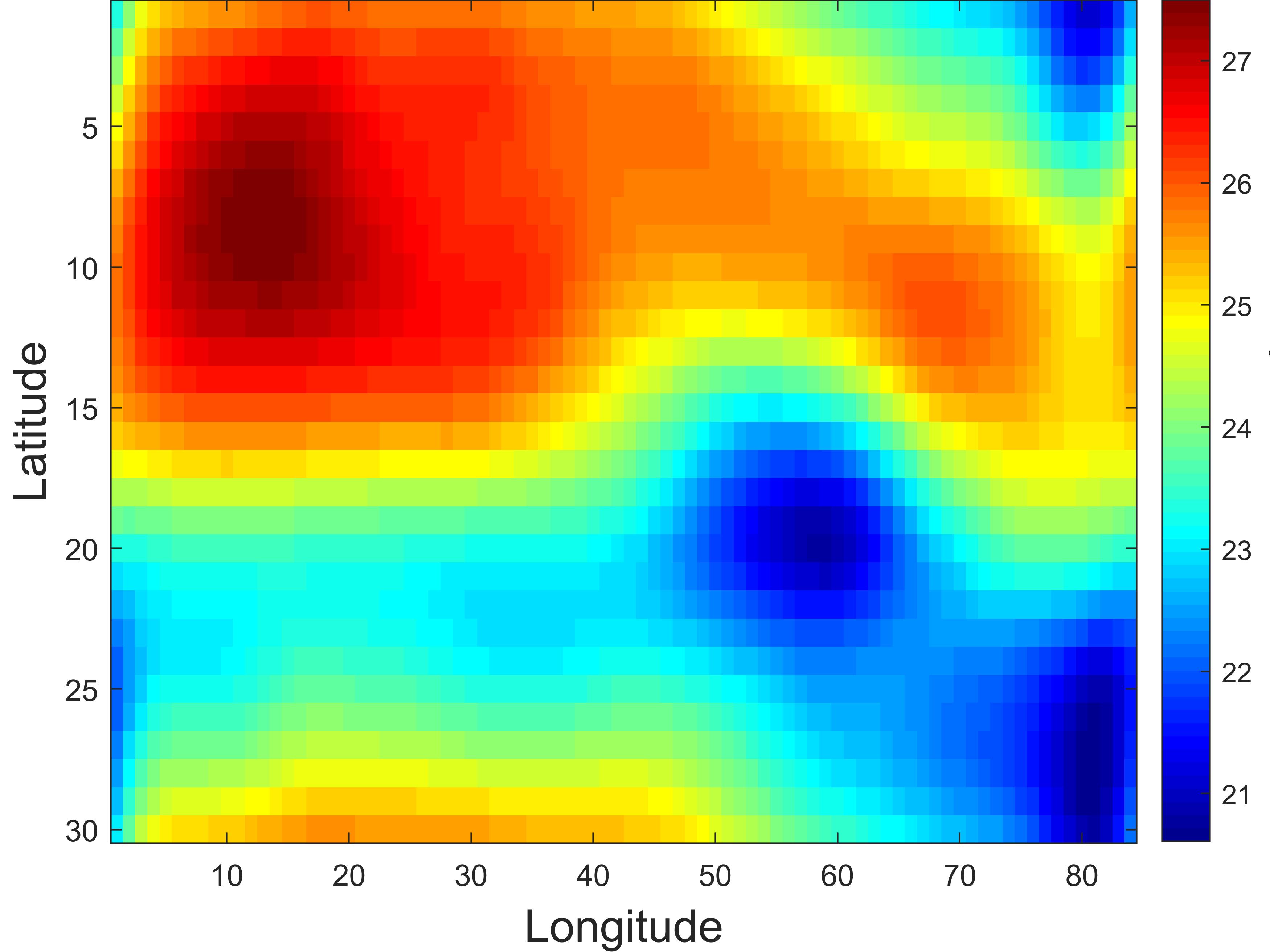}&
\includegraphics[width=20.3mm, height = 19.3mm]{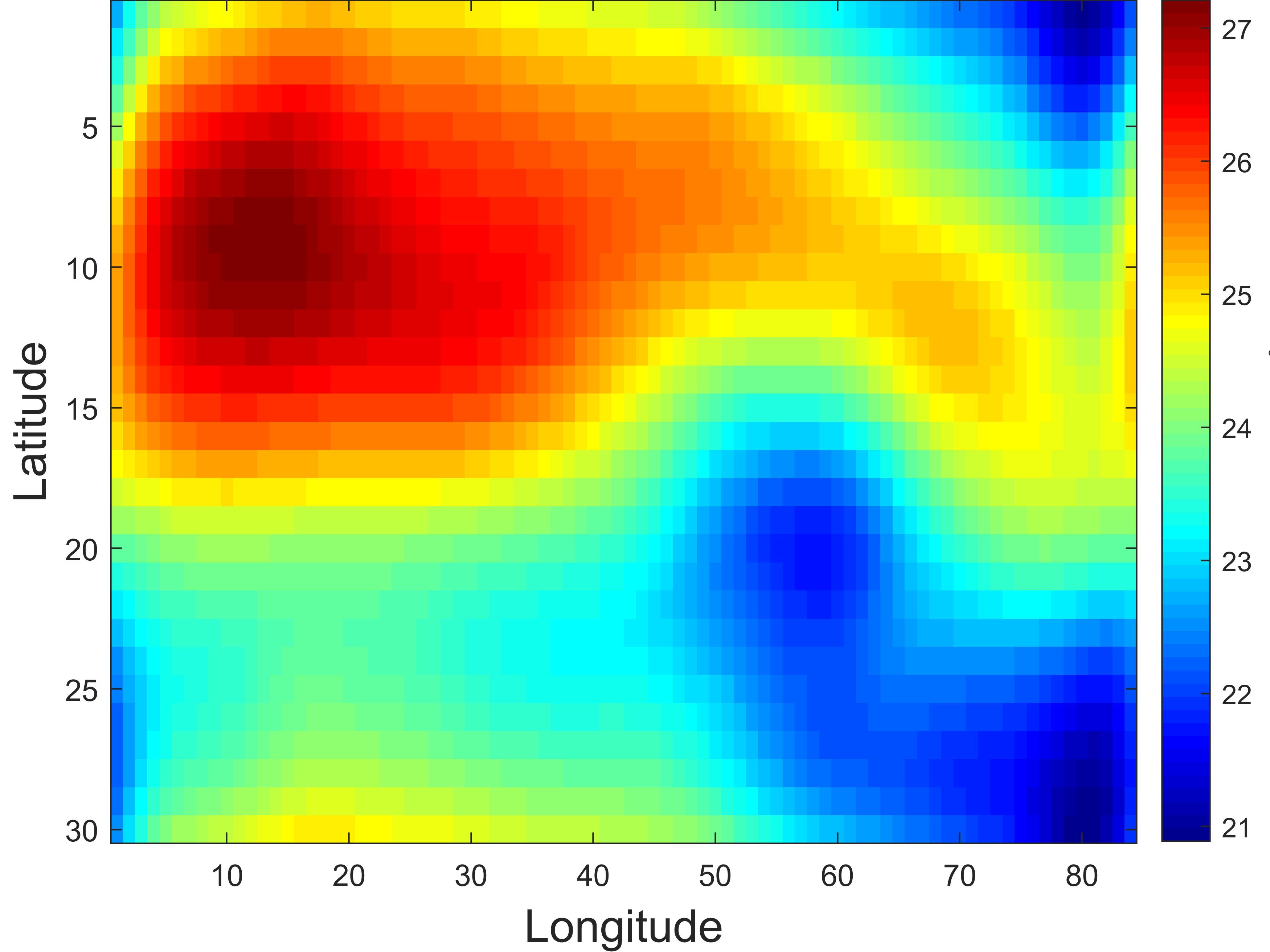}&
\includegraphics[width=20.3mm, height = 19.3mm]{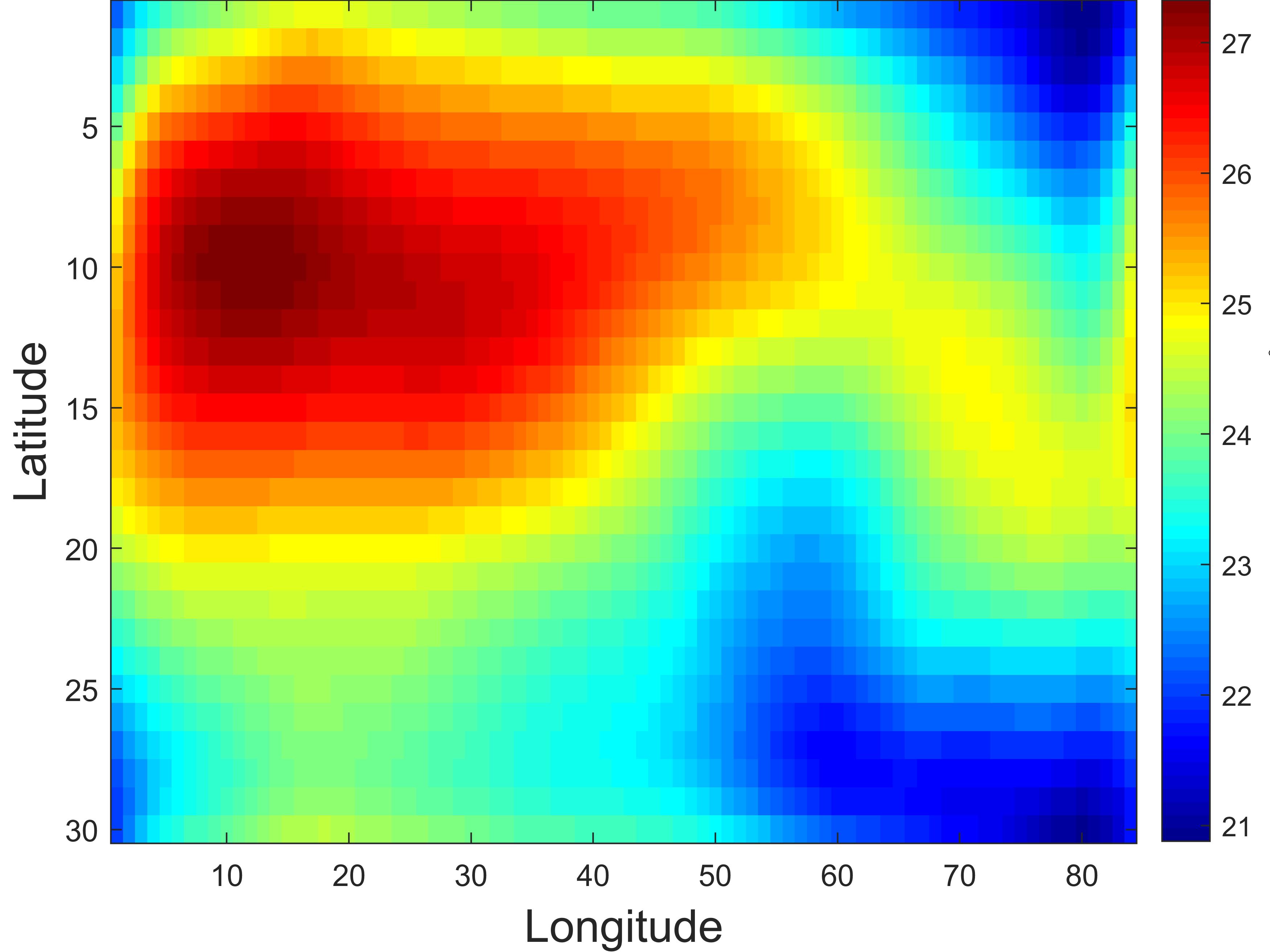}&
\includegraphics[width=20.3mm, height = 19.3mm]{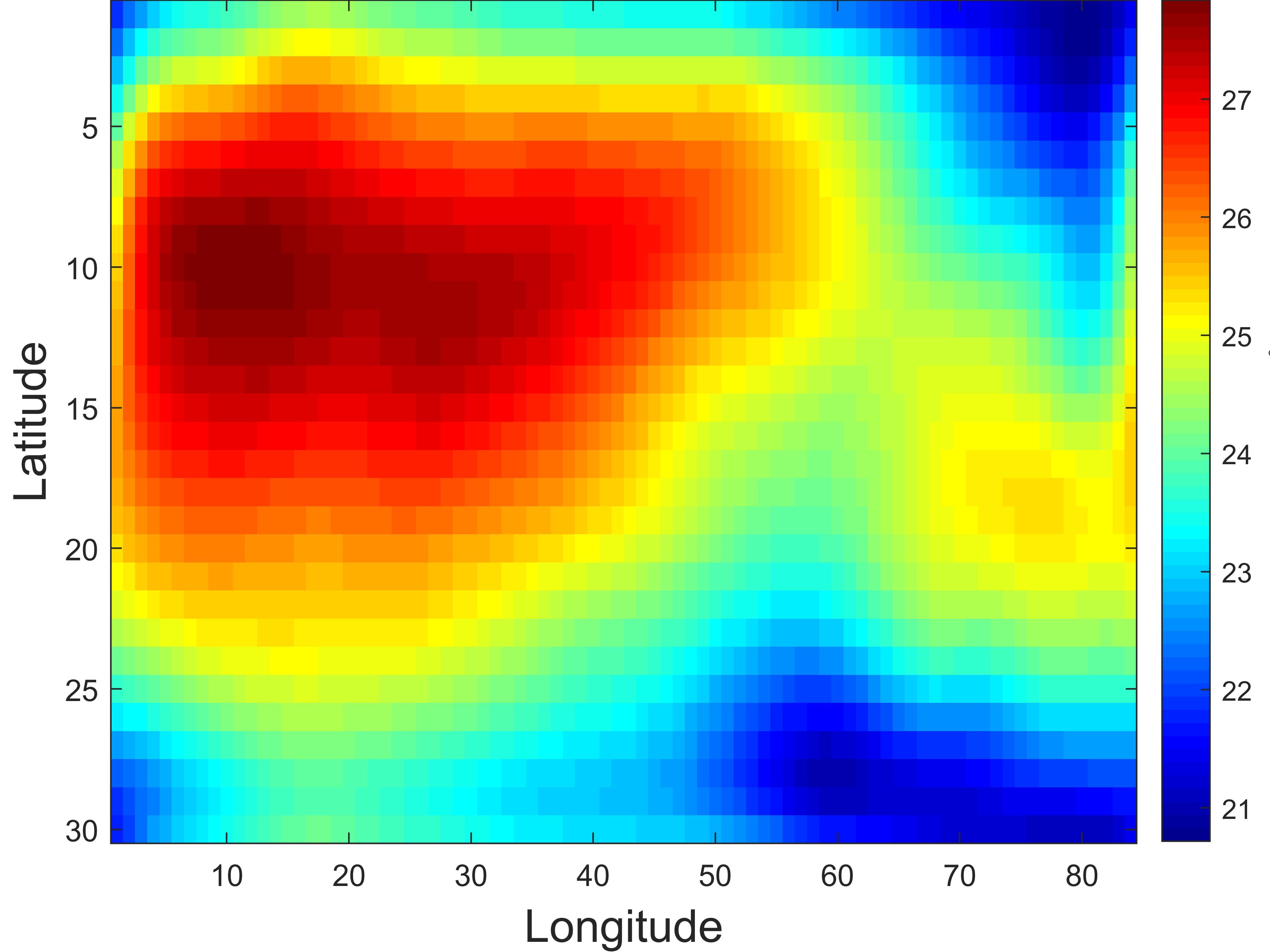}&
\includegraphics[width=20.3mm, height = 19.3mm]{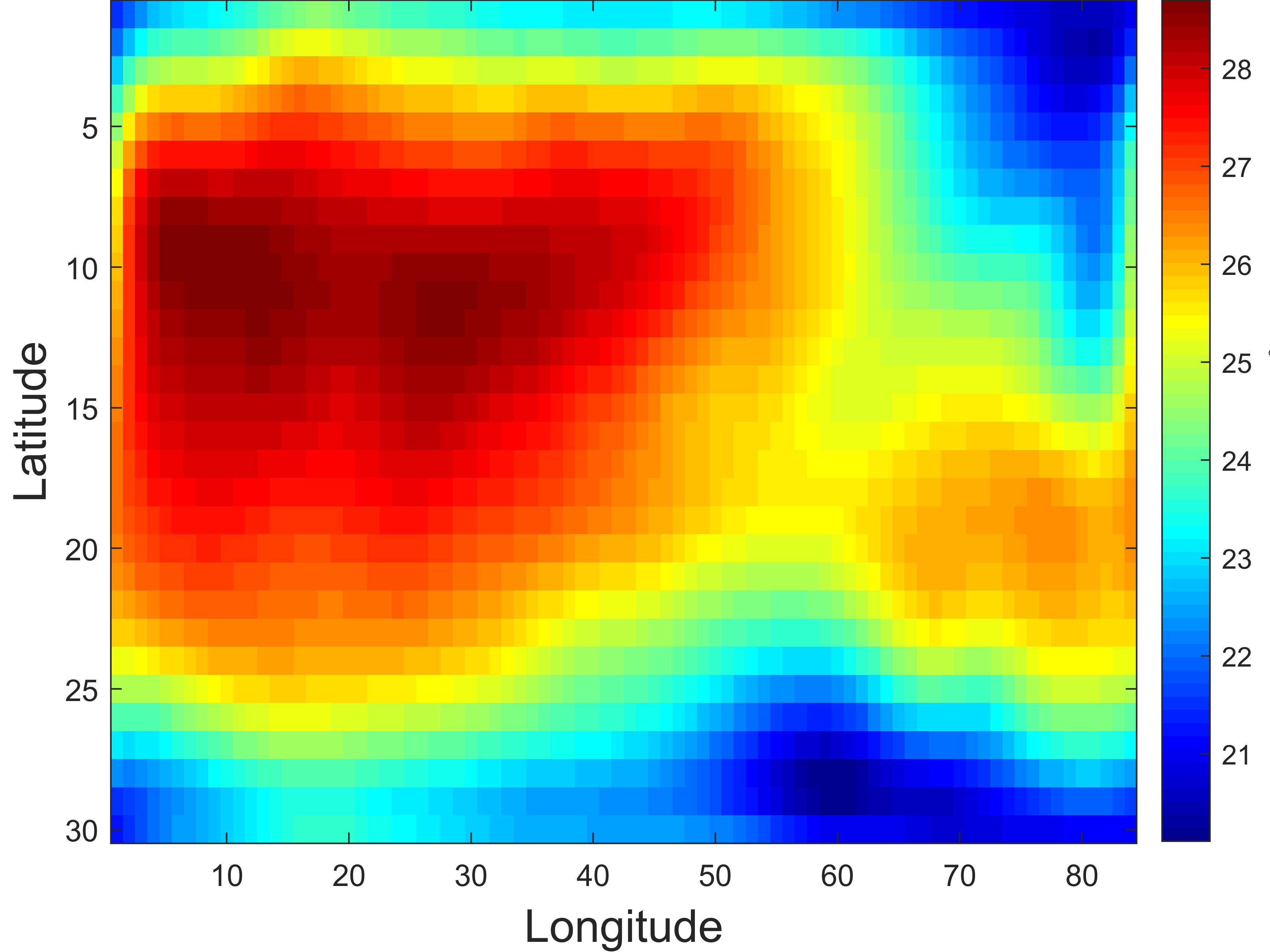}\\

\tiny  \textbf{MDT-Tucker 1}& \tiny  \textbf{MDT-Tucker 2} & \tiny  \textbf{ MDT-Tucker 3} & \tiny  \textbf{ MDT-Tucker 4} & \tiny  \textbf{MDT-Tucker 5} & \tiny  \textbf{ MDT-Tucker 6}\\

\includegraphics[width=20.3mm, height = 19.3mm]{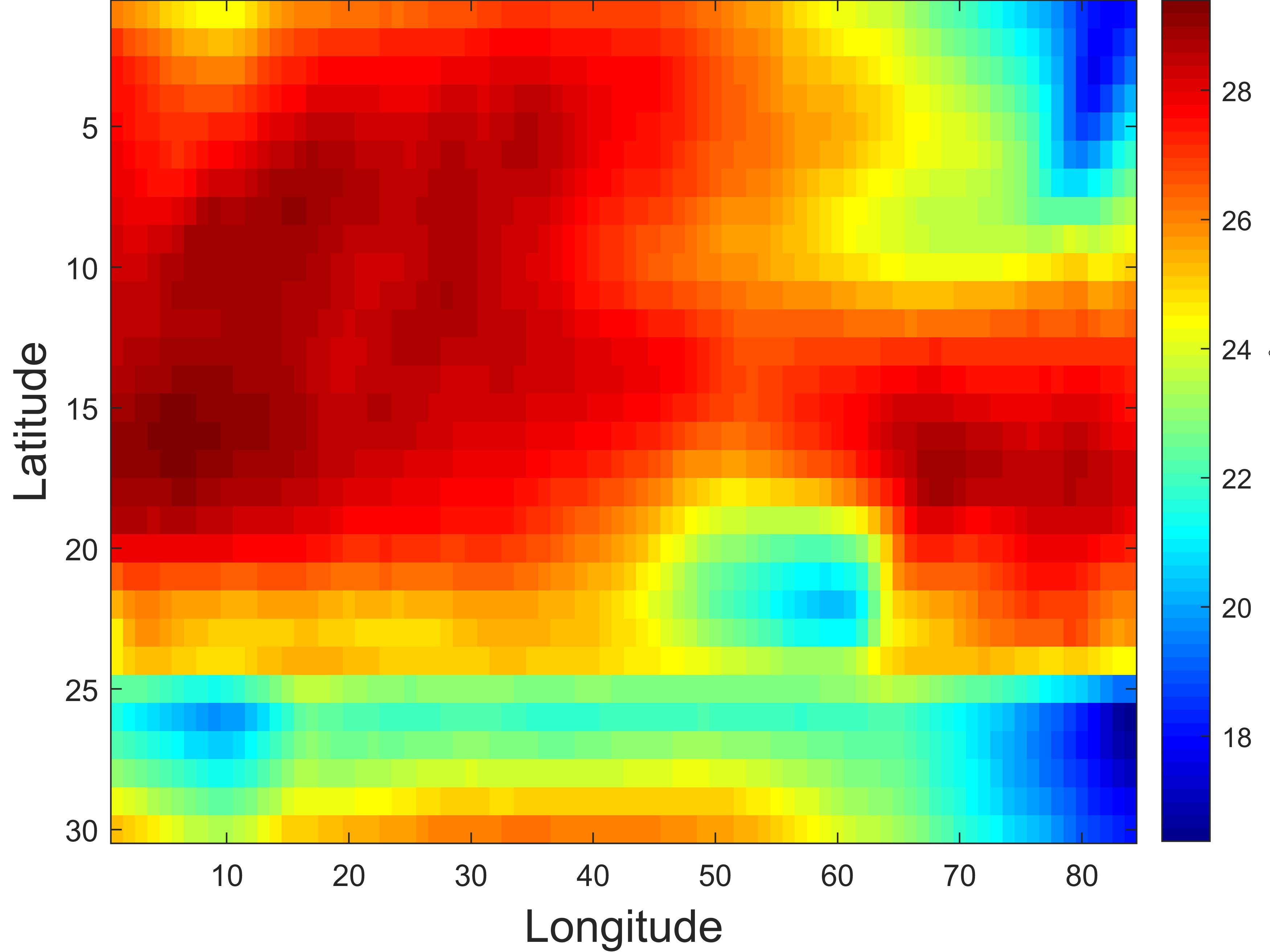}&
\includegraphics[width=20.3mm, height = 19.3mm]{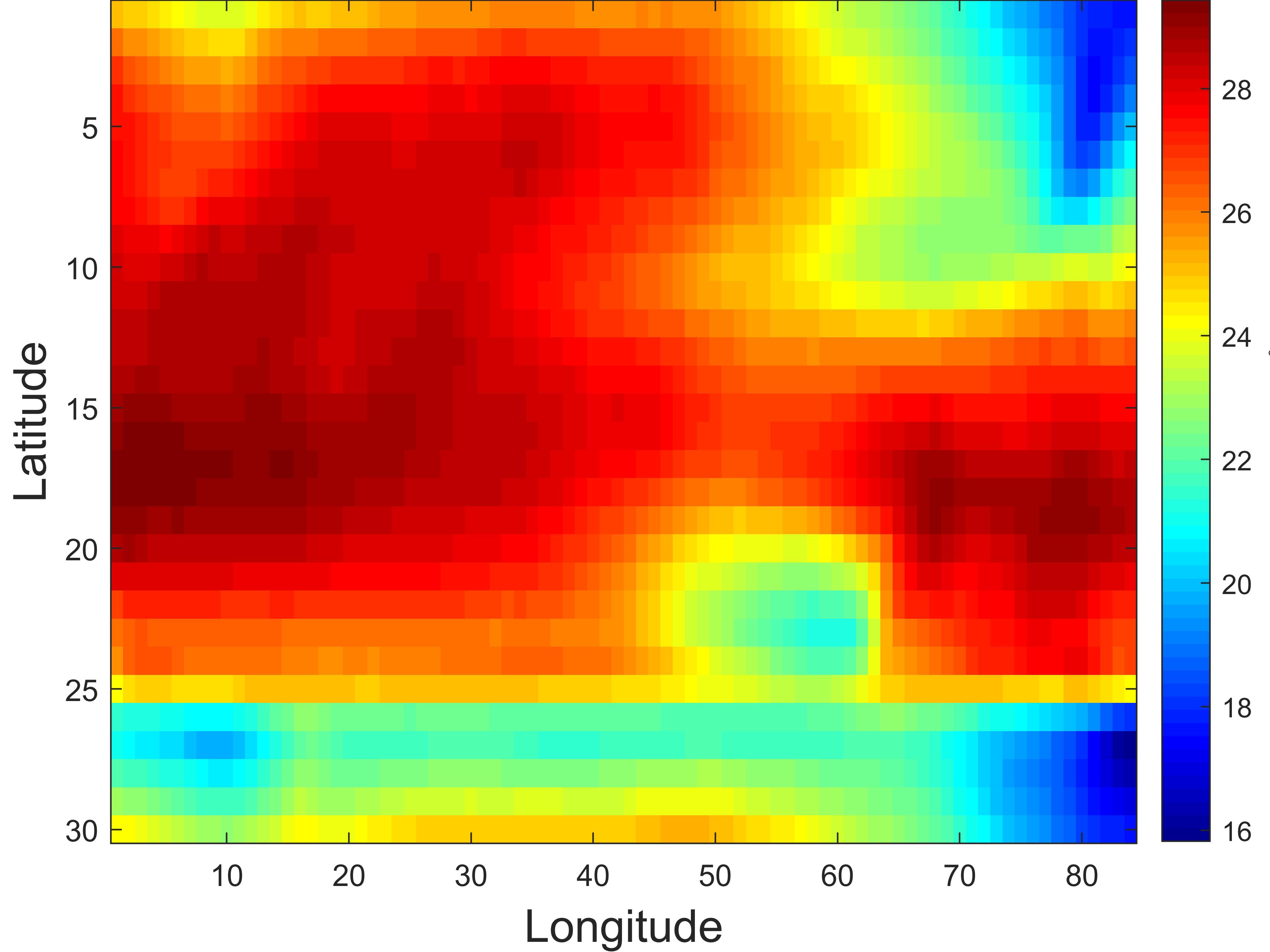}&
\includegraphics[width=20.3mm, height = 19.3mm]{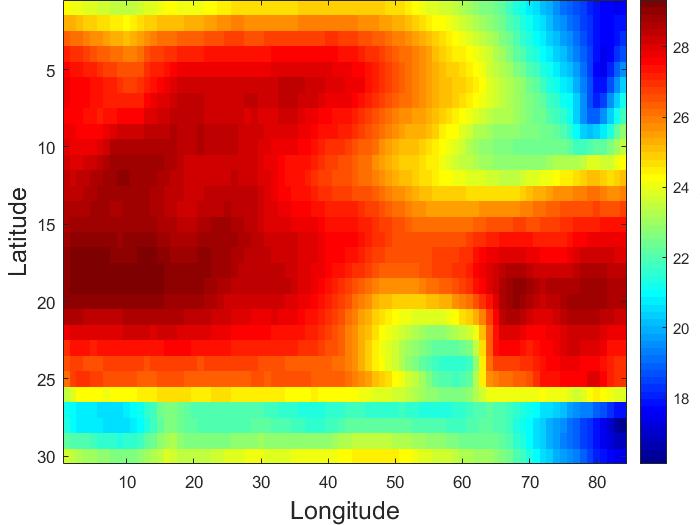}&
\includegraphics[width=20.3mm, height = 19.3mm]{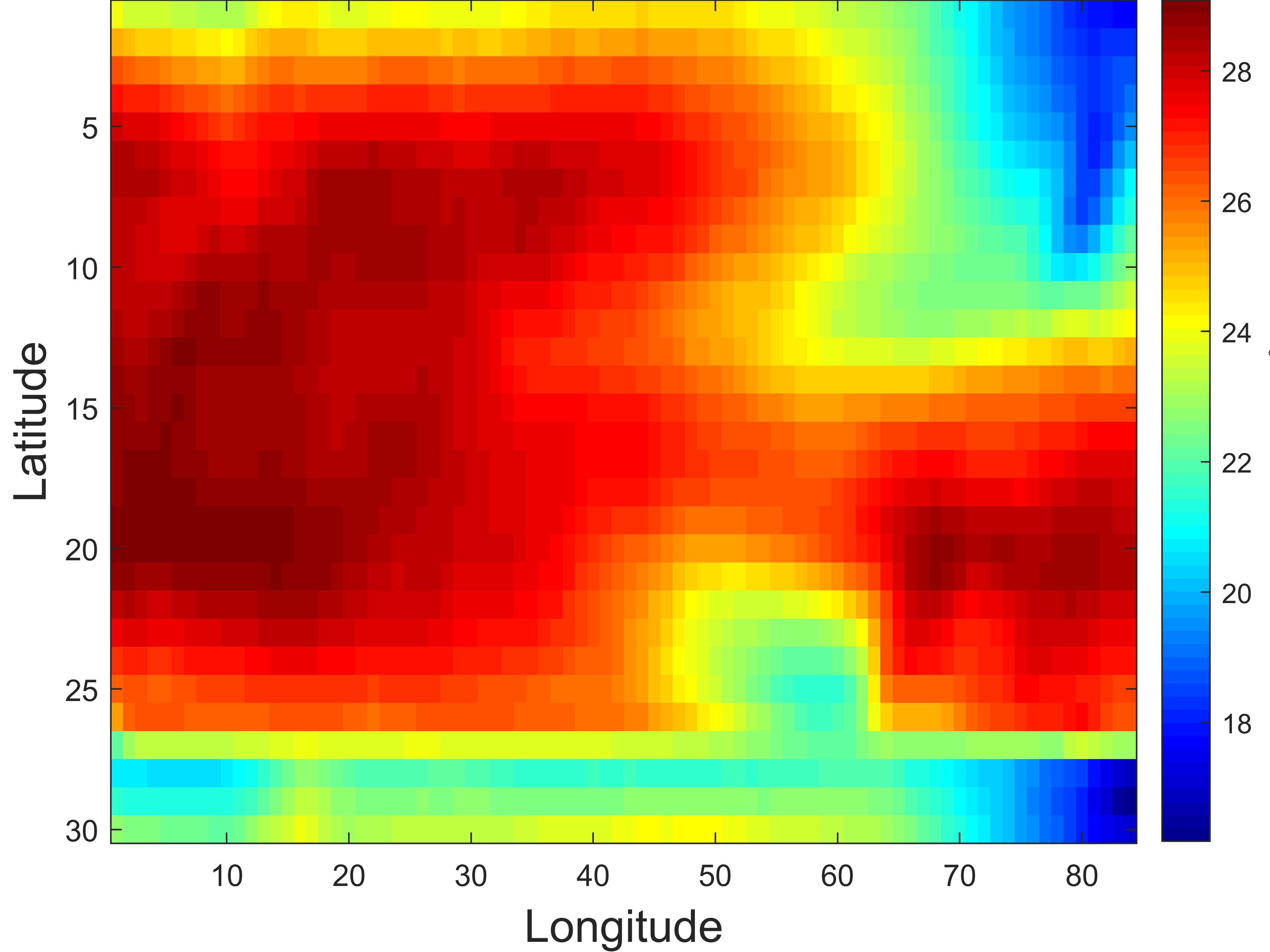}&
\includegraphics[width=20.3mm, height = 19.3mm]{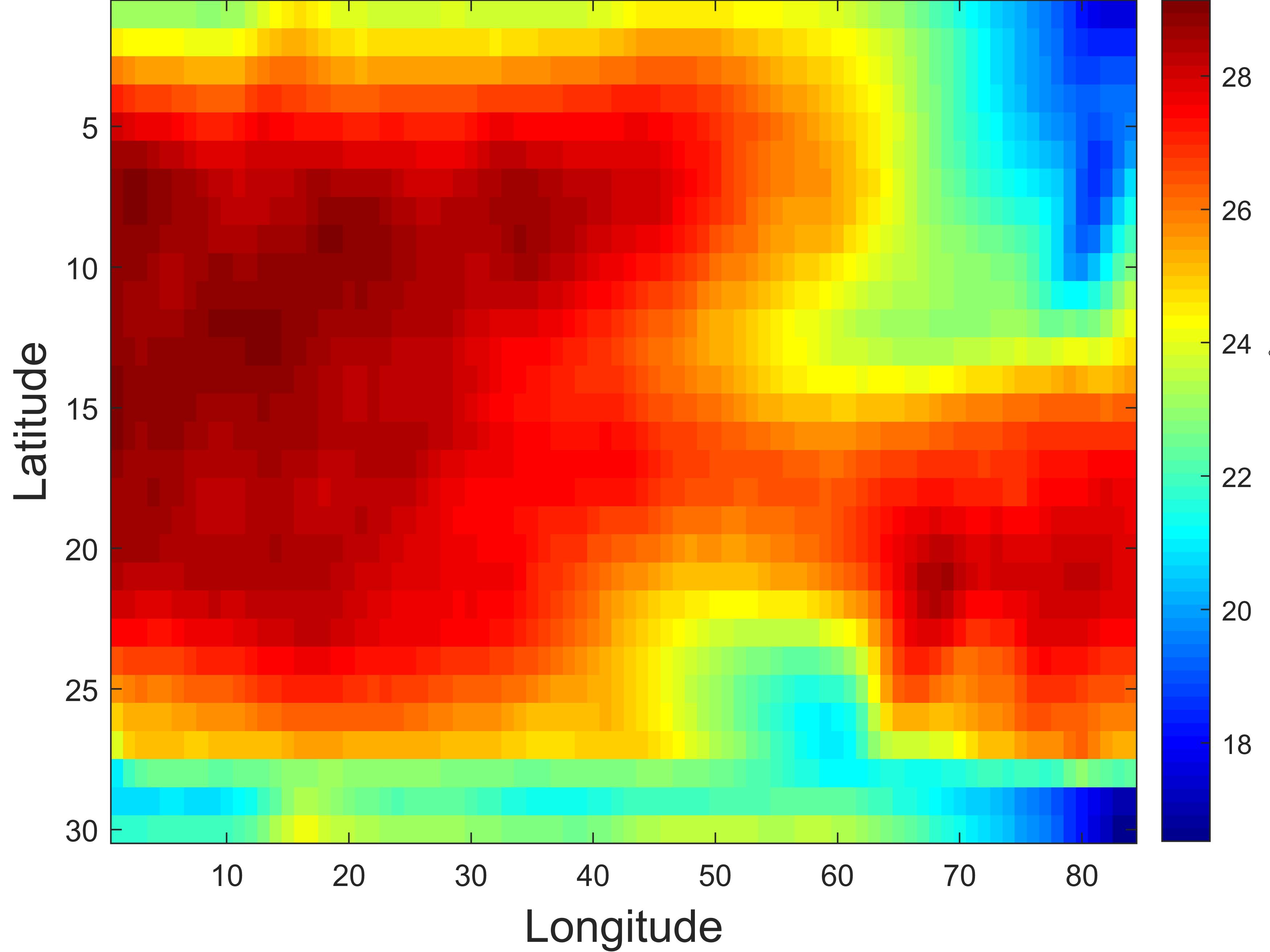}&
\includegraphics[width=20.3mm, height = 19.3mm]{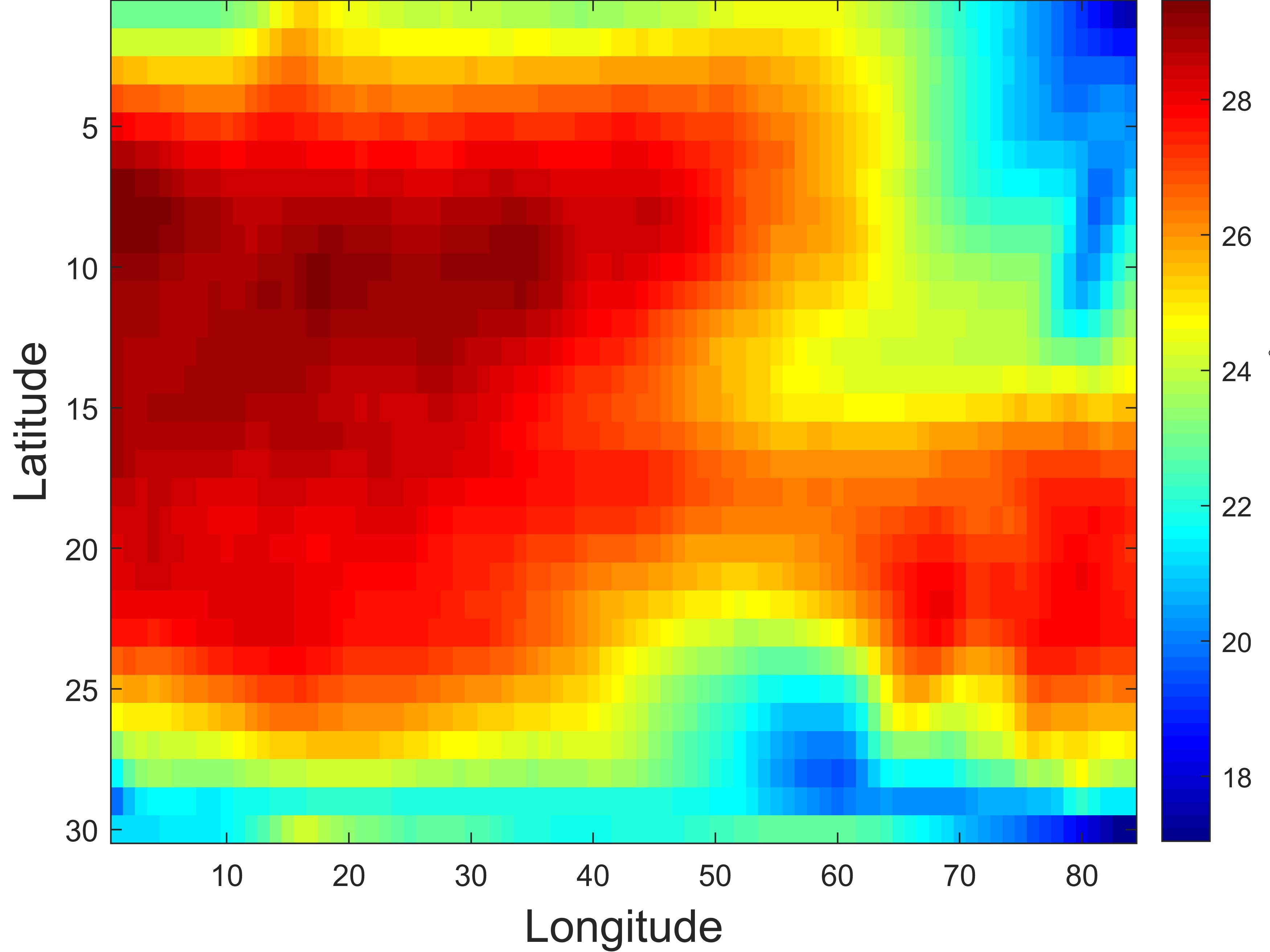}\\

\tiny  \textbf{LRTC-TIDT 1}& \tiny \textbf{LRTC-TIDT 2} & \tiny  \textbf{LRTC-TIDT 3} & \tiny  \textbf{LRTC-TIDT 4} & \tiny \textbf{LRTC-TIDT 5} & \tiny  \textbf{LRTC-TIDT 6}\\
\end{tabular}
\vspace{-0.2cm}
\caption{The predictive results of the LRTC-TIDT model and the MDT-Tucker
model on the  Pacific surface temperature dataset with h=6.}\label{fig.temperature_h6}
\vspace{-0.5cm}
\end{figure*}

\subsection{Discussion} This section presents only two discussions on model extensions. Other experimental analyses, including convergence and parameter analyses as well as multivariate time series recovery tests, are provided in Section 3 of the Supplementary Material (SM3).

\subsubsection{Corrected version using symmetric padding}
When the starting point $\M_1$ and the ending point $\M_t$ of the observed data are not connected, the lack of wrap-around continuity constrains the use of  temporal isometric delay-embedding transform and may degrade the model's accuracy. To address this issue, we apply symmetric padding to the observations
$\M = [\M_1, \dots, \M_t]^\top,$  
yielding
\begin{equation}
\M_{\mathrm{new}} = [\M_1, \dots, \M_t, \M_t, \dots, \M_1]^\top.
\end{equation}
Importantly, this symmetric padding strategy preserves the underlying recovery theory. We evaluate the proposed approach on the Abilene dataset under three sampling patterns (including Pattern-1 at $20\%$, Pattern-2 at $30\%$, and Pattern-3 at $50\%$), as  summarized in Table \ref{tab: Ab_padding}.  The results show that both RMSE and MAE are significantly improved relative to the standard scheme. This indicates that symmetric padding allows LRTC-TIDT to achieve superior performance when the observed endpoints are not connected.
% k=50 for No padding, k=100 for padding

% \begin{table}[!htbp]
% \renewcommand{\arraystretch}{1.4}
% \setlength\tabcolsep{3.5pt}
% \footnotesize
%  \vspace{-0.3cm}
%   \caption{Performance comparison (in MAE/RMSE) of LRTC-TIDT and other baselines under different non-random sampling scenarios with 30\% noisy observations and 70\% missing entries, k=20 for No padding, k=40 for padding.
%   }\label{tab: nyc_padding}
%   \centering
%   \vspace{-0.2cm}
% \begin{tabular}{l|ccc}
%      \hline
%     Method/Sampling & Pattern-1  & Pattern-2  & Pattern-3\\
%      \hline
%      No padding & 3.57/8.95 & 3.75/\textbf{7.93} & 3.48/8.75 \\
%      Symmetric padding & \textbf{3.48}/\textbf{8.75} & \textbf{3.64}/7.95 & \textbf{3.48}/\textbf{7.68} \\
%      \hline
% \end{tabular}
% \vspace{-0.2cm}
% \end{table}

\begin{table}[!htbp]
\renewcommand{\arraystretch}{1.4}
\setlength\tabcolsep{3.5pt}
\footnotesize
 \vspace{-0.3cm}
  \caption{Performance comparison (in MAE/RMSE) of LRTC-TIDT and its symmetric padding version under different non-random sampling scenarios using Abilene dataset. 
  }\label{tab: Ab_padding}
  \centering
  \vspace{-0.2cm}
\begin{tabular}{l|ccc}
     \hline
    Method/Sampling & Pattern-1  & Pattern-2  & Pattern-3\\
     \hline
     LRTC-TIDT (Standard)& 0.98/1.94 & 1.46/2.99 & 1.13/2.53 \\
    LRTC-TIDT (Symmetric Padding ) & \textbf{0.85}/\textbf{1.78} & \textbf{0.94}/\textbf{2.30} & \textbf{0.92}/\textbf{2.01} \\
     \hline
\end{tabular}
\vspace{-0.2cm}
\end{table}

\subsubsection{Extended version in the generalized T-SVD framework}
Within the t-SVD framework under arbitrary invertible linear transforms, the transformed form of a tensor $ \mathcal{Z}\in\mathbb{R}^{ n_1\times\cdots\times  n_d}$ is defined as
$\mathcal{Z}_\mathfrak{L} := \mathfrak{L}(\mathcal{Z}) 
= \mathcal{Z}\times_3 \mathrm{L}_{n_3}\times_4 \cdots \times_{p+2} \mathrm{L}_{n_d},$
where $\mathrm{L}_{n_j} \in \mathbb{R}^{n_j \times n_j}$ denotes an arbitrary invertible linear transform matrix, such as the \textit{Discrete Fourier Transform} (DFT) matrix or the \textit{Discrete Cosine Transform} (DCT) matrix.
These transform matrices satisfy$
(L_{n_d}^* \otimes \cdots \otimes L_{n_3}^*)(L_{n_d} \otimes \cdots \otimes L_{n_3}) 
= \ell \cdot I_{n_3 \cdots n_d},$
where $\otimes$ denotes the Kronecker product and $I_{n_3 \cdots n_d}$ is the identity matrix of compatible size. 
For example, $\ell = \prod_{j=3}^d n_j$ for DFT matrices $F_{n_j}$, since $F_{n_j}^* F_{n_j} = n_j I_{n_j}$, whereas $\ell = 1$ for DCT matrices $C_{n_j}$, as $C_{n_j}^* C_{n_j} = I_{n_j}$, for $j = 3, \dots, d$.
Under the  generalized t-SVD framework,  tensor nuclear norm (TNN) is defined as
$
\begin{aligned}
\|\mathcal{Z}\|_{\circledast,\mathfrak{L}}:=  \norm{\operatorname{bdiag}(\mathcal{Z}_\mathfrak{L})}_*/\ell,
\end{aligned}
$
where  $\|\cdot\|_*$ denotes the nuclear norm of a matrix. Based on this, we obtain the LRTC-TIDT model under the generalized t-SVD framework:
\begin{equation}\label{LRTC-TIDT-L}
\min_{\X\in \mathbb{R}^{t \times n_1 \times\cdots\times n_p}} ~\norm{\mathcal{H}_k(\X)}_{\circledast,\mathfrak{L}},\ \text{s.t.} \ \Pomega(\X) =\Pomega(\M).
\end{equation}
It is evident that when the transform $\mathfrak{L}$ is chosen as the Discrete Fourier Transform (DFT), model (\ref{LRTC-TIDT-L}) degenerates to the standard LRTC-TIDT model (\ref{LRTC-TIDT}). Note that the recovery theory of LRTC-TIDT under the generalized t-SVD framework can be derived in a manner analogous to the standard case.
In our experiments, we consider three invertible linear transforms, namely the FFT, the DCT, and a random orthogonal transform (ROT). We conduct evaluations on the Pacific dataset under prediction horizons of 4, 6, and 8. As illustrated in Fig.~\ref{fig.fftdctrot}, the results  demonstrate that the DFT-based transform consistently outperforms the other transforms, highlighting its superiority under the standard t-SVD framework.
\begin{figure}[!htbp]
\renewcommand{\arraystretch}{0.5}
\setlength\tabcolsep{0.5pt}
\centering
\vspace{-0.2cm}
\begin{tabular}{ccccccc}
\centering
\includegraphics[width=40.3mm, height = 38.3mm]{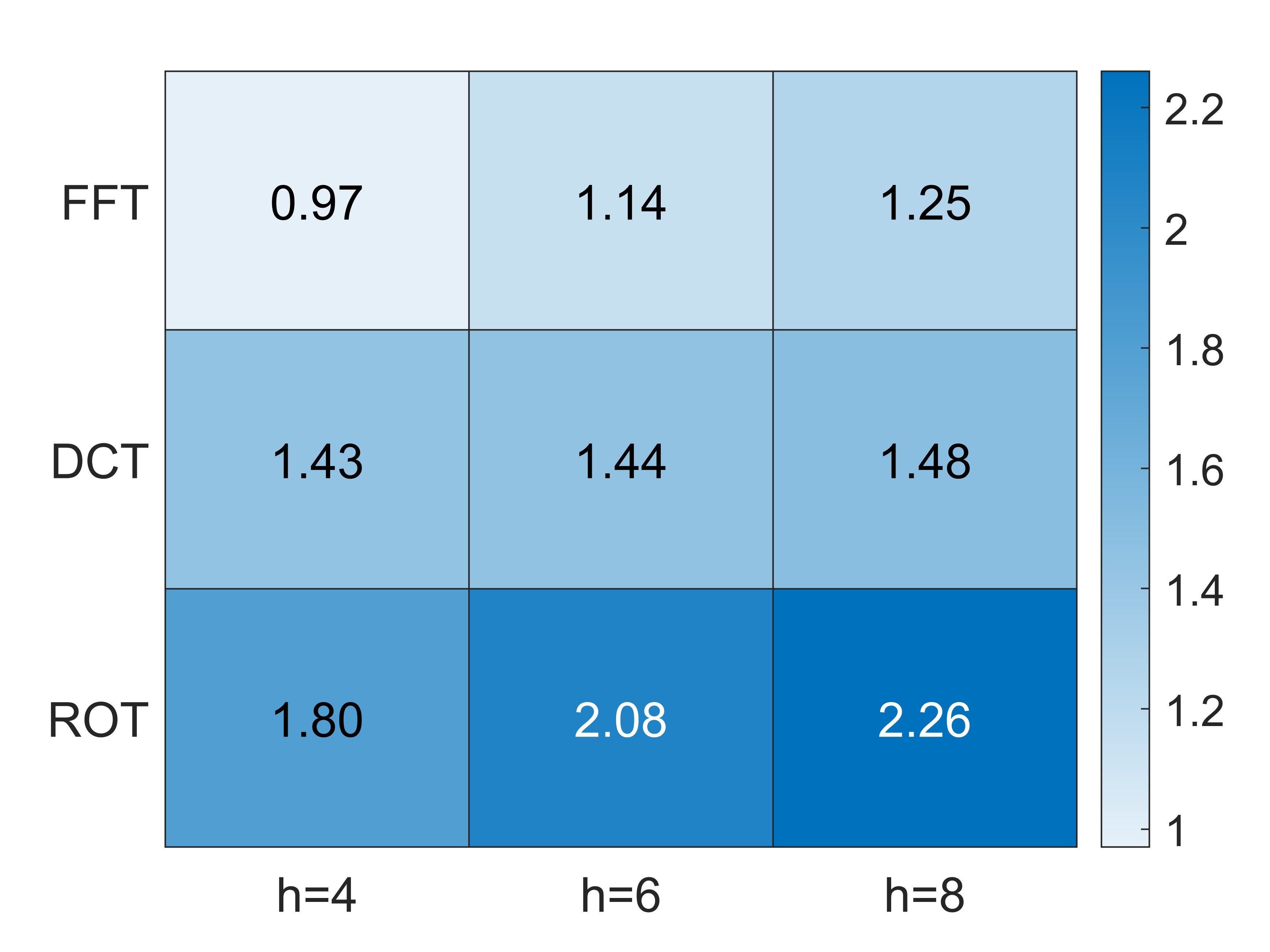}&
\includegraphics[width=40.3mm, height = 38.3mm]{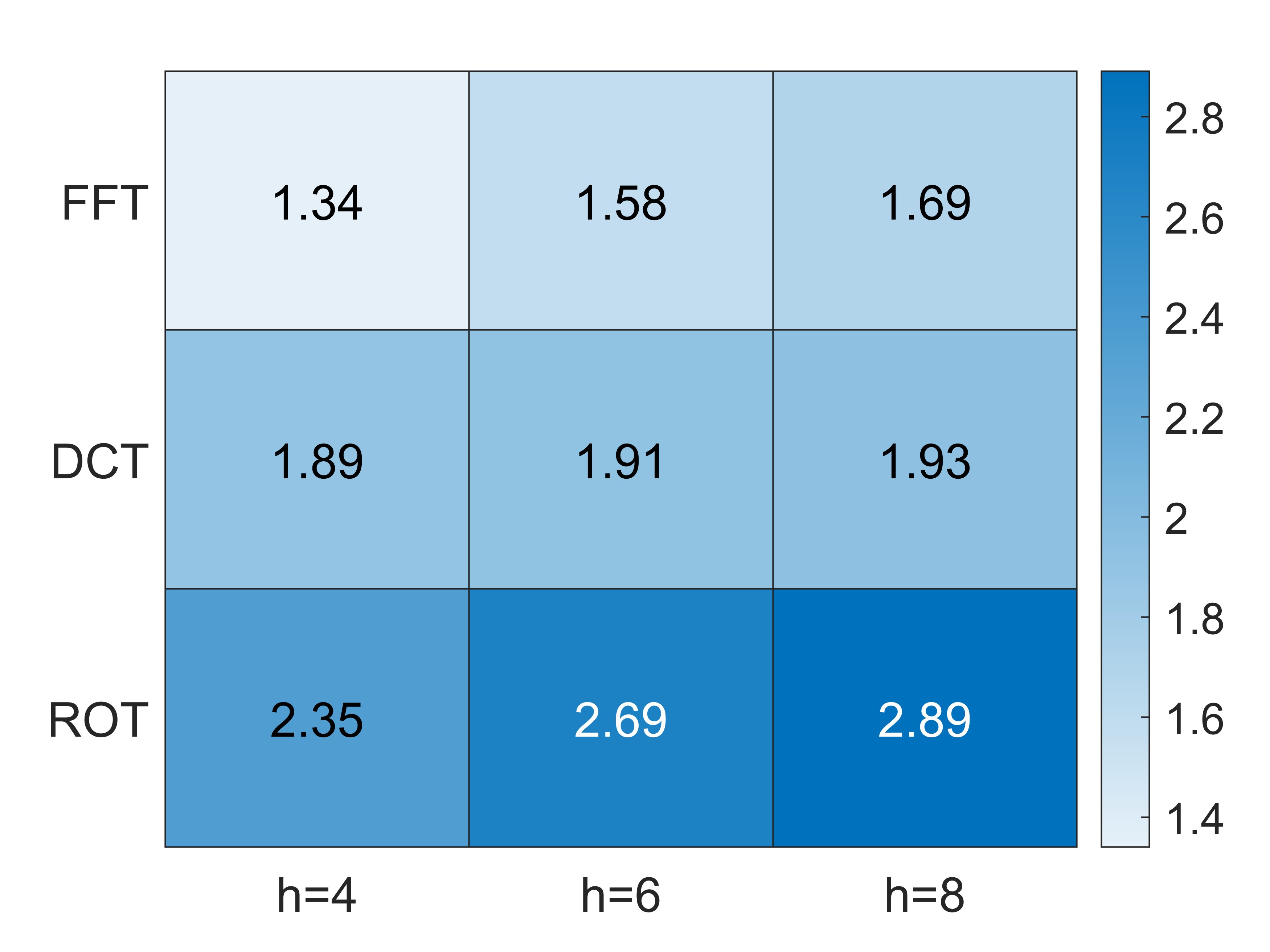}\\
\scriptsize \textbf{MAE}& \scriptsize \textbf{RMSE}\\
\end{tabular}
\vspace{-0.2cm}
\caption{The MAE (left) and RMSE (right) of Pacific dataset prediction  using  the  LRTC-TIDT  model  under different invertible linear  transforms.}\label{fig.fftdctrot}
\vspace{-0.2cm}
\end{figure}
%k=20;

%理论部分是一致的，我们选择DCT,FFT,ROT来作为测试.
%DCT,FFT,ROT

\section{Conclusion}\label{sec_conclusion}

This study focuses on recovering multidimensional time series from partially observed data with non-random sampling, considering both noiseless and noisy scenarios. Specifically, we propose the LRTC-TIDT model, which leverages the high-order tensor nuclear norm after applying the temporal isometric delay-embedding transform. The model demonstrates excellent recovery performance across various time-series settings. More importantly, we establish an exact recovery guarantee theory for non-random sampling based on the LRTC-TIDT framework, revealing a significant relationship between recovery performance and the minimum temporal sampling rate.
This work not only provides an effective recovery tool for non-random missing data scenarios but also offers valuable insights into time-series data modeling. Extensive experiments on non-random missing network flow reconstruction, noisy urban traffic data estimation, and short-term temperature field prediction tasks validate the effectiveness of the proposed model in recovering and forecasting multidimensional time series.
However, as mentioned earlier, the temporal Hankel low-rank property relies on the smoothness or periodicity of the temporal dimension. In the absence of these characteristics, the proposed model may become less effective, which motivates us to explore more learning-based norm minimization approaches in future work \cite{liu2022time}. 

\appendix
\section{Proof of Lemma 4.1 \& 4.2}
\subsection{Proof of Lemma 4.1}\label{appedix:lemma4.1}
\begin{proof}
 According to the definition of isometric delay-embedding transform, we have
\begin{equation}
\mathcal{H}_k(\bm{m})=\frac{1}{\sqrt{k}}
\left[\begin{array}{cccc}
m_1 & m_2 & \cdots & m_{k} \\
m_2 & m_3 & \cdots & m_{k+1} \\
\vdots & \vdots & \vdots & \vdots \\
m_t & m_{1} & \cdots & m_{k-1}
\end{array}\right], \text{where} ~ 
\bm{m}=\left[\begin{array}{c}
m_1  \\
m_2\\
\vdots \\
m_t 
\end{array}\right] .
\end{equation}
% Assume that $\mathcal{D}$ is an operator that delay the elements of a vector by one position; namely,
% $$
% \mathcal{D}(\mathbf{m})=\left[\begin{array}{c}
% m_2  \\
% m_3\\
% \vdots \\
% m_{1} 
% \end{array}\right],
% $$
Then $[\mathcal{H}_k(\bm{m})](:,j+1,:,\cdots,:)$ is given by $\mathcal{S}^{j}(\bm{m})$, i.e.,
$
\mathcal{H}_k(\bm{m})=\frac{1}{\sqrt{k}} \left[\bm{m}, \mathcal{S}(\bm{m}), \cdots, \mathcal{S}^{k-1}(\bm{m})\right],$
where  $\mathcal{S}(\bm{m})=[m_2,m_3,\cdots,m_t,m_1]^{\mathrm{T}}$.
 We divide $\mathcal{H}_{k}(\bm{m})$ into $r$ submatrices $\mathcal{H}_{k}(\bm{m})$ $=\frac{1}{\sqrt{k}}[A_1,A_2,\cdots,A_r]$, where  $A_i$ have $b_i$ columns with $1\leq b_i\leq \left\lceil\frac{k}{r}\right\rceil $ ($i=1, \cdots, r$) and $\sum_{i=1}^{r}b_i=k$. For every $A_i\in\mathbb{R}^{t\times b_i}$, a rank-1 matrix $\hat{A}_i\in\mathbb{R}^{t\times b_i}$ is contructed by repeating the first column of $A_i$, then we get
\begin{equation}
\begin{aligned}
\|A_i-\hat{A}_i\|_F &   =\sqrt {\norm{\mathcal{S}(\bm{m})-\bm{m}}_F^2+\cdots+\norm{\mathcal{S}^{b_i-1}(\bm{m})-\bm{m}}_F^2}\leq \sqrt {\sum_{j=1}^{b_i-1}j^2 \eta(\bm{m})^2}\\
&=\sqrt{\frac{b_i(b_i-1)(2b_i-1) }{6}}\eta(\bm{m})\leq b_i\sqrt{\frac{(b_i-1)}{3}}\eta(\bm{m}),
\end{aligned}
\end{equation} 
where the first inequality holds because 
\begin{equation}
\begin{aligned}
\norm{\mathcal{S}^j(\bm{m})-\bm{m}}_F &=\norm{\mathcal{S}^j(\bm{m})-\mathcal{S}^{j-1}(\bm{m})+\mathcal{S}^{j-1}(\bm{m})-\mathcal{S}^{j-2}(\bm{m})+\cdots+\mathcal{S}(\bm{m})-\bm{m}}_F\\
&\leq \norm{\mathcal{S}^j(\bm{m})-\mathcal{S}^{j-1}(\bm{m})}_F+\cdots+\norm{\mathcal{S}(\bm{m})-\bm{m}}_F \leq j \eta(\bm{m})
\end{aligned}
\end{equation}
Let $Z= \frac{1}{\sqrt{k}}[\hat{A}_1,\hat{A}_2,\cdots, \hat{A}_r]$, obviously, rank$(Z)\leq r$. Hence,
\begin{equation}
\begin{aligned}
&\epsilon_r(\mathcal{H}_k(\bm{m})) \leq \|\mathcal{H}_k(\bm{m})-Z\|_F=  \frac{1}{\sqrt{k}}\sqrt{\sum_{i=1}^{r}\|A_i-\hat{A}_i\|^2_F}\\
& \leq \frac{1}{\sqrt{k}}\sqrt{\sum_{i=1}^rb_i^2\frac{b_i-1}{3}\eta(\bm{m})^2}\leq\frac{1}{\sqrt{k}}\left\lceil\frac{k}{r}\right\rceil\sqrt{\frac{k-r}{3}}\eta(\bm{m})= \sqrt{\frac{k-r}{3k}}\left\lceil\frac{k}{r}\right\rceil \eta(\bm{m}). 
\end{aligned}
\end{equation} 
\end{proof}

\subsection{Proof of Lemma 4.2}\label{appedix:lemma4.2}
\begin{proof}
Set $a=\left\lceil\frac{k}{\tau}\right\rceil$, then decompose $\mathcal{H}_k(\bm{m})$ into the concatenation of $a$ subtensors, namely $\mathcal{H}_k(\bm{m})=\frac{1}{\sqrt{k}}\left[A_1, A_2, \cdots, A_a\right]$,
such that  $A_1,A_2,...,A_{a-1} \in\mathbb{R}^{t \times \tau} $  and $A_a \in\mathbb{R}^{t \times (k-(a-1)\tau)} $.
We consider $Z=\frac{1}{\sqrt{k}}[A_1,A_1,...,A_1^{'}]$, where $A_1^{'}=[A_1](:,1:k-(a-1)\tau)$. Since $\operatorname{rank}(Z) \leq r$,
\begin{equation}
 \varepsilon_{\tau}(\mathcal{H}_k(\bm{m}))  \leq \|\mathcal{H}_k(\bm{m})-Z\|_F
 \leq \frac{1}{\sqrt{k}}\sum_{i=1}^a\normF{A_i-A_1} \leq \frac{\tau}{\sqrt{k}}(\left\lceil\frac{k}{\tau}\right\rceil-1)\beta_{\tau}(\bm{m}).
\end{equation}
\end{proof}

\section{ Multidimensional extension  of Lemma 4.1 \& 4.2}\label{appde:b}
In this section, we will demonstrate that periodicity and smoothness along the time dimension of the 
original time series $ \M \in \mathbb{R}^{t \times n_1 \times \cdots \times n_p}$ can lead to the low rank of the temporal Hankel tensor $\mathcal{H}_k(\M) \in \mathbb{R}^{t \times k \times n_1 \times \cdots \times n_p}$. In other word, the temporal Hankel low-rankness can stem from the periodicity and smoothness of the original data.
Let $\widetilde{\M}\in \mathbb{R}^{t \times 1\times n_1 \times\cdots\times n_p }$ be a shape variant of $\M\in \mathbb{R}^{t \times n_1 \times\cdots\times n_p}$(i.e.,$\widetilde{\M}=reshape(\M,t,1,n_1,...,n_p)$).
Then $[\mathcal{H}_k(\M)](:,j+1,:,\cdots,:)$ is given by $\mathcal{S}^{j}(\M)$, i.e.,
$\mathcal{H}_k(\M)=\left[\widetilde{\M}, \mathcal{S}(\widetilde{\M}), \mathcal{S}^{2}(\widetilde{\M}), \cdots, \mathcal{S}^{k-1}(\widetilde{\M})\right],$
where $\mathcal{S}$ is an operator that delays the elements of a tensor by one position along the first dimension; namely,
\begin{equation}
\mathcal{S}(\widetilde\M)=\left[\begin{array}{c}
\M_2  \\
\M_3\\
\vdots \\
\M_{1} 
\end{array}\right],
\widetilde\M=\left[\begin{array}{c}
\M_1  \\
\M_2 \\
\vdots \\
\M_t 
\end{array}\right].
\end{equation}
Similarly, we define $\mathcal{N}(\M)$ by circularly shifting $\widetilde{M}$ by $\tau$ positions along the first dimension, namely,
\begin{equation}
\mathcal{N}(\widetilde\M)=\left[\begin{array}{c}
\M_{1+\tau} \\
\M_{2+\tau}\\
\vdots \\
\M_{t+\tau}\\
\end{array}\right],
\text{ assuming that $\tau$ is the period, and $\M_i = \M_{i-t}$ for $i > t$.}
\end{equation}
To reach a rigorous conclusion, we consider the rank-$r$ approximation error of temporal Hankel tensor $\mathcal{H}_k(\M) \in \mathbb{R}^{t \times k \times n_1 \times\cdots\times n_p} $, which is denoted by $\varepsilon_r(\mathcal{H}_k(\M) )$ and defined as
\begin{equation}
\varepsilon_r(\mathcal{H}_k(\M))=\min _{\Z \in \mathbb{R}^{t \times k \times n_1 \times\cdots\times n_p}}\|\mathcal{H}_k(\M)-\Z\|_F, \text { s.t. } \operatorname{rank}_{\operatorname{t-SVD}}(\Z) \leq r.
\end{equation}

\begin{Prop}\label{smoothness-tensor}
For a multidimensional time‐series tensor $\M \in \mathbb{R}^{t \times n_1 \times \cdots \times n_p}$, its smoothness can be characterized by: $\eta(\M)=\|\M-\mathcal{S}(\M)\|_F$. For the transformed Hankel tensor $\mathcal{H}_{k}(\M)\in\mathbb{R}^{t \times k \times n_1 \times \cdots \times n_p}$,
its  rank-$r$ approximate error satisfies
\begin{equation}
\epsilon_r(\mathcal{H}_{k}(\M))\leq \sqrt{\frac{k-r}{3k}}\left\lceil\frac{k}{r}\right\rceil \eta(\M).
\end{equation}
\end{Prop}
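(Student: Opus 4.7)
The plan is to follow the structure of the proof of Lemma \ref{smoothness} verbatim, with scalar columns replaced by lateral slices and matrix rank replaced by t-SVD rank. First I would expand the Hankel tensor as
\begin{equation*}
\mathcal{H}_k(\M) = \frac{1}{\sqrt{k}}\bigl[\widetilde{\M},\,\mathcal{S}(\widetilde{\M}),\,\mathcal{S}^2(\widetilde{\M}),\,\ldots,\,\mathcal{S}^{k-1}(\widetilde{\M})\bigr],
\end{equation*}
partition its $k$ lateral slices into $r$ consecutive blocks $\mathcal{A}_1,\ldots,\mathcal{A}_r$ of widths $b_1,\ldots,b_r$ with $1\leq b_i\leq\lceil k/r\rceil$ and $\sum_i b_i = k$, and construct the approximant $\mathcal{Z} = \tfrac{1}{\sqrt{k}}[\hat{\mathcal{A}}_1,\ldots,\hat{\mathcal{A}}_r]$, where each $\hat{\mathcal{A}}_i$ is built by repeating the first lateral slice of $\mathcal{A}_i$ a total of $b_i$ times.

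The next step is the crucial one: showing $\operatorname{rank}_{\operatorname{t-SVD}}(\mathcal{Z})\leq r$. I would argue this in the Fourier domain along the last $p$ modes. Each $\hat{\mathcal{A}}_i$ is a tensor whose lateral slices are all identical, so every face slice $\hat{\mathcal{A}}_{i,\mathcal{F}}(:,:,i_3,\ldots,i_d)$ is a matrix with identical columns and has matrix rank at most one. Concatenating the $r$ blocks along the second mode therefore yields face slices $\mathcal{Z}_\mathcal{F}(:,:,i_3,\ldots,i_d)$ of matrix rank at most $r$, which means the multi-rank of $\mathcal{Z}$ is entrywise bounded by $r$ and the t-SVD rank is likewise at most $r$. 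Hence $\mathcal{Z}$ is feasible for the $\varepsilon_r$ minimization.

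For the error bound, I would exploit that the cyclic shift $\mathcal{S}$ is a Frobenius isometry on tensors, so the telescoping estimate
\begin{equation*}
\|\mathcal{S}^j(\widetilde{\M}) - \widetilde{\M}\|_F \leq \sum_{l=0}^{j-1}\|\mathcal{S}^{l+1}(\widetilde{\M}) - \mathcal{S}^l(\widetilde{\M})\|_F = j\,\eta(\M)
\end{equation*}
carries over unchanged from the vector case. This gives $\|\mathcal{A}_i-\hat{\mathcal{A}}_i\|_F^2 \leq \sum_{j=1}^{b_i-1} j^2\,\eta(\M)^2 \leq \tfrac{b_i^2(b_i-1)}{3}\eta(\M)^2$. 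Summing over $i$, using $b_i\leq\lceil k/r\rceil$ and $\sum_i(b_i-1)=k-r$, and dividing by $\sqrt{k}$ yields exactly the claimed bound $\varepsilon_r(\mathcal{H}_k(\M))\leq\sqrt{(k-r)/(3k)}\lceil k/r\rceil\eta(\M)$.

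The main obstacle I anticipate is the careful verification of the t-SVD rank bound for $\mathcal{Z}$, since in the general order-$(p+2)$ setting one must translate the intuitive notion of "$r$ repeated-slice blocks" into the multi-rank language under the DFT along the last $p$ modes. Once this is handled, the remaining arithmetic is essentially a slice-wise rerun of the Lemma \ref{smoothness} calculation, with the Frobenius isometry of $\mathcal{S}$ doing the work that the scalar shift did in the univariate case.
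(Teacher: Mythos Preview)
Your proposal is correct and follows essentially the same approach as the paper: it is the direct tensor analogue of the Lemma~\ref{smoothness} argument, with columns replaced by lateral slices, matrix rank replaced by t-SVD rank, and the cyclic-shift isometry carrying the telescoping bound over verbatim. The Fourier-domain justification that $\operatorname{rank}_{\operatorname{t-SVD}}(\mathcal{Z})\leq r$ is exactly the right way to close the one genuinely new step, and the remaining arithmetic is identical to the univariate case.
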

\begin{proof}
 The proof is provided in Section 1.1 of the Supplementary Material (SM1.1).   
\end{proof}

\begin{Prop}\label{periodicity-tensor}
For a multidimensional time‐series tensor $\M \in \mathbb{R}^{t \times n_1 \times \cdots \times n_p}$, its periodicity  can be characterized by: $\beta_{\tau}(\M)=\|\M-\mathcal{N}(\M)\|_F$. For the transformed Hankel tensor $\mathcal{H}_{k}(\M)\in\mathbb{R}^{t \times k \times n_1 \times \cdots \times n_p}$,
its  rank-$r$ approximate error satisfies
\begin{equation}
\epsilon_r(\mathcal{H}_{k}(\M))\leq \frac{\tau}{\sqrt{k}}(\left\lceil\frac{k}{\tau}\right\rceil-1)\beta_{\tau}(\M),
\end{equation}
\end{Prop}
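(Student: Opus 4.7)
The plan is to extend the proof of Lemma 4.2 from the univariate-vector case to the multidimensional-tensor case by performing the same block-repetition construction along mode $2$ of the temporal Hankel tensor. Observing that each lateral slice $[\mathcal{H}_k(\M)](:,j+1,:,\ldots,:)$ coincides with $\mathcal{S}^j(\widetilde{\M})$, and that the period-$\tau$ shift $\mathcal{N}$ satisfies $\mathcal{N}=\mathcal{S}^{\tau}$, I first rewrite $\mathcal{H}_k(\M)=\frac{1}{\sqrt{k}}[\A_1,\A_2,\ldots,\A_a]$ along mode $2$, where $a=\lceil k/\tau\rceil$ and each $\A_i$ stacks the consecutive lateral slices $\mathcal{S}^{(i-1)\tau}(\widetilde{\M}),\ldots,\mathcal{S}^{(i-1)\tau+\tau-1}(\widetilde{\M})$ (with $\A_a$ possibly truncated if $\tau\nmid k$). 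I then define the rank-controlled approximant $\Z:=\frac{1}{\sqrt{k}}[\A_1,\A_1,\ldots,\A_1']$, where $\A_1'$ is the truncation of $\A_1$ to the width of $\A_a$. This is exactly the tensor analogue of the construction in the proof of Lemma 4.2.

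The key structural step is to verify $\operatorname{rank}_{\operatorname{t-SVD}}(\Z)\leq\tau$. Because $\Z$ is built by repeating $\A_1$ along mode $2$, every face slice $\Z(:,:,i_3,\ldots,i_{p+2})$ is a $t\times k$ matrix whose columns are a repetition of only $\tau$ distinct columns, so each face slice has matrix rank at most $\tau$. Since the discrete Fourier transform defining the t-SVD is applied along modes $3$ through $p+2$, rather than mode $2$, the column-repetition pattern is preserved face slice by face slice in the Fourier domain. Consequently every face slice of $\Z_\mathcal{F}$ has matrix rank at most $\tau$, yielding $\operatorname{rank}_{\operatorname{t-SVD}}(\Z)\leq\tau$; monotonicity of $\varepsilon_r$ in $r$ then delivers $\varepsilon_r(\mathcal{H}_k(\M))\leq\|\mathcal{H}_k(\M)-\Z\|_F$ in the regime $r\geq\tau$ where the stated bound is informative.

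The error bound then follows by a telescoping argument that parallels the univariate case. Using $\mathcal{N}=\mathcal{S}^{\tau}$ together with the fact that $\mathcal{S}$ and $\mathcal{N}$ act as permutations and are therefore Frobenius isometries, I can rewrite each column-wise contribution as $\mathcal{S}^{(i-1)\tau+j}(\widetilde{\M})-\mathcal{S}^{j}(\widetilde{\M})=\mathcal{S}^{j}\bigl(\mathcal{N}^{i-1}(\widetilde{\M})-\widetilde{\M}\bigr)$, so its Frobenius norm is bounded by $\|\mathcal{N}^{i-1}(\widetilde{\M})-\widetilde{\M}\|_F\leq (i-1)\beta_\tau(\M)$ via the identity $\mathcal{N}^{i-1}(\widetilde{\M})-\widetilde{\M}=\sum_{l=0}^{i-2}\mathcal{N}^{l}\bigl(\mathcal{N}(\widetilde{\M})-\widetilde{\M}\bigr)$ combined once more with the isometry of $\mathcal{N}$. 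Summing the block-wise errors $\|\A_i-\A_1\|_F$ over $i=2,\ldots,a$ and absorbing the $1/\sqrt{k}$ normalization reproduces, term for term, the concluding estimate in the proof of Lemma 4.2, thereby giving $\varepsilon_r(\mathcal{H}_k(\M))\leq \frac{\tau}{\sqrt{k}}(\lceil k/\tau\rceil-1)\beta_\tau(\M)$.

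The main obstacle I expect is the t-SVD rank argument outlined in the second paragraph. In the matrix setting the implication ``column repetition implies low rank'' is immediate, but here the t-SVD rank is defined through block diagonalization after a Fourier transform on the last $p$ modes, and I must carefully confirm that this transform, being applied transverse to the mode-$2$ repetition, really preserves the column-repetition structure on every face slice and hence controls the rank of every face slice of $\Z_\mathcal{F}$ uniformly by $\tau$. The remaining ingredients—the block decomposition along mode $2$, the telescoping identity built on $\mathcal{N}=\mathcal{S}^{\tau}$, and the final summation—are direct tensor analogues of the univariate calculation and should go through without additional conceptual difficulty.
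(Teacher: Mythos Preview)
Your proposal is correct and follows essentially the same approach as the paper: the block-repetition construction $\Z=\frac{1}{\sqrt{k}}[\A_1,\A_1,\ldots,\A_1']$ along mode $2$ and the telescoping bound on $\|\A_i-\A_1\|_F$ are exactly the tensor analogues of the proof of Lemma~4.2, and your observation that the DFT acts only on modes $3,\ldots,p+2$ and therefore preserves the mode-$2$ column-repetition pattern face slice by face slice is the right way to conclude $\operatorname{rank}_{\operatorname{t-SVD}}(\Z)\le\tau$. Your caveat that the resulting inequality is meaningful for $r\ge\tau$ matches the paper's own handling in Lemma~4.2, whose proof in fact bounds $\varepsilon_\tau$ and asserts $\operatorname{rank}(Z)\le r$ without further comment.
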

\begin{proof}
 The proof is provided in Section 1.2 of the Supplementary Material (SM1.2).   
\end{proof}

\section{Proof of  Theorem 4.2}\label{appendix:c}
The proof of Theorem \ref{thm: approximate non-random tensor completion} consists of four main parts:
1. Defining key concepts such as the projection of the temporal Hankel tensor and the subgradient of the temporal Hankel tensor nuclear norm;
2. Establishing the relationships among various operator norms (Lemmas \ref{lemmauv}, \ref{lemmatuv}, \ref{lemmaeq}, and \ref{lemma2.18});
3. Constructing the dual certificates (Lemma \ref{lemmadu});
4. Completing the recovery guarantee by verifying the dual conditions (Lemma \ref{lemmaxm}). Once Lemma \ref{lemmaxm} is established, Theorem \ref{thm: approximate non-random tensor completion} can be derived by following the proof strategy in \cite{candes2010matrix}. The detailed proof of Theorem \ref{thm: approximate non-random tensor completion}, together with the proofs of Lemmas \ref{lemmauv}-\ref{lemmaxm}, is provided in Section 2 of the Supplementary Material (SM2).

% We only provide the proof of Theorem 4 in Section III of the supplementary material, since the proof of Theorem 3 can be obtained as a straightforward degeneration from the noisy case. 
 
% To provide a complete proof of the theorem \ref{thm: approximate non-random tensor completion}, we first present the definitions of temporal Hankel tensor projection and the subgradient of the temporal Hankel
% tensor nuclear norm. Subsequently, we derive Lemmas \ref{lemmauv}, \ref{lemmatuv}, \ref{lemmaeq}, and \ref{lemma2.18}, which lead us to Lemma \ref{lemmadu},and further to Lemma \ref{lemmaxm}.
% Finally, we utilize Lemma  \ref{lemmaxm} to complete the proof.

\begin{defn} [Projection in temporal Hankel transform space]\label{TUV}
Let $ \mathcal{H}_k(\M) \in \mathbb{R}^{t \times k \times n_1\cdots \times n_p}$ with $\operatorname{rank}_{\operatorname{t-SVD}}(\mathcal{H}_k(\M))=r$, and its skinny t-SVD is $\mathcal{H}_k(\M) = \U*\mathcal{S}*\V^\mathrm{T}$, where $\U \in \mathbb{R}^{t \times r \times n_1 \cdots\times n_p }$ and $\V \in \mathbb{R}^{k \times r \times n_1 \cdots\times n_p }$ are the left and right singular tensor, respectively.  Define $\mathbb{T}$ by the set
$\mathbb{T}=\{\U * \X^\mathrm{T}+\Y *\V^\mathrm{T} \mid \X \in \mathbb{R}^{k \times r \times n_1 \times \cdots \times n_p},
\Y \in 
\mathbb{R}^{t \times r \times n_1 \times \cdots \times n_p}\}$
and by $\mathbb{T}^{\perp}$ its orthogonal complement.
For any $\Z \in \mathbb{R}^{t \times k \times n_1\cdots \times n_p}$, the projections onto $\mathbb{T}$ and its complementary set $\mathbb{T}^{\perp}$ are respectively denoted as
$$
\mathcal{P}_{\mathbb{T}}(\Z)=\U * \U ^\mathrm{T} *  \Z+\Z * \V * \V^\mathrm{T}-\U * \U * \Z * \V * \V^\mathrm{T},
$$
and
$\mathcal{P}_{\mathbb{T}^{\perp}}(\Z)=\Z-\mathcal{P}_{\mathbb{T}}(\Z).$
Similarly, define $\mathbb{U},\mathbb{V}$ by the set
$$
\begin{aligned}
 \mathbb{U}=\left\{\U * \X^\mathrm{T} \mid \X \in \mathbb{R}^{k \times r \times n_1 \times \cdots \times n_p} \right\},
\mathbb{V}=\left\{\Y *\V^\mathrm{T} \mid  \Y \in \mathbb{R}^{t \times r \times n_1 \times \cdots \times n_p}\right\},   
\end{aligned}
$$
the projection on the set $\mathbb{U},\mathbb{V}$ is as follows:
$$
\mathcal{P}_{\mathbb{U}}(\Z)=\U * \U ^\mathrm{T} *  \Z
,
\mathcal{P}_{\mathbb{V}}(\Z)=\Z *\V * \V^\mathrm{T}.
$$
\end{defn}

\begin{defn}[Subgradient of temporal Hankel tensor nuclear norm]
Suppose $\mathcal{H}_k(\M)$ $\in  \mathbb{R}^{t\times k \times n_1 \times \cdots \times n_p}$ with $\operatorname{rank}_{\text {t-SVD }}(\mathcal{H}_k(\M))=r$, and it  has skinny t-SVD  $\mathcal{H}_k(\M) = \U*\mathcal{S}*\V^\mathrm{T}$. Then the subdifferential (the set of subgradients) of $\|\cdot\|_{\circledast}$ at $\mathcal{H}_k(\M)$ is
\begin{equation}
\partial\|\mathcal{H}_k(\M)\|_{\circledast}=\left\{\U * \mathcal{V}^\mathrm{T}+\W \mid \U^\mathrm{T} * \W=0,
\W *\V=0,\|\W\| \leq 1 \right\}.
\end{equation}
\end{defn}
\begin{lemmaa}\label{lemmauv}
  Suppose that $\M \in \mathbb{R}^{t  \times n_1\times\cdots\times n_p}$  obeys the temporal Hankel  tensor incoherence conditions and $\rho(\Omega) > 1-\alpha k /(2 \mu r (r_s+1)t )$  where $\rho(\Omega) $ is the minimum  temporal sampling rate of the original sampling set  $\Omega \in  \left[t \right] \otimes \left[n_1\right]  \otimes  \cdots \otimes \left[n_p\right] $, $\Omega_{\mathcal{H}} \in  \left[t \right] \otimes \left[k \right] \otimes\left[n_1\right]  \otimes  \cdots \otimes \left[n_p\right] $ is the temporal Hankel sampling set corresponding to the original sampling set $\Omega$,  then 
  \begin{equation}
  \left\| \Pu \Pomegahc \Pu \right\|_{op} < \frac{\alpha }{2 (r_s+1)}, ~\left\| \Pv \Pomegahc \Pv \right\|_{op} <  \frac{\alpha }{2 (r_s+1)}.
  \end{equation}
\end{lemmaa}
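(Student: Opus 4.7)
The plan is to reduce the operator-norm bound on $\Pu\Pomegahc\Pu$ to a sum of per-entry incoherence estimates over the unobserved positions of the temporal Hankel sampling tensor. Since $\Pu$ and $\Pomegahc$ are both orthogonal projectors, $\Pu\Pomegahc\Pu=(\Pomegahc\Pu)^{\mathrm T}(\Pomegahc\Pu)$, and therefore $\|\Pu\Pomegahc\Pu\|_{op}=\|\Pomegahc\Pu\|_{op}^{2}$. It thus suffices to bound $\|\Pomegahc\Pu\Z\|_F^{2}$ for $\Z$ of unit Frobenius norm. I would expand it entry-wise as
\begin{equation*}
\|\Pomegahc\Pu\Z\|_F^{2}=\sum_{(i_t,i_k,i_1,\dots,i_p)\notin\Omegah}\bigl|\langle\U^{\mathrm T}\ast\mathring{\mathfrak{e}}^{(i_t,i_k,i_1,\dots,i_p)},\;\U^{\mathrm T}\ast\Z\rangle\bigr|^{2},
\end{equation*}
where $\mathring{\mathfrak{e}}^{(\cdot)}$ denotes the appropriate standard basis tensor. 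By Cauchy--Schwarz each summand is controlled by the horizontal-slice quantity $\|\U^{\mathrm T}\ast\mathring{\mathfrak{e}}^{(\cdot)}\|_F$, which is directly supplied by the temporal Hankel tensor incoherence condition.

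The second ingredient is the geometric structure of $\Omegah^{\perp}$ dictated by the minimum temporal sampling rate. For any fixed $(i_k,i_1,\dots,i_p)$, the column $\mathcal{H}_k(\M)(:,i_k,i_1,\dots,i_p)$ is, up to the $1/\sqrt{k}$ factor, a cyclic shift of the univariate series $\M(:,i_1,\dots,i_p)$, so every missing time point in that series produces exactly one missing entry in that column of the face slice. Consequently each such column contains at most $(1-\rho(\Omega))t$ unobserved entries, which yields $|\Omegah^{\perp}|\le (1-\rho(\Omega))\,tkn$ with $n=n_1\cdots n_p$.

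I next plan to aggregate the per-entry incoherence bound column by column. Using $\|\U^{\mathrm T}\ast\mathring{\mathfrak{e}}_t^{(i_t)}\|_F\le\sqrt{\mu r/(tn)}$, together with the Fourier-domain block-diagonalization in which the $i$-th face slice of $\U_{\mathcal{F}}$ has rank $\bm{r}^{(i)}$ summing to $r_s$, I expect a per-basis-tensor estimate of the form $\|\Pu\mathring{\mathfrak{e}}^{(\cdot)}\|_F^{2}\le \mu r(r_s+1)/(tkn)$. Summing over the at most $(1-\rho(\Omega))tkn$ unobserved entries and invoking Cauchy--Schwarz then gives
\begin{equation*}
\|\Pomegahc\Pu\Z\|_F^{2}\le\frac{2\mu r(r_s+1)t(1-\rho(\Omega))}{k}\,\|\Z\|_F^{2}.
\end{equation*}
Substituting the hypothesis $\rho(\Omega)>1-\alpha k/(2\mu r(r_s+1)t)$ delivers $\|\Pu\Pomegahc\Pu\|_{op}<\alpha/(2(r_s+1))$. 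The companion bound on $\|\Pv\Pomegahc\Pv\|_{op}$ follows symmetrically: group the unobserved entries by rows of fixed $i_t$, each of which indexes a length-$k$ cyclic window of the series, and invoke the lateral-slice incoherence bound $\|\V^{\mathrm T}\ast\mathring{\mathfrak{e}}_k^{(i_k)}\|_F\le\sqrt{\mu r/(kn)}$.

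The main obstacle will be tracking the combinatorics behind the $(r_s+1)$ factor. The incoherence condition bounds a Frobenius norm aggregating contributions across feature dimensions, while the sum over unobserved entries needs a per-basis-tensor estimate. Bridging the two requires a careful Fourier-domain analysis exploiting Parseval's identity across the last $p$ modes and the orthonormality of the rank-$\bm{r}^{(i)}$ left singular face slices of $\U_{\mathcal{F}}$. Making these estimates tight enough to match the sharp factor $\alpha/(2(r_s+1))$, rather than a weaker polynomial in $r_s$, is the main bookkeeping hurdle.
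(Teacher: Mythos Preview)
Your global per-entry Cauchy--Schwarz is too coarse to reach the stated bound, and the intermediate numerology does not check out. The incoherence condition gives $\|\U^{\mathrm T}*e_{(i_t,i_k,i_1,\ldots,i_p)}\|_F^2=\|\U^{\mathrm T}*\mathring{\mathfrak{e}}_t^{(i_t)}\|_F^2\le\mu r/(tn)$; there is no extra $(r_s+1)/k$ factor, and the Fourier block-diagonalization you invoke does not produce one. Summing this over all $|\Omegah^\perp|\le(1-\rho)tkn$ missing positions, with Cauchy--Schwarz against the full $\|\U^{\mathrm T}*\Z\|_F$, yields only $\|\Pu\Pomegahc\Pu\|_{op}\le(1-\rho)\mu r k$. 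Substituting the hypothesis then gives $\alpha k^2/(2(r_s+1)t)$, which exceeds the target $\alpha/(2(r_s+1))$ whenever $k>\sqrt t$ --- and the paper allows $k$ up to $t$. Your displayed bound $2\mu r(r_s+1)t(1-\rho)/k$ is also arithmetically inconsistent with your own substitution: plugging in $1-\rho<\alpha k/(2\mu r(r_s+1)t)$ produces $\alpha$, not $\alpha/(2(r_s+1))$.

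The missing structural ingredient is that $\Pu(\cdot)=\U*\U^{\mathrm T}*(\cdot)$ acts \emph{lateral-slice by lateral-slice}: the $i_k$-th lateral slice of $\Pu\Z$ depends only on the $i_k$-th lateral slice $\Z_{i_k}$ of $\Z$. Equivalently, the tensors $\Pu e_a$ with different second index $i_k$ are mutually orthogonal, so $\|\Pu\Pomegahc\Pu\|_{op}=\max_{i_k}\|\tilde\Pu P_{\Omega^{\perp}_{i_k}}\tilde\Pu\|_{op}$, where $\tilde\Pu$ is the restriction of $\Pu$ to a single lateral slice and $\Omega^{\perp}_{i_k}$ is a cyclic time-shift of $\Omega^\perp$. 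Now Cauchy--Schwarz against $\U^{\mathrm T}*\Z_{i_k}$ (not the full $\U^{\mathrm T}*\Z$), summed over at most $(1-\rho)tn$ missing positions per slice, gives $\|\Pu\Pomegahc\Pu\|_{op}\le(1-\rho)\mu r$, and the hypothesis delivers the claim because $k\le t$. The bound for $\Pv$ proceeds via horizontal slices; the per-slice count is again at most $(1-\rho)tn$ (the hypothesis forces $(1-\rho)t<k$), the per-entry bound is $\mu r/(kn)$, and one obtains $\|\Pv\Pomegahc\Pv\|_{op}\le(1-\rho)\mu r t/k$, which is exactly why the factor $k/t$ appears in the sampling condition. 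The $(r_s+1)$ you are trying to manufacture at the per-entry level plays no role in this lemma; it sits in the hypothesis and cancels automatically once the sharp slice-wise bound is in place.
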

\begin{proof}
 The proof is provided in Section 2.1 of the Supplementary Material (SM2.1).   
\end{proof}
\begin{lemmaa}\label{lemmatuv}
  Let $\mathcal{H}_k(\M) \in \mathbb{R}^{ t\times k \times n_1 \times \cdots \times n_p}$ with skinny t-SVD $\mathcal{H}_k(\M) = \U*\mathcal{S}*\V^\mathrm{T}$,  $\Omega_{\mathcal{H}}^{\perp} \subseteq \left[t \right] \otimes \left[n_1\right]  \otimes  \cdots \otimes \left[n_p\right] $, $\Pt,\Pu,\Pv$ are given by  Definition \ref{TUV}, then we have 
 \begin{equation}
 \normop{\Pt\Pomegahc\Pt} \leq \normop{\Pu\Pomegahc\Pu}+ \normop{\Pv\Pomegahc\Pv}.
 \end{equation}
  \end{lemmaa}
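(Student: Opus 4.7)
The plan is to reduce the inequality to a comparison of ``energies'' $\normF{\Pomegahc\Z}^2$ over unit vectors in the subspaces $\mathbb{T}$, $\mathbb{U}$, and $\mathbb{V}$, and then to exploit the commutativity of $\Pu$ and $\Pv$ to split $\mathbb{T}$ into an internal orthogonal direct sum. First, I would use self-adjointness to rewrite both sides in energy form: since $\Pt\Pomegahc\Pt = (\Pomegahc\Pt)^{\mathrm{T}}(\Pomegahc\Pt)$ is self-adjoint and positive semidefinite, one has $\normop{\Pt\Pomegahc\Pt} = \sup\{\normF{\Pomegahc\Z}^2 : \Z \in \mathbb{T},\ \normF{\Z} = 1\}$, with analogous identities for $\Pu$ and $\Pv$. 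This turns the task into bounding $\normF{\Pomegahc\Z}^2$ for a unit vector $\Z \in \mathbb{T}$ in terms of the corresponding quantities restricted to $\mathbb{U}$ and $\mathbb{V}$.

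Next, I would exploit the structure of $\Pt$. Because $\Pu(\Z) = \U * \U^{\mathrm{T}} * \Z$ acts on the left of the t-product while $\Pv(\Z) = \Z * \V * \V^{\mathrm{T}}$ acts on the right, the two projectors commute, and from the definition of $\Pt$ in Definition \ref{TUV} this gives $\Pt = \Pu + \Pv - \Pu\Pv = \Pu + \Pv\Puc$. For every $\Z \in \mathbb{T}$ one therefore has $\Z = \Pu\Z + \Pv\Puc\Z$, with the two summands lying in $\mathbb{U}$ and $\mathbb{V}\cap\mathbb{U}^\perp$ respectively (since $\Pu(\Pv\Puc\Z) = \Pv\Pu\Puc\Z = 0$); they are orthogonal, so $\normF{\Z}^2 = \normF{\Pu\Z}^2 + \normF{\Pv\Puc\Z}^2$. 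Applying the triangle inequality followed by the two operator-norm bounds $\normF{\Pomegahc\Pu\Z} \le \sqrt{\normop{\Pu\Pomegahc\Pu}}\,\normF{\Pu\Z}$ and $\normF{\Pomegahc\Pv\Puc\Z} \le \sqrt{\normop{\Pv\Pomegahc\Pv}}\,\normF{\Pv\Puc\Z}$ (both following from $\normop{\Pomegahc\Pu}^2 = \normop{\Pu\Pomegahc\Pu}$ and its analogue for $\Pv$), and then the Cauchy--Schwarz inequality $(ax+by)^2 \le (a^2+b^2)(x^2+y^2)$, yields $\normF{\Pomegahc\Z}^2 \le (\normop{\Pu\Pomegahc\Pu} + \normop{\Pv\Pomegahc\Pv})\normF{\Z}^2$. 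Taking the supremum over unit $\Z \in \mathbb{T}$ closes the argument.

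The main subtlety is avoiding a spurious factor of $2$: a direct AM--GM bound on the cross term in $\normF{\Pomegahc\Pu\Z + \Pomegahc\Pv\Puc\Z}^2$ would give only $\normF{\Pomegahc\Z}^2 \le 2(\normF{\Pomegahc\Pu\Z}^2 + \normF{\Pomegahc\Pv\Puc\Z}^2)$, which is too weak. The sharp bound requires applying the triangle inequality \emph{before} squaring, so that Cauchy--Schwarz can pair the two operator-norm factors with the orthogonal components of $\Z$ delivered by the Pythagorean identity above, allowing the two contributions to add without duplication.
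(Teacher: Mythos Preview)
Your argument is correct. The three ingredients you isolate---(i) rewriting $\normop{\Pt\Pomegahc\Pt}$ as $\sup_{\Z\in\mathbb{T},\,\normF{\Z}=1}\normF{\Pomegahc\Z}^2$ via self-adjointness and positive semidefiniteness, (ii) the orthogonal splitting $\Z=\Pu\Z+\Pv\Puc\Z$ coming from the commutativity of $\Pu$ and $\Pv$, and (iii) the triangle-then-Cauchy--Schwarz step that pairs $\sqrt{\normop{\Pu\Pomegahc\Pu}},\sqrt{\normop{\Pv\Pomegahc\Pv}}$ with the Pythagorean components $\normF{\Pu\Z},\normF{\Pv\Puc\Z}$---combine exactly to give the stated bound without any spurious factor. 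Your remark about why AM--GM on the squared norm would lose a factor of $2$ is also accurate.

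The paper defers its own proof to the supplementary material (SM2.2), which is not reproduced here, so a line-by-line comparison is not possible. That said, this inequality and its proof are standard in the deterministic-sampling completion literature the paper builds on (in particular the isomeric/relative-well-conditionedness framework of \cite{liu2019matrix,liu2022recovery}), and the canonical argument there is precisely the one you give: exploit $\Pt=\Pu+\Pv\Puc$ (equivalently $\Pt=\Puc\Pv+\Pu$), use orthogonality of the two pieces, and finish with Cauchy--Schwarz. There is no reason to expect the supplementary proof to differ in any essential way.

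One minor point of care: you use $\Pt\Pomegahc\Pt=(\Pomegahc\Pt)^{\mathrm{T}}(\Pomegahc\Pt)$, which relies on $\Pomegahc$ being idempotent. In this paper $\Pomegahc$ is the orthogonal projection onto the complement of the Hankel sampling support, so this is fine; but if one only knew $\Pomegahc$ to be self-adjoint PSD, the same chain would still go through after replacing $\Pomegahc$ by its square root.
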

  \begin{proof}
 The proof is provided in Section 2.2 of the Supplementary Material (SM2.2).   
\end{proof}
\begin{lemmaa}\label{lemmaeq}
Let $\mathcal{H}_k(\M) \in \mathbb{R}^{ t\times k \times n_1 \times \cdots \times n_p}$ with skinny t-SVD $\mathcal{H}_k(\M) = \U*\mathcal{S}*\V^\mathrm{T}$, $\Pt$ is given by  $\Pt(\cdot)=\U * \U ^\mathrm{T} * (\cdot)+(\cdot) * \V * \V^\mathrm{T}-\U * \U * (\cdot) * \V * \V^\mathrm{T}$, $\Omega_{\mathcal{H}}\subseteq \left[t \right] \otimes \left[n_1\right]  \otimes  \cdots \otimes \left[n_p\right] $, then  $\Pt \Pomegah \Pt$ is invertible and
 $\normop{\Pt \Pomegahc \Pt}<1$ are equivalent.
 \end{lemmaa}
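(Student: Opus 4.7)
The plan is to reduce the claim to a Neumann-series calculation made possible by writing $\Pomegah$ and $\Pomegahc$ as complementary orthogonal projections. Since $\Pomegah + \Pomegahc$ equals the identity on the ambient tensor space and $\Pt$ is itself an orthogonal projection, composing on both sides immediately gives
\[ \Pt \Pomegah \Pt + \Pt \Pomegahc \Pt \;=\; \Pt, \]
so that, restricted to $\mathbb{T}$, one has $(\Pt \Pomegah \Pt)|_{\mathbb{T}} = \operatorname{id}_{\mathbb{T}} - (\Pt \Pomegahc \Pt)|_{\mathbb{T}}$. Both operators vanish on $\mathbb{T}^{\perp}$, so invertibility of $\Pt \Pomegah \Pt$ on $\mathbb{T}$ and the operator norm $\normop{\Pt \Pomegahc \Pt}$ on the ambient space reduce to the same spectral question on $\mathbb{T}$.

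The direction $\normop{\Pt\Pomegahc\Pt}<1 \Rightarrow$ invertibility I would handle by a direct Neumann series: if the strict inequality holds, $\operatorname{id}_{\mathbb{T}} - (\Pt\Pomegahc\Pt)|_{\mathbb{T}}$ admits the explicit inverse $\sum_{\ell\ge 0}\bigl((\Pt\Pomegahc\Pt)|_{\mathbb{T}}\bigr)^{\ell}$, and hence $\Pt\Pomegah\Pt$ is invertible on $\mathbb{T}$.

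For the converse I would argue by positivity. Because the mask $\bar{\Omega}_{\mathcal{H}}$ is $\{0,1\}$-valued, $\Pomegahc$ is an orthogonal Hadamard projection with $\Pomegahc = \Pomegahc^{*} = \Pomegahc^{2}$, whence
\[ \Pt\Pomegahc\Pt \;=\; (\Pomegahc\Pt)^{*}(\Pomegahc\Pt) \]
is positive semidefinite with spectrum contained in $[0,1]$ (since $\normop{\Pomegahc}\le 1$ and $\normop{\Pt}\le 1$). Its operator norm thus coincides with its largest eigenvalue, and if that eigenvalue were equal to $1$, I would extract a unit-norm witness $\X\in\mathbb{T}$ with $\langle \Pomegahc\X, \X \rangle = \normF{\Pomegahc\X}^{2} = 1$, forcing $\Pomegahc\X = \X$ and hence $\Pomegah\X = 0$. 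Then $\Pt\Pomegah\Pt\,\X = \Pt\Pomegah\X = 0$ with $\X\ne 0$ would contradict invertibility on $\mathbb{T}$, establishing $\normop{\Pt\Pomegahc\Pt}<1$.

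The only real subtlety is ensuring that the equality case in $\normop{\Pt\Pomegahc\Pt}\le 1$ produces a nonzero tensor that lies simultaneously in $\mathbb{T}$ and in the range of $\Pomegahc$; this step relies on $\Pomegahc$ being an orthogonal (not merely bounded) projection, which is automatic from the $0/1$ mask construction of $\bar{\Omega}_{\mathcal{H}}$ in Definition \ref{Temporal Hankel Sampling Set}. Once this is noted, the lemma is a clean spectral equivalence and no finer estimate on $\Omega_{\mathcal{H}}$ or on the t-SVD structure of $\mathcal{H}_k(\M)$ is needed here.
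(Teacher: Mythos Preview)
Your argument is correct and is the standard spectral route: decompose $\Pt\Pomegah\Pt = \Pt - \Pt\Pomegahc\Pt$ on $\mathbb{T}$, invoke the Neumann series for one direction, and for the converse exploit that $\Pt\Pomegahc\Pt=(\Pomegahc\Pt)^{*}(\Pomegahc\Pt)$ is positive semidefinite with operator norm at most $1$, so norm equal to $1$ produces a nonzero $\X\in\mathbb{T}$ in the kernel of $\Pt\Pomegah\Pt$. This is essentially the approach the paper adopts in its supplement; the only cosmetic remark is that by Definition~\ref{Temporal Hankel Sampling Set} the tensor $\bar{\Omega}_{\mathcal{H}}=\mathcal{H}_k(\bar{\Omega})$ literally carries the $1/\sqrt{k}$ normalization, but $\Omega_{\mathcal{H}}$ is taken as its support and $\Pomegahc$ is the associated $0/1$ Hadamard projection, so the orthogonal-projection property you rely on holds as stated.
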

 \begin{proof}
 The proof is provided in Section 2.3 of the Supplementary Material (SM2.3).   
\end{proof}
\begin{lemmaa}\label{lemma2.18}
  Let $\mathcal{H}_k(\M) \in \mathbb{R}^{t\times k \times n_1 \times \cdots \times n_p}$ with $\operatorname{rank}_{\operatorname{t-SVD}}(\mathcal{H}_k(\M))=r$, and it has
the skinny t-SVD $\mathcal{H}_k(\M)= \U*\mathcal{S} *\V^\mathrm{T}$,  if the operator $\Pt\Pomegah\Pt$ is invertible, then we have   $$\normop{\Ptc \Pomegah \Pt (\Pt\Pomegah\Pt)^{-1}}=\sqrt{\frac{1}{1-\normop{\Pt\Pomegahc\Pt}}-1}.$$
\end{lemmaa}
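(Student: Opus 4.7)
The plan is to eliminate $\Pomegah$ in favor of $\Pomegahc$ and reduce the identity to a short spectral calculation for the self-adjoint operator $B := \Pt\Pomegahc\Pt$. Using $\Pomegah = \I-\Pomegahc$ together with $\Ptc\Pt=0$, I would first rewrite
$$\Pt\Pomegah\Pt \;=\; \Pt - B, \qquad \Ptc\Pomegah\Pt \;=\; -\,\Ptc\Pomegahc\Pt,$$
so that on the subspace $\mathbb{T}$, where $(\Pt\Pomegah\Pt)^{-1}$ lives, the target operator reads $-\Ptc\Pomegahc\Pt\,(\I-B)^{-1}$. By Lemma~\ref{lemmaeq}, invertibility of $\Pt\Pomegah\Pt$ on $\mathbb{T}$ is equivalent to $\normop{B}<1$, which I may take for granted.

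Next I would introduce the auxiliary operator $C := \Pomegahc\Pt$ and exploit the fact that $\Pomegahc$ is an orthogonal projection, so that $C^{\mathrm{T}}C = \Pt\Pomegahc^{2}\Pt = B$. Decomposing $C = \Pt C + \Ptc C$ orthogonally (via $\Pt\Ptc = 0$) yields the Pythagorean identity
$$C^{\mathrm{T}}C \;=\; (\Pt C)^{\mathrm{T}}(\Pt C) + (\Ptc C)^{\mathrm{T}}(\Ptc C).$$
Since $\Pt C = B$ and $B$ is self-adjoint, the first summand is simply $B^{2}$, so
$$(\Ptc C)^{\mathrm{T}}(\Ptc C) \;=\; B - B^{2} \;=\; B(\I - B).$$

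Setting $D := -\,\Ptc C\,(\I-B)^{-1}$ on $\mathbb{T}$, the standard identity $\normop{D}^{2} = \normop{D^{\mathrm{T}}D}$ then gives
$$D^{\mathrm{T}}D \;=\; (\I-B)^{-1}(\Ptc C)^{\mathrm{T}}(\Ptc C)(\I-B)^{-1} \;=\; (\I-B)^{-1}B(\I-B)(\I-B)^{-1} \;=\; (\I-B)^{-1}-\I,$$
where the last step uses that $B$ commutes with $(\I-B)^{-1}$. Because $B$ is positive semi-definite with $\normop{B} = \normop{\Pt\Pomegahc\Pt} < 1$, the spectral mapping theorem gives $\normop{(\I-B)^{-1}-\I} = 1/(1-\normop{\Pt\Pomegahc\Pt}) - 1$, and taking square roots produces exactly the claimed formula.

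The main obstacle, and essentially the only subtle point, is the careful bookkeeping on domains: $(\Pt\Pomegah\Pt)^{-1}$ is only an inverse as a map $\mathbb{T}\to\mathbb{T}$, so every identity in the chain above must be read as an operator equality on $\mathbb{T}$ rather than on the ambient tensor space, and one has to be sure that $(\Ptc C)(\I-B)^{-1}$ makes sense as a composition of an operator on $\mathbb{T}$ followed by one mapping into $\mathbb{T}^{\perp}$. Once this is handled, the entire proof reduces to the Pythagorean splitting of $C^{\mathrm{T}}C$ and the commutativity of $B$ with $(\I-B)^{-1}$.
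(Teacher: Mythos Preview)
Your proof is correct and follows the standard spectral route used in the paper: rewrite both occurrences of $\Pomegah$ via $\I-\Pomegahc$, reduce to the self-adjoint operator $B=\Pt\Pomegahc\Pt$ on $\mathbb{T}$, use the Pythagorean identity $(\Ptc\Pomegahc\Pt)^{\mathrm{T}}(\Ptc\Pomegahc\Pt)=B-B^{2}$, and finish with spectral mapping for $x\mapsto x/(1-x)$ on $[0,\normop{B}]$. The only care points you already flag---reading the inverse on $\mathbb{T}$ and the positivity/self-adjointness of $B$ ensuring $\lambda_{\max}(B)=\normop{B}$ in finite dimensions---are exactly the ones needed, so nothing is missing.
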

\begin{proof}
 The proof is provided in Section 2.4 of the Supplementary Material (SM2.4).   
\end{proof}
\begin{lemmaa}\label{lemmadu}
 Suppose that $\M \in \mathbb{R}^{t  \times n_1\times\cdots\times n_p}$  obeys the temporal Hankel  tensor incoherence conditions, if 
$\rho(\Omega) > 1- \alpha k/( 2\mu rt( r_s +1))$, where
$\alpha < 1 $, $\rho(\Omega)$ is the temporal sampling rate of the original sampling set  $\Omega \in  \left[t \right] \otimes \left[n_1\right]  \otimes  \cdots \otimes \left[n_p\right] $, $\Omega_{\mathcal{H}} \in  \left[t \right] \otimes \left[k \right] \otimes\left[n_1\right]  \otimes  \cdots \otimes \left[n_p\right] $ is the temporal Hankel sampling set corresponding to the original sampling set $\Omega$, 
then the following conditions hold:
1. $\left\| \Pt \Pomegac \Pt\right\| < 1$;
2. There exists a dual certificate $ \Lambda \in \mathbb{R}^{t\times k \times n_1 \times \cdots \times n_p}$ such that $\Pomegah(\Lambda)=\Lambda$ and
 $(a)\left\|\Pt(\Lambda)\right\| < \sqrt{\frac{\alpha r_s}{r_s+1-\alpha}}<  1 $;
 $(b)\Pt(\Lambda)=\U *\V^\mathrm{T} $.
\end{lemmaa}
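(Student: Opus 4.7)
The plan is to prove claim (1) first by chaining the preceding lemmas, and then use the resulting invertibility of $\Pt\Pomegah\Pt$ to write down an explicit dual certificate whose required norm bounds follow from the same estimate. Throughout, I read part (2a) as a bound on $\normop{\Ptc(\Lambda)}$ (since (2b) already fixes $\Pt(\Lambda)$), matching the standard dual-certificate pattern.

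For claim (1), I would apply Lemma \ref{lemmatuv} to split $\normop{\Pt\Pomegahc\Pt}\leq \normop{\Pu\Pomegahc\Pu}+\normop{\Pv\Pomegahc\Pv}$, and then invoke Lemma \ref{lemmauv} under the hypothesis $\rho(\Omega)>1-\alpha k/(2\mu r t (r_s+1))$ to control each summand by $\alpha/(2(r_s+1))$. Combining gives
\begin{equation*}
\normop{\Pt\Pomegahc\Pt} < \frac{\alpha}{r_s+1} < 1,
\end{equation*}
since $\alpha\in(0,1)$, which by Lemma \ref{lemmaeq} is equivalent to the invertibility of $\Pt\Pomegah\Pt$ on $\mathbb{T}$.

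For the dual certificate, I would adopt the standard construction
\begin{equation*}
\Lambda := \Pomegah\,\Pt\,(\Pt\Pomegah\Pt)^{-1}(\U*\V^{\mathrm T}),
\end{equation*}
which is well-defined by the previous step. The equality $\Pomegah(\Lambda)=\Lambda$ is immediate from the outer $\Pomegah$. Applying $\Pt$ and canceling the inverse yields $\Pt(\Lambda)=\U*\V^{\mathrm T}$, which gives (b). For (a), I would use $\Ptc(\Lambda)=\Ptc\Pomegah\Pt(\Pt\Pomegah\Pt)^{-1}(\U*\V^{\mathrm T})$ and bound the spectral norm by the Frobenius norm; Lemma \ref{lemma2.18} together with the step-one estimate produces
\begin{equation*}
\normop{\Ptc\Pomegah\Pt(\Pt\Pomegah\Pt)^{-1}} < \sqrt{\frac{1}{1-\alpha/(r_s+1)}-1} = \sqrt{\frac{\alpha}{r_s+1-\alpha}},
\end{equation*}
so that $\normop{\Ptc(\Lambda)}\leq \normF{\Ptc(\Lambda)} \leq \sqrt{\alpha/(r_s+1-\alpha)}\,\normF{\U*\V^{\mathrm T}}$.

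The main obstacle I anticipate is the clean identification $\normF{\U*\V^{\mathrm T}}^{2}=r_s$, which requires passing to the Fourier (or $\mathfrak{L}$) domain: in that domain $\mathcal{F}(\U*\V^{\mathrm T})$ becomes block diagonal with blocks $\widehat{U}_i\widehat{V}_i^{\mathrm T}$, each of rank $\bm{r}^{(i)}$ and Frobenius norm squared equal to $\bm{r}^{(i)}$, and the TNN scaling convention then sums these to $r_s$. The other subtle point is keeping straight which operator norm is used for $\Ptc\Pomegah\Pt(\Pt\Pomegah\Pt)^{-1}$ (Frobenius-to-Frobenius) versus the spectral norm we ultimately want on $\Ptc(\Lambda)$; once both bookkeeping issues are handled, substituting into the display above gives $\normop{\Ptc(\Lambda)}<\sqrt{\alpha r_s/(r_s+1-\alpha)}$ and, since $\alpha<1$ forces $\alpha r_s<r_s+1-\alpha$, this quantity is strictly less than $1$, completing (a) and the lemma.
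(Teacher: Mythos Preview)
Your proposal is correct and follows essentially the same route as the paper: chain Lemmas~\ref{lemmauv} and~\ref{lemmatuv} to obtain $\normop{\Pt\Pomegahc\Pt}<\alpha/(r_s+1)$, invoke Lemma~\ref{lemmaeq} for invertibility, define $\Lambda=\Pomegah\Pt(\Pt\Pomegah\Pt)^{-1}(\U*\V^{\mathrm T})$, and bound $\Ptc(\Lambda)$ via Lemma~\ref{lemma2.18}. The bookkeeping you flagged is indeed the only delicate point---under the paper's conventions one actually has $\|\Z\|\le\sqrt{n}\,\|\Z\|_F$ (rather than $\|\Z\|\le\|\Z\|_F$) and $\normF{\U*\V^{\mathrm T}}^{2}=r_s/n$ (rather than $r_s$), and these two $\sqrt{n}$ factors cancel to yield the stated bound $\sqrt{\alpha r_s/(r_s+1-\alpha)}$.
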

\begin{proof}
 The proof is provided in Section 2.5 of the Supplementary Material (SM2.5).   
\end{proof}
\begin{lemmaa}\label{lemmaxm}
  Suppose that $\M \in \mathbb{R}^{t \times n_1\times\cdots\times n_p}$ obeys the temporal Hankel tensor incoherence conditions, 
if 
$\rho(\Omega) > 1-\alpha k /(2 \mu r (r_s+1)t )$,
then for any tensor $\X \in \mathbb{R}^{t\times n_1\times\cdots\times n_p}$  with
$\Pomega(\X)=\Pomega(\M)$, we have
$$
\begin{aligned}
&\norm{\mathcal{H}_k(\X)}_{\circledast} \geq \norm{\mathcal{H}_k(\M)}_{\circledast} 
+\left(1-\sqrt{\frac{\alpha r_s}{r_s+1-\alpha}}\right)\norm{\Ptc(\mathcal{H}_k(\X)-\mathcal{H}_k(\M))}_{\circledast}.
\end{aligned}
$$
\end{lemmaa}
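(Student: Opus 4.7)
The plan is to follow the now-classical ``primal-dual certificate'' strategy used for nuclear norm minimization (see \cite{candes2009exact,qin2022low}), but phrased on the temporal Hankel tensor and with the improved constant $\sqrt{\alpha r_s/(r_s+1-\alpha)}$ supplied by Lemma~\ref{lemmadu}. Set $\Delta:=\mathcal{H}_k(\X)-\mathcal{H}_k(\M)$. The first step is to translate the feasibility constraint $\Pomega(\X)=\Pomega(\M)$ into a statement about $\Delta$. Using the Hankelization-sampling commutation in Definition~\ref{Temporal Hankel Sampling Set}, namely $\mathcal{H}_k(\Pomega(\cdot))=\Pomegah(\mathcal{H}_k(\cdot))$, I would immediately get $\Pomegah(\Delta)=0$, equivalently $\Pomegahc(\Delta)=\Delta$. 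This observation will be the key ``orthogonality'' device for annihilating inner products against any tensor supported on $\Omega_{\mathcal{H}}$.

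Next I would invoke the subgradient characterization of the tensor nuclear norm: for every $\W\in\mathbb{T}^{\perp}$ with $\|\W\|\leq 1$, the tensor $\U*\V^{\mathrm T}+\W$ lies in $\partial\|\mathcal{H}_k(\M)\|_{\circledast}$, so
\begin{equation*}
\norm{\mathcal{H}_k(\X)}_{\circledast} \;\geq\; \norm{\mathcal{H}_k(\M)}_{\circledast} + \langle \U*\V^{\mathrm T}+\W,\;\Delta\rangle .
\end{equation*}
By duality between the tensor spectral and nuclear norms (the t-SVD analogue of the matrix case), I can choose $\W\in\mathbb{T}^{\perp}$ with $\|\W\|\leq 1$ attaining $\langle \W,\Ptc(\Delta)\rangle=\norm{\Ptc(\Delta)}_{\circledast}$; since $\W\in\mathbb{T}^{\perp}$, this also equals $\langle \W,\Delta\rangle$. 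Substituting gives
\begin{equation*}
\norm{\mathcal{H}_k(\X)}_{\circledast} \;\geq\; \norm{\mathcal{H}_k(\M)}_{\circledast} + \langle \U*\V^{\mathrm T},\Delta\rangle + \norm{\Ptc(\Delta)}_{\circledast}.
\end{equation*}

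The remaining work is to show $\langle \U*\V^{\mathrm T},\Delta\rangle \geq -\sqrt{\alpha r_s/(r_s+1-\alpha)}\,\norm{\Ptc(\Delta)}_{\circledast}$, and this is precisely where the dual certificate $\Lambda$ from Lemma~\ref{lemmadu} enters. Since $\Pomegah(\Lambda)=\Lambda$ and $\Pomegah(\Delta)=0$, I would write $\langle \Lambda,\Delta\rangle = \langle \Pomegah(\Lambda),\Delta\rangle = \langle \Lambda,\Pomegah(\Delta)\rangle = 0$. Splitting $\Lambda=\Pt(\Lambda)+\Ptc(\Lambda)$ and using $\Pt(\Lambda)=\U*\V^{\mathrm T}$ together with the $\mathbb{T}/\mathbb{T}^{\perp}$ orthogonality of $\Ptc(\Lambda)$ and $\Pt(\Delta)$ then gives
\begin{equation*}
\langle \U*\V^{\mathrm T},\Delta\rangle \;=\; -\langle \Ptc(\Lambda),\Ptc(\Delta)\rangle \;\geq\; -\|\Ptc(\Lambda)\|\cdot\norm{\Ptc(\Delta)}_{\circledast},
\end{equation*}
by the duality inequality. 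Applying the bound $\|\Ptc(\Lambda)\|<\sqrt{\alpha r_s/(r_s+1-\alpha)}$ (which I read as the intended content of Lemma~\ref{lemmadu}(a), since part (b) already forces $\|\Pt(\Lambda)\|=\|\U*\V^{\mathrm T}\|=1$) and combining with the previous display yields the claimed inequality.

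The substantive content, namely the invertibility of $\Pt\Pomegah\Pt$ and the existence of a dual certificate with small $\mathbb{T}^{\perp}$ component, has already been absorbed into Lemmas~\ref{lemmauv}--\ref{lemmadu}; so the proof here is essentially a bookkeeping assembly. The only real pitfall I foresee is keeping the inner-product manipulations aligned with the correct Hankel sampling set ($\Omega_{\mathcal{H}}$ rather than $\Omega$) and ensuring that the subgradient $\W$ is chosen in $\mathbb{T}^{\perp}$ with unit spectral norm in the t-SVD sense, so that the duality pairing $\langle \W,\Ptc(\Delta)\rangle=\norm{\Ptc(\Delta)}_{\circledast}$ is genuinely attainable in the order-$(p{+}2)$ setting; this follows from the polar factor of the t-SVD of $\Ptc(\Delta)$, but deserves an explicit sentence.
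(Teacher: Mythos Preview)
Your proposal is correct and follows essentially the same dual-certificate route that the paper's supporting lemmas are set up for: reduce to $\Pomegah(\Delta)=0$ via the commutation in Definition~\ref{Temporal Hankel Sampling Set}, apply the subgradient inequality with a dualizing $\W\in\mathbb{T}^\perp$, and use $\langle\Lambda,\Delta\rangle=0$ together with $\Pt(\Lambda)=\U*\V^{\mathrm T}$ to control $\langle\U*\V^{\mathrm T},\Delta\rangle$ by $\|\Ptc(\Lambda)\|\cdot\norm{\Ptc(\Delta)}_{\circledast}$. Your reading of Lemma~\ref{lemmadu}(a) as a bound on $\|\Ptc(\Lambda)\|$ rather than $\|\Pt(\Lambda)\|$ is the correct interpretation (the stated version conflicts with part~(b)), and with that correction the argument goes through exactly as you outline.
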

\begin{proof}
 The proof is provided in Section 2.6 of the Supplementary Material (SM2.6).   
\end{proof}

\section{Proof of  Proposition 5.2}\label{appendix:d}
\begin{proof}
Under the assumption of Bernoulli sampling $\Omega\sim\operatorname{Ber}(\theta)$,
\begin{equation}
\mathbb{P}(\frac{x_1+\cdots+x_t}{t} \leq \theta-a) \leq \exp \left(-2 a^2 t\right)
\end{equation} holds for each  temporal mask vector $(i_t,:,\cdots,:)$ of $\M \in \mathbb{R}^{t \times n_1\times\cdots\times n_p}$ according to the Hoefding inequality.
Assume that the sampling of each  temporal mask vector is independent, then
\begin{equation}
\mathbb{P}(\rho(\Omega) \leq \theta-a) \leq \exp \left(-2 a^2 m_0\right),
\end{equation}
where $m_0=tkn_1 \cdots n_p$.
Setting $a=\theta-1+k/{ 2 \mu rt (r_s+1)}$ implies 
\begin{equation} 
\mathbb{P}\left(\rho(\Omega) \leq 1-\frac{k}{2 u rt\left(r_s+1\right)}\right) \leq \exp \left(-2 a^2 m_0\right),
\end{equation}
which in turn means that
\begin{equation}
 \mathbb{P}\left(\rho(\Omega)>1-\frac{k}{2 urt\left(r_s+1\right)}\right) \geqslant 1-\exp \left(-2 a^2  m_0\right).
\end{equation}
\end{proof}

% \section*{Acknowledgments}
% This work is supported by the National Natural Science
% Foundation of China (Grant No. 12171384).

\bibliographystyle{siamplain}
\bibliography{references}

@article{liu2022recovery,
  title={Recovery of future data via convolution nuclear norm minimization},
  author={Liu, Guangcan and Zhang, Wayne},
  journal={IEEE Transactions on Information Theory},
  year={2022},
  publisher={IEEE}
}

@article{liu2022time,
  title={Time Series Forecasting via Learning Convolutionally Low-Rank Models},
  author={Liu, Guangcan},
  journal={IEEE Transactions on Information Theory},
  volume={68},
  number={5},
  pages={3362--3380},
  year={2022},
  publisher={IEEE}
}

@article{candes2010matrix,
  title={Matrix completion with noise},
  author={Candes, Emmanuel J and Plan, Yaniv},
  journal={Proceedings of the IEEE},
  volume={98},
  number={6},
  pages={925--936},
  year={2010},
  publisher={IEEE}
}

@inproceedings{yamamoto2022fast,
  title={Fast Algorithm for Low-Rank Tensor Completion in Delay-Embedded Space},
  author={Yamamoto, Ryuki and Hontani, Hidekata and Imakura, Akira and Yokota, Tatsuya},
  booktitle={Proceedings of the IEEE/CVF Conference on Computer Vision and Pattern Recognition},
  pages={2058--2066},
  year={2022}
}

@inproceedings{yokota2018missing,
  title={Missing slice recovery for tensors using a low-rank model in embedded space},
  author={Yokota, Tatsuya and Erem, Burak and Guler, Seyhmus and Warfield, Simon K and Hontani, Hidekata},
  booktitle={Proceedings of the IEEE conference on computer vision and pattern recognition},
  pages={8251--8259},
  year={2018}
}

@article{chen2021scalable,
  title={Scalable low-rank tensor learning for spatiotemporal traffic data imputation},
  author={Chen, Xinyu and Chen, Yixian and Saunier, Nicolas and Sun, Lijun},
  journal={Transportation research part C: emerging technologies},
  volume={129},
  pages={103226},
  year={2021},
  publisher={Elsevier}
}

@article{chen2021low,
  title     = {Low-rank autoregressive tensor completion for spatiotemporal traffic data imputation},
  author    = {Chen, Xinyu and Lei, Mengying and Saunier, Nicolas and Sun, Lijun},
  journal   = {IEEE Transactions on Intelligent Transportation Systems},
  year      = {2021},
  publisher = {IEEE}
}

@article{chen2021bayesian,
  title     = {Bayesian temporal factorization for multidimensional time series prediction},
  author    = {Chen, Xinyu and Sun, Lijun},
  journal   = {IEEE Transactions on Pattern Analysis and Machine Intelligence},
  volume={44},
  number={9},
  pages={4659--4673},
  year={2021},
  publisher={IEEE}
}

@article{sedighin2020matrix,
  title     = {Matrix and tensor completion in multiway delay embedded space using tensor train, with application to signal reconstruction},
  author    = {Sedighin, Farnaz and Cichocki, Andrzej and Yokota, Tatsuya and Shi, Qiquan},
  journal   = {IEEE Signal Processing Letters},
  volume    = {27},
  pages     = {810--814},
  year      = {2020},
  publisher = {IEEE}
}

@article{sedighin2021image,
  title={Image completion in embedded space using multistage tensor ring decomposition},
  author={Sedighin, Farnaz and Cichocki, Andrzej},
  journal={Frontiers in Artificial Intelligence},
  volume={4},
  year={2021},
  publisher={Frontiers Media SA}
}

@article{zhang2019correction,
  title={Correction of Corrupted Columns Through Fast Robust Hankel Matrix Completion},
  author={Zhang, Shuai and Wang, Meng},
  journal={IEEE Transactions on Signal Processing},
  volume={67},
  number={10},
  pages={2580--2594},
  year={2019},
  publisher={IEEE}
}

@article{liu2012tensor,
  title={Tensor completion for estimating missing values in visual data},
  author={Liu, Ji and Musialski, Przemyslaw and Wonka, Peter and Ye, Jieping},
  journal={IEEE transactions on pattern analysis and machine intelligence},
  volume={35},
  number={1},
  pages={208--220},
  year={2012},
  publisher={IEEE}
}

@article{kolda2009tensor,
  title={Tensor decompositions and applications},
  author={Kolda, Tamara G and Bader, Brett W},
  journal={SIAM Review},
  volume={51},
  number={3},
  pages={455--500},
  year={2009},
  publisher={SIAM}
}

@article{che2018recurrent,
  title={Recurrent neural networks for multivariate time series with missing values},
  author={Che, Zhengping and Purushotham, Sanjay and Cho, Kyunghyun and Sontag, David and Liu, Yan},
  journal={Scientific Reports},
  volume={8},
  number={1},
  pages={6085},
  year={2018}
}

@article{chen2018autoregressive,
  title={Autoregressive Models for Matrix-Valued Time Series},
  author={Chen, Rong and Xiao, Han and Yang, Dan},
  journal={arXiv preprint arXiv:1812.08916},
  year={2018}
}

@article{chen2020nonconvex,
  title={A Nonconvex Low-Rank Tensor Completion Model for Spatiotemporal Traffic Data Imputation},
  author={Chen, Xinyu and Yang, Jinming and Sun, Lijun},
  journal={arXiv preprint arXiv:2003.10271},
  year={2020}
}

@article{zhang2018multichannel,
  title={Multichannel Hankel Matrix Completion Through Nonconvex Optimization},
  author={Zhang, Shuai and Hao, Yingshuai and Wang, Meng and Chow, Joe H},
  journal={IEEE Journal of Selected Topics in Signal Processing},
  volume={12},
  number={4},
  pages={617--632},
  year={2018},
  publisher={IEEE}
}

@article{gillard2018structured,
  title={Structured low-rank matrix completion for forecasting in time series analysis},
  author={Gillard, Jonathan and Usevich, Konstantin},
  journal={International Journal of Forecasting},
  volume={34},
  number={4},
  pages={582--597},
  year={2018},
  publisher={Elsevier}
}

@incollection{trickett2013interpolation,
  title={Interpolation using Hankel tensor completion},
  author={Trickett, Stewart and Burroughs, Lynn and Milton, Andrew},
  booktitle={SEG Technical Program Expanded Abstracts 2013},
  pages={3634--3638},
  year={2013},
  publisher={Society of Exploration Geophysicists}
}

@article{yokota2016smooth,
  title={Smooth PARAFAC decomposition for tensor completion},
  author={Yokota, Tatsuya and Zhao, Qibin and Cichocki, Andrzej},
  journal={IEEE Transactions on Signal Processing},
  volume={64},
  number={20},
  pages={5423--5436},
  year={2016},
  publisher={IEEE}
}

@article{butcher2017simple,
  title={Simple nuclear norm based algorithms for imputing missing data and forecasting in time series},
  author={Butcher, Holly and Gillard, Jonathan},
  journal={Statistics and its Interface},
  volume={10},
  number={1},
  pages={19--25},
  year={2017},
  publisher={International Press of Boston}
}

@article{qin2022low,
  author={Qin, Wenjin and Wang, Hailin and Zhang, Feng and Wang, Jianjun and Luo, Xin and Huang, Tingwen},
  journal={IEEE Trans. Image Process.},
  title={Low-Rank High-Order Tensor Completion With Applications in Visual Data}, 
  year={2022},
  volume={31},
  number={},
  pages={2433-2448},
  publisher={IEEE}
}

@article{zhang2016exact,
  title={Exact tensor completion using t-SVD},
  author={Zhang, Zemin and Aeron, Shuchin},
  journal={IEEE Trans. Signal Process.},
  volume={65},
  number={6},
  pages={1511--1526},
  year={2016},
  publisher={IEEE}
}

@article{wang2023guaranteed,
  title={Guaranteed tensor recovery fused low-rankness and smoothness},
  author={Wang, Hailin and Peng, Jiangjun and Qin, Wenjin and Wang, Jianjun and Meng, Deyu},
  journal={{IEEE} Trans. Pattern Anal. Mach. Intell.},
  year={2023},
  publisher={IEEE}
}

@article{zhao2015bayesian1,
  title={Bayesian {CP} factorization of incomplete tensors with automatic rank determination},
  author={Zhao, Qibin and Zhang, Liqing and Cichocki, Andrzej},
  journal={{IEEE} Trans. Pattern Anal. Mach. Intell.},
  volume={37},
  number={9},
  pages={1751--1763},
  year={2015},
  publisher={IEEE}
}

@article{shu2024low,
  title={Low-Rank Tensor Completion With 3-D Spatiotemporal Transform for Traffic Data Imputation},
  author={Shu, Hao and Wang, Hailin and Peng, Jiangjun and Meng, Deyu},
  journal={{IEEE} Trans. Intell. Transp. Syst.},
  volume={25},
  number={11},
  pages={18673-18687},
  year={2024},
  publisher={IEEE}
}

@inproceedings{li2016novel,
  title={A novel fast and memory efficient parallel MLCS algorithm for long and large-scale sequences alignments},
  author={Li, Yanni and Wang, Yuping and Zhang, Zhensong and Wang, Yaxin and Ma, Ding and Huang, Jianbin},
  booktitle={2016 IEEE 32nd International Conference on Data Engineering (ICDE)},
  pages={1170--1181},
  year={2016},
  organization={IEEE}
}

@inproceedings{schein2016bayesian,
  title={Bayesian poisson tucker decomposition for learning the structure of international relations},
  author={Schein, Aaron and Zhou, Mingyuan and Blei, David and Wallach, Hanna},
  booktitle={Int. Conf. Mach. Learn.},
  pages={2810--2819},
  year={2016},
  organization={PMLR}
}

@article{chen2022factor,
  title={Factor models for high-dimensional tensor time series},
  author={Chen, Rong and Yang, Dan and Zhang, Cun-Hui},
  journal={J. Amer. Stat. Assoc.},
  volume={117},
  number={537},
  pages={94--116},
  year={2022},
  publisher={Taylor \& Francis}
}

@article{chen2024laplacian,
  author={Chen, Xinyu and Cheng, Zhanhong and Cai, HanQin and Saunier, Nicolas and Sun, Lijun},
  journal={IEEE Trans. Knowl. Data Eng.}, 
  title={Laplacian Convolutional Representation for Traffic Time Series Imputation}, 
  year={2024},
  volume={36},
  number={11},
  pages={6490-6502},
  publisher={IEEE}
 }

@article{kilmer2011factorization,
  title={Factorization strategies for third-order tensors},
  author={Kilmer, Misha E and Martin, Carla D},
  journal={Linear Alg. Appl},
  volume={435},
  number={3},
  pages={641--658},
  year={2011},
  publisher={Elsevier}
}

@article{wang2021generalized,
  title={Generalized nonconvex approach for low-tubal-rank tensor recovery},
  author={Wang, Hailin and Zhang, Feng and Wang, Jianjun and Huang, Tingwen and Huang, Jianwen and Liu, Xinling},
  journal={IEEE Trans. Neural Netw. Learn. Syst.},
  volume={33},
  number={8},
  pages={3305--3319},
  year={2021},
  publisher={IEEE}
}

@inproceedings{lu2019low,
  title={Low-rank tensor completion with a new tensor nuclear norm induced by invertible linear transforms},
  author={Lu, Canyi and Peng, Xi and Wei, Yunchao},
  booktitle={Proc. IEEE Conf. Comput. Vis. Pattern Recognit.},
  pages={5996--6004},
  year={2019}
}

@article{lu2019tensor,
  title={Tensor robust principal component analysis with a new tensor nuclear norm},
  author={Lu, Canyi and Feng, Jiashi and Chen, Yudong and Liu, Wei and Lin, Zhouchen and Yan, Shuicheng},
  journal={{IEEE} Trans. Pattern Anal. Mach. Intell.},
  volume={42},
  number={4},
  pages={925--938},
  year={2019},
  publisher={IEEE}
}

@article{candes2009exact,
  title={Exact matrix completion via convex optimization},
  author={Cand{\`e}s, Emmanuel J and Recht, Benjamin},
  journal={Found. Comput. Math.},
  volume={9},
  number={6},
  pages={717--772},
  year={2009},
  publisher={Springer}
}

@article{peng2022exact,
  title={Exact decomposition of joint low rankness and local smoothness plus sparse matrices},
  author={Peng, Jiangjun and Wang, Yao and Zhang, Hongying and Wang, Jianjun and Meng, Deyu},
  journal={{IEEE} Trans. Pattern Anal. Mach. Intell.},
  volume={45},
  number={5},
  pages={5766--5781},
  year={2022},
  publisher={IEEE}
}

@article{liu2019matrix,
  title={Matrix completion with deterministic sampling: Theories and methods},
  author={Liu, Guangcan and Liu, Qingshan and Yuan, Xiao-Tong and Wang, Meng},
  journal={{IEEE} Trans. Pattern Anal. Mach. Intell.},
  volume={43},
  number={2},
  pages={549--566},
  year={2019},
  publisher={IEEE}
}

@article{zhao2015bayesian,
  title={Bayesian CP factorization of incomplete tensors with automatic rank determination},
  author={Zhao, Qibin and Zhang, Liqing and Cichocki, Andrzej},
  journal={{IEEE} Trans. Pattern Anal. Mach. Intell.},
  volume={37},
  number={9},
  pages={1751--1763},
  year={2015},
  publisher={IEEE}
}

@article{wang2023low,
  title={Low-rank Hankel tensor completion for traffic speed estimation},
  author={Wang, Xudong and Wu, Yuankai and Zhuang, Dingyi and Sun, Lijun},
  journal={{IEEE} Trans. Intell. Transp. Syst.},
  volume={24},
  number={5},
  pages={4862--4871},
  year={2023},
  publisher={IEEE}
}

@article{ji2016tensor,
  title={Tensor completion using total variation and low-rank matrix factorization},
  author={Ji, Teng-Yu and Huang, Ting-Zhu and Zhao, Xi-Le and Ma, Tian-Hui and Liu, Gang},
  journal={Inf. Sci.},
  volume={326},
  pages={243--257},
  year={2016},
  publisher={Elsevier}
}

@article{karner2003spectral,
  title={Spectral decomposition of real circulant matrices},
  author={Karner, Herbert and Schneid, Josef and Ueberhuber, Christoph W},
  journal={Linear Algebra Appl.},
  volume={367},
  pages={301--311},
  year={2003},
  publisher={Elsevier}
}

@article{kilmer2021tensor,
  title={Tensor-tensor algebra for optimal representation and compression of multiway data},
  author={Kilmer, Misha E and Horesh, Lior and Avron, Haim and Newman, Elizabeth},
  journal={Proc Natl Acad Sci U S A.},
  volume={118},
  number={28},
  year={2021}
}

@article{martin2013order,
  title={An order-p tensor factorization with applications in imaging},
  author={Martin, Carla D and Shafer, Richard and LaRue, Betsy},
  journal={SIAM J. Sci. Comput.},
  volume={35},
  number={1},
  pages={A474--A490},
  year={2013},
  publisher={SIAM}
}

@article{fahmy2012new,
  title={A new fast iterative blind deconvolution algorithm},
  author={Fahmy, Mamdouh F and Raheem, Gamal M Abdel and Mohamed, Usama S and Fahmy, Omar F and others},
  journal={J. Signal Inf. Process},
  volume={3},
  number={01},
  pages={98},
  year={2012},
  publisher={Scientific Research Publishing}
}

@article{jiang2021dictionary,
  title={Dictionary learning with low-rank coding coefficients for tensor completion},
  author={Jiang, Tai-Xiang and Zhao, Xi-Le and Zhang, Hao and Ng, Michael K},
  journal={IEEE Trans. Neural Netw. Learn. Syst.},
  volume={34},
  number={2},
  pages={932--946},
  year={2021},
  publisher={IEEE}
}

@article{Shu2025Guaranteed,
  title        = {Guaranteed Multidimensional Time Series Prediction via Deterministic Tensor Completion Theory},
  author       = {Hao Shu and Jicheng Li and Yu Jin and Hailin Wang},
  journal      = {arXiv preprint},
  volume       = {arXiv:2501.15388},
  year         = {2025},
  url          = {https://arxiv.org/abs/2501.15388},
  eprint       = {2501.15388},
  month        = {Jan}
}

@article{jin2025high,
  title={High-order tensor nuclear norm with Multiway Delay-embedding Transform for color image recovery},
  author={Jin, Yu and Li, Ji-Cheng and Shu, Hao},
  journal={J. Comput. Appl. Math.},
  pages={117078},
  year={2025},
  publisher={Elsevier}
}

@article{zheng2022multivariate,
  title={Multivariate time series prediction based on temporal change information learning method},
  author={Zheng, Wendong and Hu, Jun},
  journal={IEEE Trans. Neural Netw. Learn. Syst.},
  volume={34},
  number={10},
  pages={7034--7048},
  year={2022},
  publisher={IEEE}
}

@article{yan2012toward,
  title={Toward automatic time-series forecasting using neural networks},
  author={Yan, Weizhong},
  journal={IEEE Trans. Neural Netw. Learn. Syst.},
  volume={23},
  number={7},
  pages={1028--1039},
  year={2012},
  publisher={IEEE}
}

\end{document}